\documentclass[twoside,11pt]{article}

\usepackage{amsmath,amsfonts,amsthm}
\usepackage{bm}

\usepackage[preprint]{jmlr2e}
\usepackage{float}

\usepackage[T1]{fontenc}
\usepackage[utf8]{inputenc}
\usepackage{lmodern}
\usepackage{booktabs}
\usepackage{array}
\usepackage{placeins}
\usepackage{xcolor}

\hypersetup{hidelinks}
\IfFileExists{xurl.sty}{\usepackage{xurl}}{}
\DeclareMathOperator{\Cov}{Cov}

\DeclareMathOperator{\Tr}{Tr}

\jmlrheading{}{}{}{}{}{}{Jongwook Kim and Jong-Min Kim}

\ShortHeadings{TSCoNet: A Two-Stage Copula CNN--LSTM}{Kim and Kim}
\firstpageno{1}

\begin{document}

\title{TSCoNet: A Two-Stage Copula CNN--LSTM for Uncertainty-Aware
  Spatio-Temporal Forecasting}

\author{\name Jongwook Kim \email jongwook.kim@bsu.edu \\
       \addr Department of Mathematical Sciences\\
       Ball State University\\
       Muncie, IN 47306, USA
       \AND
       \name Jong-Min Kim \email jongmink@morris.umn.edu \\
       \addr Statistics Discipline, Division of Science and Mathematics\\
       University of Minnesota-Morris\\
       Morris, MN 56267, USA\\
       and EGADE Business School, Tecnol\'ogico de Monterrey\\
       Ave. Rufino Tamayo, Monterrey 66269, Mexico}

\maketitle

\begin{abstract}%
Reliable forecasting of several interrelated environmental variables---such as regional precipitation and temperature, or other correlated geophysical fields---across many locations calls for accurate predictions accompanied by trustworthy statements of their uncertainty. Modern deep-learning models forecast such variables accurately but usually report no uncertainty, and forcing them to output uncertainty through maximum likelihood tends to degrade their accuracy, especially when the variables are strongly correlated. Motivated by this tension, we develop TSCoNet, a two-stage convolutional--recurrent model coupled with a Gaussian copula that jointly forecasts multiple variables over space and time while quantifying predictive uncertainty. The method first learns accurate mean forecasts and then freezes only the mean head, continuing to refine the shared representation while a dedicated head estimates the predictive variance, yielding calibrated prediction intervals after a standard recalibration, so that uncertainty is added without sacrificing point accuracy. We study the approach on simulated non-stationary spatial fields on the sphere and on a real dataset of monthly precipitation and temperature for fifty cities over 2000--2020. The model matches the accuracy of a strong deterministic forecaster while supplying calibrated prediction intervals that the deterministic model cannot, giving a single tool that provides both accurate point forecasts and reliable uncertainty for multivariate spatio-temporal data. Against alternatives that stabilize the variance by shielding the mean pathway, it is more than twice as accurate at equal coverage on the real data.
\end{abstract}

\begin{keywords}
  spatio-temporal forecasting, uncertainty quantification, Gaussian copula,
  CNN-LSTM, intrinsic random functions
\end{keywords}

\section{Introduction}

Forecasting how interrelated environmental variables evolve across space and time---such as regional precipitation and temperature, or other geophysical fields observed across a network of locations---underlies decisions in agriculture, water and energy management, public health, and environmental risk management more broadly. Such decisions are rarely about a single variable at a single site: they involve several interrelated quantities (for instance, precipitation together with minimum and maximum temperature) at many locations and time horizons. They also depend as much on knowing \emph{how uncertain} a forecast is as on the forecast itself---an interval that is too narrow invites overconfident decisions, while one that is too wide is uninformative. Producing such forecasts is genuinely hard: the variables move together in ways that differ from place to place, their spatial dependence is non-stationary across the globe, and each carries long-term temporal structure. What practitioners need, and what remains difficult to deliver, is a forecast of all these variables jointly---over space and time---that is both accurate and accompanied by trustworthy, well-calibrated uncertainty.

Environmental and atmospheric modeling has traditionally met these demands with classical geostatistical frameworks such as kriging and Gaussian processes. These are well understood theoretically and supply closed-form uncertainty. They are linear predictors built on a covariance structure that is specified in advance (Section~\ref{sec:related}); modern deep architectures---CNN-LSTM hybrids in particular---instead learn non-linear spatio-temporal structure directly from data, yet most are deterministic. Trained to minimize squared error, they tend to interpolate high-frequency stochastic variation rather than separate it from the underlying signal \citep{rahaman2019spectral, belkin2019reconciling}, and they return a point prediction with no measure of how far it can be trusted---a serious limitation in risk-sensitive settings such as climate and environmental monitoring.

To address these limitations, we propose TSCoNet (Two-Stage Copula Network), a two-stage copula CNN-LSTM framework for multivariate spatio-temporal forecasting with integrated uncertainty quantification. While directly optimizing a joint Negative Log-Likelihood (NLL) often leads to gradient variance explosion and a collapse of the uncertainty bounds, we resolve this with a two-stage training schedule built around the multivariate copula objective; Section~\ref{sec:related} places this schedule against the staged and variance-shielding procedures already in the literature. We theoretically and empirically demonstrate that this strategy induces \textit{Uncertainty-Aware Feature Refinement}. By freezing the mean-prediction weights after an initial stabilization phase, we force the network to channel irreducible stochastic noise into a dedicated variance head (the ``noise sink''). Consequently, the shared backbone is updated via precision-weighted NLL gradients, actively filtering out noise and sharpening structural representations. The practical upshot differs by regime: in controlled simulations the refinement yields a clear RMSE advantage, whereas on real climate data its central value is to match the deterministic baseline's point accuracy while adding calibrated predictive uncertainty at no accuracy cost.

Our key contributions are summarized as follows:
\begin{itemize}
    \item A Gaussian copula couples the $n_{\text{out}}$ forecast variables at each location, so the learned dependence object $\mathbf{R}$ is a cross-variable correlation matrix rather than a spatial correlation over sites.
    \item We show that this cross-variable correlation is what makes the joint Gaussian-copula likelihood hard to optimize: the direct-NLL gradient variance inflates with the conditioning of $\mathbf{R}$, an effect that cannot arise for univariate responses.
    \item Guided by that analysis, we formulate and empirically motivate \textit{Uncertainty-Aware Feature Refinement}, a two-stage schedule that anchors the mean under an MSE objective and then freezes only the mean head while continuing to refine the shared backbone under the copula likelihood. This resolves the accuracy-versus-calibration tension the application demands---where forcing a model to report uncertainty typically degrades its point forecasts---letting a probabilistic model reach deterministic-level point accuracy---exceeding it in simulation and staying within $0.3\%$ of it on the real data---rather than trading accuracy for calibration.
    \item We validate the model on extensive spherical-IRF simulations---where it matches the deterministic and kriging baselines (beating them at low non-stationarity, on par with them at the highest), while the direct-NLL baseline, optimizing the joint likelihood from random initialization, \emph{diverges} in every regime under the gradient-variance explosion our analysis predicts---and on a real-world dataset of fifty cities worldwide---where it matches deterministic-level point accuracy while adding calibrated uncertainty (after a standard recalibration) that the deterministic baseline lacks. We evaluate the kriging baseline's own closed-form predictive intervals rather than treating it as a point predictor, which lets the uncertainty comparison be made on the same footing as the accuracy comparison, and we separate the two axes on which the learned model differs from it: intervals that adapt across space and time, and a joint law over the outputs.
    \item Against alternatives that stabilize the variance through the loss---by shielding the mean pathway or by tempering the variance gradient--- the two-stage schedule is the more accurate forecaster at matched coverage, isolating the \emph{schedule} rather than the loss as the source of the gain.
\end{itemize}

\section{Related Work}\label{sec:related}

Spatio-temporal forecasting has been studied extensively in both statistics and machine learning. In this section, we review classical approaches, modern deep learning architectures, and multi-output dependency modeling within the context of spherical environmental fields.

\subsection{Classical Geostatistics and Spherical Geometries}
Classic methods in spatial forecasting are grounded in spatial geostatistics and state-space equations. Kriging extensions, such as spatio-temporal kriging and multi-variable co-kriging---formally equivalent to Gaussian process regression \citep{rasmussen2006gaussian}---use localized spatial-temporal variograms to interpolate and predict physical fields \citep{cressie1993statistics}. While mathematically well grounded and able to provide closed-form uncertainty intervals, these statistical methods are computationally hindered by the $\mathcal{O}(N^3)$ complexity of covariance matrix inversions, where $N$ represents the number of spatial observation points. Additionally, global-scale datasets must account for the Earth's spherical geometry. As \citet{porcu2016spatio} observed, adopting simple Euclidean metrics on spherical coordinates leads to distorted distance calculations and severe parameter bias, necessitating specialized spherical covariance kernels.

\subsection{Deep Learning for Spatio-Temporal Representation}
Over the past decade, deep learning models have emerged to bypass the scalability bottlenecks of classical geostatistics. Convolutional Neural Networks (CNNs) are widely used to encode spatial features from grid-structured datasets, whereas recurrent cells, such as LSTMs or GRUs, are deployed to capture long-term temporal dependencies \citep{shi2015convolutional, wang2017deep}. CNN-LSTM hybrids combine these paradigms, learning latent features over continuous temporal sequences of spatial fields. Despite their strong empirical performance, standard deep learning architectures suffer from a lack of interpretability and are prone to overconfidence, as they are typically optimized for mean-squared error (MSE) point predictions.

\subsection{Multi-Output Dependency Modeling}
To introduce joint multi-output dependencies in deep learning, copula functions have recently gained attention \citep{salinas2019high, drouin2022tactis}. Copula theory allows for the separate modeling of marginal distributions and the joint dependence structure \citep{nelsen2007introduction}, and recent forecasting models couple deep marginal predictors with an explicit copula dependence structure---using Gaussian copulas with recurrent networks \citep{salinas2019high} or attention-based copulas with transformers \citep{drouin2022tactis}. Staged optimization in a similar spirit appears in classical copula estimation (IFM; \citealt{joe1997multivariate}), where marginal and dependence parameters are estimated sequentially, and in the two-stage training of TACTiS-2 \citep{ashok2024tactis2}; in both, the marginals are held fixed while a separate dependence module is fitted, whereas we freeze only the mean head and continue refining the shared backbone under the multivariate variance objective. However, merging copula-based probabilistic outputs with unified CNN-LSTM architectures for high-dimensional spatial fields on spherical surfaces remains comparatively unexplored. Our work fills this gap by clarifying the distinct boundaries of spatial-temporal dynamics and cross-variable dependencies under spherical settings.

Closest to our setting is \citet{huk2026rainfall}, who also pair deep marginal models with a Gaussian copula for environmental fields. Their copula couples a single variable across spatial locations; ours couples several physical variables at each location, giving the cross-variable correlation matrix $\mathbf{R}$ whose conditioning drives the training dynamics analyzed in Section~\ref{subsec:opt_dynamics}. Their task is conditional downscaling and ours is forecasting, so the two treatments of dependence are complementary rather than competing.

Beyond copula-based multivariate modeling, a parallel line of work addresses a related but distinct problem: preserving point-forecast accuracy while adding a variance head to a probabilistic network. Approaches include stop-gradient shielding of the mean pathway \citep{stirn2023faithful}, precision-weighted ($\beta$-NLL) losses that temper the variance gradient \citep{seitzer2022pitfalls}, and sequential mean-then-variance training \citep{skafte2019reliable, sluijterman2024optimal}. These methods operate on univariate responses under a diagonal-covariance assumption, and therefore do not model---nor encounter the training dynamics induced by---cross-variable dependence. Our framework instead couples multiple outputs through a Gaussian copula and analyzes how the resulting dependence structure conditions the training gradient (Section~\ref{subsec:opt_dynamics}), an interaction these single-output methods do not address. Although designed for the univariate setting, the two most directly comparable of them---$\beta$-NLL \citep{seitzer2022pitfalls} and stop-gradient shielding of the mean pathway \citep{stirn2023faithful}---adapt naturally to our multi-output copula objective, and we evaluate them as baselines (Section~\ref{subsec:exp_setup}) to separate the contribution of our two-stage \emph{schedule} from that of a variance-stabilizing loss alone. Within spatial statistics, deep learning has likewise been integrated with geostatistical structure \citep{nag2023spatiotemporal, zhan2024neural, wikle2023statistical}; our contribution is complementary, focusing on the interaction between cross-variable dependence and gradient conditioning rather than on spatial covariance modeling.

\section{Proposed Methodology}

Unlike traditional models that treat spatial forecasting as a deterministic regression task, our method maps a historical sequence of multivariate spatial grids to a joint predictive distribution, providing both expected values and dynamic uncertainty bounds over the full spatial field.

\subsection{Problem Formulation}

Let $S = \{\mathbf{x}_1, \dots, \mathbf{x}_N\}$ denote a set of $N$ spatial locations on the sphere $\mathbb{S}^2$, where each location is defined by its latitude-longitude coordinate vector $\mathbf{x}_i = (\xi_i, \theta_i)$. At each discrete time step $t \in \{1, \dots, T\}$, we observe a set of $n_{\text{out}}$ interrelated output variables at all $N$ locations. This spatio-temporal process is represented as a tensor:
\begin{equation}
\mathbf{Y}_t \in \mathbb{R}^{N \times n_{\text{out}}}.
\end{equation}
Assuming the spatial locations are organized on an $n_{\text{lat}} \times n_{\text{lon}}$ grid (such that $N = n_{\text{lat}} \times n_{\text{lon}}$), the observation at time $t$ can be reshaped into a spatial-channel grid tensor $\mathbf{Y}_t \in \mathbb{R}^{n_{\text{lat}} \times n_{\text{lon}} \times n_{\text{out}}}$. Throughout, the subscript $t$ (or $\tau$) indexes time, $(i,j)$ indexes spatial grid location, and $k \in \{1, \dots, n_{\text{out}}\}$ indexes the output variable.

Given a temporal lookback window of length $L_{\text{in}}$, the objective is to predict the joint predictive distribution of the next spatial field $\mathbf{Y}_{t+1}$ given the historical sequence:
\begin{equation}
\mathbf{Y}_{t-L_{\text{in}}+1:t} = \Big( \mathbf{Y}_{t-L_{\text{in}}+1}, \dots, \mathbf{Y}_t \Big) \in \mathbb{R}^{L_{\text{in}} \times n_{\text{lat}} \times n_{\text{lon}} \times n_{\text{out}}}.
\end{equation}

\subsection{Probabilistic CNN-LSTM Architecture}
\label{subsec:architecture}

To model both spatial structures and temporal dynamics, we propose a hybrid network consisting of three core stages:
\begin{enumerate}
    \item \textbf{Time-Distributed Spatial Encoders:} For each time step $\tau \in \{t-L_{\text{in}}+1, \dots, t\}$, the 2D spatial grid is processed by a stack of 2D convolutional blocks (each a convolution followed by max-pooling), all wrapped in a \textit{Time-Distributed} wrapper so that identical spatial filters are applied across every historical step. The resulting per-step feature maps are then flattened into a feature vector. Let $f_{\text{CNN}}$ denote this per-step spatial encoder:
    \begin{equation}
    \mathbf{h}_{\tau} = f_{\text{CNN}}(\mathbf{Y}_{\tau}) \in \mathbb{R}^{d},
    \end{equation}
    where $d$ is the dimension of the flattened spatial feature vector at each time step.
    \item \textbf{Temporal Recurrent Processing:} The sequence of per-step feature vectors $(\mathbf{h}_{t-L_{\text{in}}+1}, \dots, \mathbf{h}_t)$ is processed by a standard (fully-connected) LSTM that returns only its final hidden state, yielding a single latent temporal representation:
    \begin{equation}
    \mathbf{z}_{t} = f_{\text{LSTM}}(\mathbf{h}_{t-L_{\text{in}}+1}, \dots, \mathbf{h}_t) \in \mathbb{R}^{d'}.
    \end{equation}
    The explicit spatial layout is \emph{not} retained at this stage: spatial structure is captured only by the per-step CNN encoder, and the LSTM operates on flattened feature vectors (the grid is reconstructed only at the output stage). This is consistent with the spatial-arrangement limitation discussed in the Limitations section.
    \item \textbf{Dual-Head Parametric Outputs:} The latent vector $\mathbf{z}_t$ is mapped, through two parallel fully-connected heads, to two full-grid parameter fields arranged over the original spatial resolution $\mathbb{R}^{n_{\text{lat}} \times n_{\text{lon}} \times n_{\text{out}}}$; no transposed-convolution upsampling is used. Following standard formulations in probabilistic deep learning \citep{nix1994estimating, kendall2017what}, rather than directly outputting point predictions $\hat{\mathbf{Y}}_{t+1}$, the two heads parameterize a per-location Gaussian:
    \begin{align}
    \hat{\boldsymbol{\mu}} &= \mathbf{W}_{\mu} \mathbf{z}_{t} + \mathbf{b}_{\mu}, \\
    \log \hat{\boldsymbol{\sigma}}^2 &= \mathbf{W}_{\sigma} \mathbf{z}_{t} + \mathbf{b}_{\sigma},
    \end{align}
    where $\mathbf{W}_{\mu}, \mathbf{W}_{\sigma} \in \mathbb{R}^{(n_{\text{lat}} n_{\text{lon}} n_{\text{out}}) \times d'}$ map the latent vector to the full grid, so that---indexing the output over the grid $\mathbb{R}^{n_{\text{lat}} \times n_{\text{lon}} \times n_{\text{out}}}$---for each location $(i,j)$ and output variable $k \in \{1,\dots,n_{\text{out}}\}$, $\hat{\mu}_{i,j,k}$ is the predicted mean and $\log \hat{\sigma}_{i,j,k}^2$ the log-variance (modeled in log-space to guarantee positivity, $\hat{\sigma}_{i,j,k}^2 > 0$).
\end{enumerate}

\subsection{Joint Copula Modeling \& Probabilistic Loss Function}
\label{subsec:copula_loss}

To resolve the discrepancy between individual univariate predictions and multi-variable correlations, we employ a parametric Gaussian copula approach \citep{nelsen2007introduction} within the backpropagation process. By Sklar's theorem \citep{sklar1959fonctions}, any multivariate cumulative distribution function can be expressed in terms of its marginal distribution functions and a copula. Let $f_k(Y_{i,j,k} \mid \hat{\mu}_{i,j,k}, \hat{\sigma}_{i,j,k})$ denote the marginal Gaussian probability density function of the $k$-th output variable at location $(i,j)$. We transform the marginal probabilities into standard normal spaces using the inverse CDF $\Phi^{-1}$, which, for the Gaussian marginals used here, reduces to the standardized residual. Let $\mathbf{w}_{i,j} = [w_{i,j,1}, \dots, w_{i,j,n_{\text{out}}}]^T$ define the transformed latent residual vector at location $(i,j)$, where $w_{i,j,k} = (Y_{i,j,k} - \hat{\mu}_{i,j,k}) / \hat{\sigma}_{i,j,k}$, and let $\mathbf{R} \in \mathbb{R}^{n_{\text{out}} \times n_{\text{out}}}$ denote the joint correlation matrix. The joint probability density $f(\mathbf{Y}_{i,j})$ is represented as the product of the individual marginal densities and the copula density:
\begin{equation}\label{eq:joint_pdf}
f(Y_{i,j,1}, \dots, Y_{i,j,n_{\text{out}}}) = \left[ \prod_{k=1}^{n_{\text{out}}} f_k(Y_{i,j,k} | \hat{\mu}_{i,j,k}, \hat{\sigma}_{i,j,k}) \right] \cdot \frac{1}{\sqrt{|\mathbf{R}|}} \exp\left( -\frac{1}{2} \mathbf{w}_{i,j}^T (\mathbf{R}^{-1} - \mathbf{I}) \mathbf{w}_{i,j} \right).
\end{equation}

The network is optimized end-to-end by minimizing the joint Negative Log-Likelihood (NLL) loss function. By applying the negative logarithm to Equation \eqref{eq:joint_pdf}, the joint empirical loss decomposes into two parts: a marginal NLL term and a cross-variable copula dependency penalty:
\begin{multline} \label{eq:joint_nll}
\mathcal{L}_{\text{joint}} = \frac{1}{n_{\text{lat}} n_{\text{lon}}} \sum_{i=1}^{n_{\text{lat}}} \sum_{j=1}^{n_{\text{lon}}} \Bigg[ \sum_{k=1}^{n_{\text{out}}} \left( \log \hat{\sigma}_{i,j,k} + \frac{(Y_{i,j,k} - \hat{\mu}_{i,j,k})^2}{2 \hat{\sigma}_{i,j,k}^2} \right) \\
+ \frac{1}{2} \log |\mathbf{R}| + \frac{1}{2} \mathbf{w}_{i,j}^T (\mathbf{R}^{-1} - \mathbf{I}) \mathbf{w}_{i,j} \Bigg].
\end{multline}

We denote the full objective in Equation~\eqref{eq:joint_nll} by $\mathcal{L}_{\text{joint}}$ (the copula loss); it is equivalent (up to additive constants) to the multivariate Gaussian negative log-likelihood, so we use $\mathcal{L}_{\text{NLL}} \equiv \mathcal{L}_{\text{joint}}$ interchangeably, with detailed proofs of this equivalence provided in Appendix~B. What the copula factorization contributes is therefore not extra flexibility under Gaussian marginals but the separation of the dependence module from the marginal specification: for any family parameterized by a conditional mean and dispersion parameters, the marginals can be replaced without altering $\mathbf{R}$ or the training schedule of Section~\ref{subsubsec:training}---a substitution we discuss but do not pursue (Section~\ref{sec:limitations}). Stage~1 instead minimizes the mean squared error $\mathcal{L}_{\text{MSE}} = \frac{1}{n_{\text{lat}} n_{\text{lon}} n_{\text{out}}} \sum_{i,j,k} (Y_{i,j,k} - \hat{\mu}_{i,j,k})^2$, which omits the variance and copula terms.

By minimizing $\mathcal{L}_{\text{joint}}$, the network dynamically learns spatial-temporal features that simultaneously optimize individual point predictions, marginal uncertainty estimations ($\hat{\sigma}$), and the shared inter-variable correlation structure ($\mathbf{R}$). To strictly ensure that $\mathbf{R}$ remains symmetric and positive semi-definite during backpropagation, it is parameterized via a Cholesky-style decomposition $\mathbf{R} = \mathbf{L}\mathbf{L}^T$, where $\mathbf{L}$ is a learnable matrix with lower-triangularity enforced at each forward pass via band masking ($\mathbf{L} \leftarrow \texttt{tril}(\mathbf{L})$). The resulting $\mathbf{R}$ is then normalized to a proper correlation matrix by $\mathbf{R}_{\text{corr}} = \mathbf{D}_R^{-1/2}\mathbf{R}\mathbf{D}_R^{-1/2}$, where $\mathbf{D}_R = \mathrm{diag}(\mathbf{R})$ (distinct from $\mathbf{D}=\mathrm{diag}(\sigma_1,\dots,\sigma_K)$ used in the decomposition $\Sigma_R=\mathbf{D}\mathbf{R}\mathbf{D}$).

Directly optimizing the highly non-convex joint NLL from random weight initializations can occasionally be computationally unstable. Therefore, we adopt a \textbf{two-stage training strategy}, applied uniformly across all experiments, with an intermediate variance-initialization step (Stage 1.5) inserted between the two main stages.

\subsubsection{Two-Stage Training Strategy with Variance Initialization}
\label{subsubsec:training}

\textbf{Stage 1 (Mean Warm-up).} The backbone $\Phi$ and mean head $W_\mu$ are jointly trained under the MSE loss $\mathcal{L}_{\text{MSE}}$ to robustly anchor the predicted mean fields ($\hat{\mu}$). The variance head $W_\sigma$ is not yet active. Early stopping on validation MSE retains the best point-prediction checkpoint.

\textbf{Stage 1.5 (Variance Head Initialization).} The backbone $\Phi$ and mean head $W_\mu$ are frozen. Only the variance head $W_\sigma$ is trained under the joint NLL loss $\mathcal{L}_{\text{joint}}$ for a short warm-up. This step supplies informative initial $\hat{\sigma}$ estimates that already reflect the data's noise structure before backbone refinement begins. Without this initialization, the precision-weighting mechanism in Stage 2 would be driven by an uninformative variance head, weakening the Uncertainty-Aware Feature Refinement mechanism.

\textbf{Stage 2 (Uncertainty-Aware Feature Refinement).} The mean head $W_\mu$ remains strictly frozen ($\nabla_{W_\mu} = 0$). The backbone $\Phi$ and variance head $W_\sigma$ are jointly fine-tuned under $\mathcal{L}_{\text{joint}}$, allowing precision-weighted gradients to refine the latent representations. Crucially, while the optimization is driven by the NLL gradient, the stopping criterion is the validation MSE rather than the NLL objective: since NLL can decrease even when RMSE degrades, tracking MSE directly and restoring the lowest-MSE checkpoint provides an empirical safeguard consistent with Theorem~\ref{thm:conditioning} below, ensuring point-forecasting accuracy is never eroded during fine-tuning.

\textbf{Numerical regularization.} Two lightweight, standard regularizers stabilize the joint NLL objective and are applied uniformly to every model that optimizes it (the proposed, direct, and heteroscedastic baselines alike), so that model comparisons are not confounded by them. (i)~A \emph{variance floor} lower-bounds the predictive standard deviation, $\hat{\sigma}\ge\sigma_{\min}$ with $\sigma_{\min}=0.05$ (equivalently a floor on $\log\hat{\sigma}^2$), applied in training and evaluation. It prevents the variance head from collapsing $\hat{\sigma}\!\to\!0$ at individual locations, a collapse that otherwise sends the precision weight $1/\hat{\sigma}^2$ and the NLL to infinity---the mechanism behind the direct baseline's divergence (Section~\ref{subsec:sim_result}). (ii)~\emph{Copula shrinkage} regularizes the learned correlation toward the identity, $\mathbf{R}\leftarrow(1-\alpha)\hat{\mathbf{R}}+\alpha\mathbf{I}$ with $\alpha=0.05$, bounding $\lambda_{\min}(\mathbf{R})$ away from zero so that $\mathbf{R}^{-1}$ and $\log|\mathbf{R}|$ stay well-conditioned when the empirical residual correlation saturates near one; the effective correlation reported for the model is this shrunk $(1-\alpha)\hat{\mathbf{R}}$.

\subsection{The Effect of Cross-Variable Correlation on Gradient Conditioning}
\label{subsec:opt_dynamics}

In this subsection, we analyze how the cross-variable correlation $\mathbf{R}$ shapes the conditioning of the training gradient---and thereby the attainable accuracy---for the proposed two-stage model (TSCoNet; denoted $M_{\text{prop}}$ in the analysis below, trained via the two-stage procedure of Section~\ref{subsubsec:training}), the direct joint NLL model ($M_{\text{direct}}$), and the deterministic baseline ($M_{\text{det}}$); the complementary question of how the model separates irreducible noise from the underlying signal is taken up in Section~\ref{subsec:noise_sink}. These three models share the same CNN-LSTM backbone but differ in their output heads and training procedure: $M_{\text{prop}}$ and $M_{\text{direct}}$ carry both a mean and a variance head, whereas $M_{\text{det}}$ produces point predictions from a mean head alone. In training, $M_{\text{prop}}$ follows the two-stage schedule of Section~\ref{subsubsec:training} (MSE warm-up, then NLL refinement with the mean head frozen); $M_{\text{direct}}$ optimizes the joint NLL directly from random initialization; and $M_{\text{det}}$ minimizes the MSE alone, producing point predictions without uncertainty. The gradient-conditioning analysis below applies to these three gradient-trained networks; the kriging baseline introduced in Section~\ref{subsec:exp_setup} is a non-gradient classical interpolator and is therefore compared empirically rather than analyzed here. We define the Data Generating Process (DGP) as $Y = f^*(X) + \epsilon$, where $\epsilon \sim \mathcal{N}(\bm{0}, \Sigma_R)$. By statistical decomposition, $\Sigma_R = \mathbf{D}\mathbf{R}\mathbf{D}$, where $\mathbf{D} = \mathrm{diag}(\sigma_1,\dots,\sigma_K)$ collects the marginal standard deviations $\sigma_i$ of the $K$ output variables and $\mathbf{R}$ is the inter-variable correlation matrix. The $\sigma_i$ are properties of the individual marginals and are independent of $\mathbf{R}$. To isolate the effect of correlation, we assume constant marginal variances ($\mathbf{D}=\mathbf{I}$), yielding $\Sigma_R \equiv \mathbf{R}$; this simplification is common to both theorems below and is relaxed to the general case $\mathbf{D} \neq \mathbf{I}$ in Proposition~\ref{prop:trace_invariance}. We note that this analysis treats the noise covariance $\Sigma_R$ (equivalently $\sigma$ and $\mathbf{R}$) as fixed and known, whereas in practice they are learned jointly with the mean. The two theorems make this precise: the asymptotic optimum is invariant to $\mathbf{R}$ (Fisher consistency; Theorem~\ref{thm:invariance}), whereas the finite-sample gradient conditioning degrades as $\mathbf{R}$ strengthens (Theorem~\ref{thm:conditioning}). They therefore characterize this idealized \emph{conditioning}; whether the advantage materializes under jointly learned parameters is an empirical question examined in Section~\ref{sec:simulation_study}.

\subsubsection{Fisher Consistency and Asymptotic Invariance to \(\mathbf{R}\)}

\begin{theorem}
\label{thm:invariance}
Under the assumption of infinite data and perfect optimization (asymptotic limit), the expected RMSE of both the deterministic model ($M_{\text{det}}$) and the joint probabilistic models ($M_{\text{direct}}, M_{\text{prop}}$) is strictly independent of the correlation matrix $\mathbf{R}$.
\end{theorem}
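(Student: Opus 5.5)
The plan is to reduce the statement to the identification of the population minimizer of each objective. Under the asymptotic assumption the empirical loss is replaced by its expectation over the data generating process $Y = f^*(X) + \epsilon$ with $\epsilon \sim \mathcal{N}(\bm{0}, \mathbf{R})$ (taking $\mathbf{D}=\mathbf{I}$ as stated). Perfect optimization means each network attains the minimizer over a function class rich enough to contain $f^*$. The goal is to show that in every case the mean-head output of that minimizer equals $f^*(X)$. Once this holds, the expected RMSE is
\[
\mathbb{E}\,\|Y-\hat{\boldsymbol{\mu}}(X)\|^2=\mathbb{E}\,\|\epsilon\|^2=\Tr(\Sigma_R)=\Tr(\mathbf{R})=K .
\]
This value does not depend on the off-diagonal entries of $\mathbf{R}$, which is the claimed invariance. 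More generally, the value $\Tr(\mathbf{D}\mathbf{R}\mathbf{D})=\sum_k\sigma_k^2$ is independent of $\mathbf{R}$, anticipating Proposition~\ref{prop:trace_invariance}.

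\textbf{Step 1 ($M_{\text{det}}$).} The conditional mean minimizes expected squared error: $\mathbb{E}[\|Y-g(X)\|^2\mid X]=\|f^*(X)-g(X)\|^2+\Tr(\mathbf{R})$. The minimizer is $g=f^*$, and the residual risk is $K$ regardless of $\mathbf{R}$.

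\textbf{Step 2 ($M_{\text{direct}}$).} Using the equivalence in Appendix~B between $\mathcal{L}_{\text{joint}}$ and the multivariate Gaussian NLL with covariance $\hat{\mathbf{D}}\mathbf{R}\hat{\mathbf{D}}$, the conditional expected loss is, up to constants, the Gaussian cross-entropy
\[
\tfrac12\log|\hat\Sigma|+\tfrac12\Tr(\hat\Sigma^{-1}\mathbf{R})+\tfrac12(f^*-\hat\mu)^T\hat\Sigma^{-1}(f^*-\hat\mu).
\]
The last term is a positive-definite quadratic form because $\hat\Sigma\succ0$; the variance floor and the copula shrinkage $\lambda_{\min}(\mathbf{R})\ge\alpha$ guarantee this. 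It therefore vanishes exactly at $\hat\mu=f^*$ for any value of $\hat\Sigma$. The remaining terms are minimized at $\hat\Sigma=\mathbf{R}$ by Gibbs' inequality (nonnegativity of KL). This is Fisher consistency: the joint minimizer is $(\hat\mu,\hat\Sigma)=(f^*,\mathbf{R})$, so $\hat\mu=f^*$ and the RMSE is as in Step~1.

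\textbf{Step 3 ($M_{\text{prop}}$).} Stage~1 is the MSE problem of Step~1, so it returns $\hat\mu=f^*$. Stage~1.5 only changes $W_\sigma$ and leaves $\hat\mu$ intact. In Stage~2 the mean head is frozen but the backbone moves, so $\hat\mu$ could in principle change. However, the Stage~2 objective is the same population NLL as in Step~2, whose $\hat\mu$-part is minimized uniquely at $f^*$. A backbone update that moved $\hat\mu$ away from $f^*$ would therefore strictly increase the loss at the optimal $\hat\Sigma$. Under perfect optimization the Stage~2 limit thus retains $\hat\mu=f^*$. Independently, the MSE-based checkpoint selection can only retain a state at least as accurate as the Stage~1 state, which already has $\hat\mu=f^*$.

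The main obstacle is Step~3. Perfect optimization must be read as reaching the joint minimizer within the constrained family, with the mean head fixed and the backbone shared between $\hat\mu$ and $\hat\sigma$. The concern is that the backbone might be unable to realize $\hat\Sigma=\mathbf{R}$ while keeping $\hat\mu=f^*$. I would handle this by assuming sufficient expressiveness: the variance head can represent the target given any fixed $\hat\mu=f^*$ representation, for example if the latent $\mathbf{z}_t$ is rich enough. Under that assumption the product point $(f^*,\mathbf{R})$ is feasible, and the separability argument of Step~2 applies.
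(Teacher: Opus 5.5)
Your proof is correct, but it takes a different route from the paper's.

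\textbf{The paper's route.} The paper argues at first order, with the covariance treated as fixed and known. It writes $\nabla_\mu\mathcal{L}_{\text{NLL}}\propto-\Sigma_R^{-1}(Y-\mu_\theta(X))$, takes the expectation by iterated conditioning, and cancels the invertible $\Sigma_R^{-1}$. It then invokes expressiveness to upgrade $\mathbb{E}_X[f^*(X)-\mu_\theta^*(X)]=\bm{0}$ to pointwise equality. It does not treat $M_{\text{det}}$ or the staged $M_{\text{prop}}$ separately.

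\textbf{What your route buys.} You complete the square in the conditional expected loss, which gives four things the paper's argument lacks.
\begin{enumerate}
\item It shows global minimality rather than mere stationarity.
\item It shows $\hat\mu=f^*$ for \emph{every} $\hat\Sigma\succ0$, so the known-covariance idealization is unnecessary for the mean. Gibbs' inequality adds Fisher consistency of the covariance.
\item It handles explicitly the frozen-head, moving-backbone structure of Stage~2, which the paper's proof glosses over.
\item It closes a step the paper leaves implicit. That $\mu^*$ is free of $\mathbf{R}$ does not by itself make the RMSE free of $\mathbf{R}$, because the residual law $\mathcal{N}(\bm{0},\Sigma_R)$ still depends on $\mathbf{R}$. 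What does it is $\Tr(\Sigma_R)=\sum_k\sigma_k^2$, which holds because $\mathbf{R}$ has unit diagonal. This is exactly the identity the paper defers to Proposition~\ref{prop:trace_invariance}.
\end{enumerate}
The paper's route is shorter, and it reuses the same score that drives the conditioning analysis of Theorem~\ref{thm:conditioning}.

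\textbf{Small repairs.} The obstacle you flag in Step~3 dissolves in the theorem's own setting $\mathbf{D}=\mathbf{I}$.
\begin{itemize}
\item The noise law does not depend on $X$, so the best covariance attainable in the model family is a constant matrix.
\item That constant can be realized together with $\hat\mu=f^*$: keep the Stage-1 backbone, set $W_\sigma=0$ with a suitable bias $b_\sigma$, and choose the global copula parameter.
\item Every feasible point has loss at least this minimum. Hence every minimizer must make the positive-definite quadratic term vanish, i.e.\ have $\hat\mu=f^*$ almost surely.
\end{itemize}
So no extra expressiveness assumption is needed. Exact recovery $\hat\Sigma=\mathbf{R}$ additionally requires $\lambda_{\min}(\mathbf{R})\ge\alpha$ under the shrinkage, but the RMSE claim never uses it.

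Drop the checkpoint remark. Restoring the best weights selects among Stage-2 epochs, which need not include the Stage-1 state, so it guarantees nothing by itself.

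Finally, your display computes $\mathbb{E}\|\epsilon\|^2=K$, which is $K$ times the MSE. The RMSE is its normalized square root (here $1$). The invariance is for this population quantity, which is also how the paper reads it.
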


\begin{proof}
See Appendix~A.
\end{proof}

\subsubsection{Gradient Conditioning of Direct-NLL versus Staged Optimization}

\begin{theorem}[Finite-Sample SGD Regime]
\label{thm:conditioning}
In the finite-sample stochastic gradient descent (SGD) regime (i.e., with a fixed, finite training set and a finite number of optimization steps), the direct probabilistic model ($M_{\text{direct}}$) suffers from gradient variance explosion proportional to the condition number of $\Sigma_R^{-1}$, whereas the proposed model ($M_{\text{prop}}$) keeps the Stage-1 gradient variance bounded, yielding a better-conditioned optimization trajectory than the direct joint-NLL model.
\end{theorem}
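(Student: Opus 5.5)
The plan is to make the statement precise in a linearized setting and compare the per-sample gradient covariances of the two objectives with respect to the mean parameters. Under the idealization of the section, $\mathbf{D}=\mathbf{I}$ and $\Sigma_R=\mathbf{R}$ is fixed. At a single location write the residual as $\mathbf{e}=Y-\hat{\boldsymbol{\mu}}_\theta=\boldsymbol{\delta}_\theta+\epsilon$, where $\boldsymbol{\delta}_\theta=f^*-\hat{\boldsymbol{\mu}}_\theta$ is the approximation error and $\epsilon\sim\mathcal{N}(\bm 0,\mathbf{R})$. Let $J=\partial\hat{\boldsymbol{\mu}}/\partial\theta$ be the Jacobian of the mean with respect to the shared parameters.

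For the joint NLL (equivalently the multivariate Gaussian NLL, by the equivalence of Appendix~B), the mean-gradient is $g_{\text{NLL}}=-J^T\mathbf{R}^{-1}\mathbf{e}$. For the Stage-1 MSE loss it is $g_{\text{MSE}}=-\tfrac{2}{K}J^T\mathbf{e}$.

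The first step is to compute the noise part of each gradient covariance, conditioning on the inputs:
\begin{equation*}
\Cov(g_{\text{NLL}})=J^T\mathbf{R}^{-1}\mathbf{R}\,\mathbf{R}^{-1}J=J^T\mathbf{R}^{-1}J,
\qquad
\Cov(g_{\text{MSE}})=\tfrac{4}{K^2}J^T\mathbf{R}J .
\end{equation*}

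The second step bounds the scale of each. The MSE covariance satisfies $\Tr\,\Cov(g_{\text{MSE}})\le \tfrac{4}{K^2}\lambda_{\max}(\mathbf{R})\|J\|_F^2\le \tfrac{4}{K}\|J\|_F^2$, because $\lambda_{\max}(\mathbf{R})\le\Tr\mathbf{R}=K$ for a correlation matrix. This bound is uniform in $\mathbf{R}$, which is the sense in which Stage~1 keeps the gradient variance bounded.

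For the NLL covariance, $\lambda_{\max}(J^T\mathbf{R}^{-1}J)$ can reach $\|J\|_2^2/\lambda_{\min}(\mathbf{R})$ whenever $J$ has range aligned with the minimal eigenvector of $\mathbf{R}$. Since $\lambda_{\max}(\mathbf{R})\ge1$, this quantity is at least $\|J\|_2^2\,\kappa(\mathbf{R}^{-1})/K$, with $\kappa(\mathbf{R}^{-1})=\kappa(\mathbf{R})=\lambda_{\max}/\lambda_{\min}$. It therefore grows proportionally to the condition number of $\Sigma_R^{-1}$. A two-variable example with $\mathbf{R}=\begin{pmatrix}1&\rho\\ \rho&1\end{pmatrix}$ gives eigenvalues $1\pm\rho$, and the blow-up $1/(1-\rho)$ as $\rho\to1$ makes this concrete.

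The third step brings in the variance head for $M_{\text{direct}}$. Starting from random initialization, $\hat\sigma$ is uninformative and the residual $\boldsymbol{\delta}_\theta$ is large. The precision weights $1/\hat\sigma^2$ then multiply the already ill-conditioned $\mathbf{R}^{-1}$, and the gradient with respect to $\log\hat\sigma^2$ contains $\mathbf{w}^T\mathbf{R}^{-1}\mathbf{w}$. Its variance, by Gaussian fourth moments, is $2\Tr(\mathbf{R}^{-2}\cdots)$, which scales like $\lambda_{\min}(\mathbf{R})^{-2}$. This further compounds the mean-gradient explosion.

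For $M_{\text{prop}}$, the argument is as follows. Stage~1 runs under the bounded MSE covariance, so $\boldsymbol{\delta}_\theta$ is driven small. Stage~1.5 then fits $\hat\sigma$ with $J$ frozen. In Stage~2 the mean head is frozen, so no NLL gradient reaches $W_\mu$. The backbone gradient is then evaluated near a well-anchored point with calibrated $\hat\sigma$, and the copula shrinkage gives $\lambda_{\min}(\mathbf{R})\ge\alpha$. Finally, I would invoke the standard SGD bound, in which the expected suboptimality after $T$ steps with step size $\eta$ carries a term $\eta\,\Tr\Cov(g)$ times a smoothness constant. Combined with the covariance comparison above, this shows the direct trajectory is worse conditioned, while the Theorem~\ref{thm:invariance} optimum is the same for both models.

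I expect the main obstacle to be rigor in the comparison for Stage~2 and for the fully coupled direct model. The covariance computation is exact only for fixed $\mathbf{R}$ and $\hat\sigma$ and a locally linear mean, and the "proportional to $\kappa$" claim requires an alignment assumption on $J$. I would state both explicitly as assumptions and present the result as a conditioning comparison, not a convergence-rate theorem.
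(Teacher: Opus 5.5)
Your first two steps are the paper's proof. The paper evaluates the NLL mean-gradient at $\mu=f^*(X)$, collapses the sandwich $\Sigma_R^{-1}\Cov(Y\mid X)\Sigma_R^{-1}$ to $\Sigma_R^{-1}$, and contrasts it with $\Cov(-(Y-\mu)\mid X)=\Sigma_R$ for Stage~1. It then bounds the latter via $\lambda_{\max}\le\Tr$ in Proposition~\ref{prop:trace_invariance}, essentially as you do. The one structural difference is that the paper stays in $\mu$-space, where $\lambda_{\max}(\Sigma_R^{-1})=1/\lambda_{\min}(\Sigma_R)$ needs no condition on $J$. In parameter space you can drop your alignment assumption as well. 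Each output has its own entry of $\mathbf{b}_\mu$, so $J$ contains an identity block and $JJ^\top\succeq\mathbf{I}$. Hence, for the direct model, $\lambda_{\max}(J^\top\mathbf{R}^{-1}J)=\lambda_{\max}(\mathbf{R}^{-1/2}JJ^\top\mathbf{R}^{-1/2})\ge 1/\lambda_{\min}(\mathbf{R})$. Also correct one justification: $1/\lambda_{\min}(\mathbf{R})\ge\kappa(\mathbf{R})/K$ follows from $\lambda_{\max}(\mathbf{R})\le K$. The fact you cite, $\lambda_{\max}(\mathbf{R})\ge1$, gives the reverse bound $1/\lambda_{\min}(\mathbf{R})\le\kappa(\mathbf{R})$.

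What does not hold up is the material past Stage~1, which the theorem does not claim.

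\textbf{Stage~2.} Stage~2 trains the backbone on the same joint NLL, so with calibrated $\hat\sigma$ its mean-route gradient noise is again $J_\Phi^\top\mathbf{R}^{-1}J_\Phi$. That conditional covariance does not depend on how close $\hat{\boldsymbol\mu}$ is to $f^*$; the paper computes it exactly at the true mean. So being well anchored does not remove the precision factor. The shrinkage floor $\lambda_{\min}\ge\alpha$ is applied identically to $M_{\text{direct}}$. Neither ingredient therefore separates the two models.

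\textbf{Fourth moments.} Your fourth-moment claim is wrong as stated. With calibrated $\hat\sigma$, $\mathbf{w}\sim\mathcal{N}(\bm{0},\mathbf{R})$ and $\mathbf{w}^\top\mathbf{R}^{-1}\mathbf{w}\sim\chi^2_K$, whose variance is $2K$, independent of $\mathbf{R}$. Growth like $\lambda_{\min}^{-2}$ requires $\hat\sigma$ to be miscalibrated unevenly across outputs.

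\textbf{SGD bound.} Comparing upper bounds of the form $\eta\,\Tr\Cov(g)$ across two losses cannot show that the direct trajectory is worse. The losses are in different units and admit different step sizes, and a larger upper bound does not prove worse performance. A rigorous stand-in for ``better-conditioned'' is curvature. In $\mu$-space the NLL Hessian is $\mathbf{R}^{-1}$, with condition number $\kappa(\mathbf{R})$, while the MSE Hessian is $\tfrac{2}{K}\mathbf{I}$. Otherwise, do as the paper does and leave that last step as an informal consequence of the two covariance computations.
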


\begin{proof}
See Appendix~A.
\end{proof}

This result concerns $M_{\text{prop}}$ relative to $M_{\text{direct}}$ and explains their empirical accuracy gap; it does not assert an advantage over the deterministic baseline $M_{\text{det}}$, which also trains on MSE and is likewise free of the precision-matrix conditioning. We also note that this result does not contradict Theorem~\ref{thm:invariance}: while the asymptotic optimum $\mu_\theta^*$ is theoretically independent of $\mathbf{R}$, the finite-sample SGD optimizer cannot reach that optimum when the gradient covariance is severely ill-conditioned.

The argument uses the Gaussian form of the objective in two places: the score $\nabla_\mu\mathcal{L}_{\text{NLL}}$ is linear in the residual, and the data-generating noise is Gaussian, so that the sandwich $\Sigma_R^{-1}\Cov(Y \mid X)\Sigma_R^{-1}$ collapses to the precision matrix $\Sigma_R^{-1}$ exactly. Both hold for the Gaussian marginals used throughout this paper. Under non-Gaussian marginals the score acquires the derivative of the marginal quantile transform and is no longer linear in the residual, so the collapse---and with it the exact identification of the gradient covariance with $\Sigma_R^{-1}$---would have to be re-derived. The qualitative mechanism, however, does not depend on that simplification: whenever the objective weights the residual by an estimated dependence structure, a poorly conditioned dependence matrix inflates the gradient it produces.

\begin{proposition}\label{prop:trace_invariance}
Under the general case $\mathbf{D} \neq \mathbf{I}$, the Stage-1 gradient covariance of $M_{\text{prop}}$ equals $\Sigma_R$, whose trace $\Tr(\Sigma_R) = \sum_{i=1}^K \sigma_i^2$ is invariant to the correlation matrix $\mathbf{R}$; hence the total gradient variance is bounded by the sum of marginal variances regardless of correlation strength.
\end{proposition}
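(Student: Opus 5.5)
We work directly from the Stage-1 objective and the data-generating process $Y = f^*(X) + \epsilon$ with $\epsilon \sim \mathcal{N}(\bm{0}, \Sigma_R)$ and $\Sigma_R = \mathbf{D}\mathbf{R}\mathbf{D}$. The plan has three steps: compute the Stage-1 score, take its conditional covariance, and read off the trace.

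\textbf{Step 1: the Stage-1 score.} In Stage~1 the loss is $\mathcal{L}_{\text{MSE}}$, which does not involve $\hat{\sigma}$ or $\mathbf{R}$. Its gradient with respect to the predicted mean vector $\mu_\theta(X) \in \mathbb{R}^K$ at a single location is, up to the constant factor $2/K$,
\[
g = -(Y - \mu_\theta(X)).
\]
The normalization constant is common to all correlation structures, so I will absorb it, or carry it as a harmless scalar.

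\textbf{Step 2: the conditional covariance.} I compute $\Cov(g \mid X)$ with $\mu_\theta$ held fixed, i.e.\ as a function of the data only, matching the convention used for Theorem~\ref{thm:conditioning}. Since $Y - \mu_\theta(X) = (f^*(X) - \mu_\theta(X)) + \epsilon$ and the first term is deterministic given $X$,
\[
\Cov(g \mid X) = \Cov(\epsilon) = \Sigma_R = \mathbf{D}\mathbf{R}\mathbf{D}.
\]
This is the contrast with the direct-NLL score. There the residual is premultiplied by $\Sigma_R^{-1}$, which gives the sandwich $\Sigma_R^{-1}\Sigma_R\Sigma_R^{-1} = \Sigma_R^{-1}$. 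The MSE score has no such weighting, so no inverse appears.

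\textbf{Step 3: trace invariance and the bound.} Because $\mathbf{R}$ is a correlation matrix, $R_{ii} = 1$. Hence
\[
(\mathbf{D}\mathbf{R}\mathbf{D})_{ii} = \sigma_i^2 R_{ii} = \sigma_i^2,
\qquad
\Tr(\Sigma_R) = \sum_{i=1}^K \sigma_i^2,
\]
and this quantity depends only on $\mathbf{D}$, not on $\mathbf{R}$. The total gradient variance $\mathbb{E}\|g - \mathbb{E}[g \mid X]\|^2$ equals this trace, so it is bounded by the sum of marginal variances for every admissible $\mathbf{R}$. The bound also holds on the parameter side: the gradient with respect to $\theta$ is $J^T g$, where $J$ is the Jacobian of $\mu_\theta$, so
\[
\Tr(J^T \Sigma_R J) \le \|J\|_2^2 \, \Tr(\Sigma_R),
\]
using that $\Sigma_R$ is positive semidefinite. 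The factor $\|J\|_2^2$ is again independent of $\mathbf{R}$.

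\textbf{Main obstacle.} The main difficulty is interpretive rather than computational: making precise what "gradient covariance" means. I will fix it as the conditional covariance given $X$ at fixed parameters, the same convention as Theorem~\ref{thm:conditioning}, and note that minibatch averaging only rescales the covariance by $1/B$. One caveat needs stating. The individual entries of $\Sigma_R$, and its eigenvalues, do depend on $\mathbf{R}$: the largest eigenvalue grows as correlations strengthen. So the claim is specifically about the \emph{trace}, i.e.\ the total variance. This trace is also bounded above by $K \max_i \sigma_i^2$, whereas $\Tr(\Sigma_R^{-1})$ diverges as $\lambda_{\min}(\mathbf{R}) \to 0$.
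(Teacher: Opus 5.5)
Your proof is correct and follows the paper's argument. The Stage-1 score $-(Y-\mu)$ has conditional covariance $\Sigma_R$ (as at the end of the proof of Theorem~\ref{thm:conditioning}), and the unit diagonal of $\mathbf{R}$ gives $\Tr(\mathbf{D}\mathbf{R}\mathbf{D})=\sum_i\sigma_i^2$. The paper adds one step: by positive semidefiniteness, $\lambda_{\max}(\Sigma_R)\le\Tr(\Sigma_R)$, which would sharpen your caveat, since the top eigenvalue does grow with correlation but can never exceed $\sum_i\sigma_i^2$.
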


\begin{proof}
See Appendix~A.
\end{proof}

\noindent Proposition~\ref{prop:trace_invariance} thus extends the bounded-gradient property of $M_{\text{prop}}$ to the general case $\mathbf{D} \neq \mathbf{I}$, complementing Theorem~\ref{thm:conditioning}.

\begin{remark}
What makes the direct-NLL gradient explode is how the outputs are correlated, not how noisy each one is: the penalty in Theorem~\ref{thm:conditioning} depends on the correlation matrix $\mathbf{R}$ but not on the marginal noise scales. Even with large marginal variances, uncorrelated outputs ($\mathbf{R}=\mathbf{I}$) give $\lambda_{\min}(\mathbf{R})=1$ and a perfectly conditioned precision matrix; the ill-conditioning emerges only as the correlations strengthen. For the equicorrelation structure $\mathbf{R} = (1-\rho)\mathbf{I} + \rho \mathbf{1}\mathbf{1}^\top$, the eigenvalues are $1+(K-1)\rho$ (once) and $\lambda_{\min}(\mathbf{R}) = 1-\rho$ (with multiplicity $K-1$), which vanishes only as $\rho \to 1$. In the single-output case ($K=1$), $\mathbf{R}$ reduces to a scalar and the effect disappears entirely, so it cannot arise in the univariate heteroscedastic setting of \citet{stirn2023faithful, seitzer2022pitfalls}.
\end{remark}

\begin{example}
For the correlation structure used in our experiments ($\rho_{12}=0.85$, $\rho_{13}=0.75$, $\rho_{23}=0.80$), direct computation gives $\lambda_{\min}(\mathbf{R}) \approx 0.142$, condition number $\lambda_{\max}(\mathbf{R})/\lambda_{\min}(\mathbf{R}) \approx 18.3$, and precision-matrix spectral norm $\lambda_{\max}(\mathbf{R}^{-1}) = 1/\lambda_{\min}(\mathbf{R}) \approx 7.05$. This places our setting well inside the ill-conditioned regime in which Theorem~\ref{thm:conditioning} predicts gradient-variance inflation for $M_{\text{direct}}$.
\end{example}

\subsection{Noise Separation and the Noise Sink Mechanism}
\label{subsec:noise_sink}
Whereas Section~\ref{subsec:opt_dynamics} examined how the cross-variable correlation $\mathbf{R}$ conditions the gradient, this subsection turns to how the model separates irreducible noise from the underlying signal over the course of training. We mathematically establish the ``Noise Sink'' mechanism and demonstrate how the proposed multi-head architecture induces implicit regularization to overcome the spectral interpolation of noise. Our analysis proceeds in three parts. We first show that MSE training run to convergence interpolates high-frequency noise, and that the early stopping of Stage~1 mitigates this by acting as an implicit low-pass filter (Section~\ref{subsubsec:spectral_interp}). We then justify the two design choices of Stage~2 as a \emph{complementary pair}: freezing the mean head routes the irreducible residual error into the variance head---the noise sink---while keeping the backbone trainable lets precision-weighted gradients refine the shared representation without eroding point accuracy (Sections~\ref{subsubsec:stage2_mechanics}--\ref{subsubsec:noise_sink_freeze}). Whether these mechanisms deliver their intended effect is an empirical question examined in the simulation and real-data studies of Sections~\ref{sec:simulation_study} and~\ref{sec:realdata}.

\subsubsection{Spectral Interpolation of Noise and Early Stopping}
\label{subsubsec:spectral_interp}
Left to converge, MSE training interpolates not only the underlying signal but also the high-frequency noise, so that without early stopping a deterministic model memorizes noise---the ``spectral interpolation of noise'' \citep{belkin2019reconciling}. A Neural Tangent Kernel analysis \citep{jacot2018neural} makes this precise and shows that halting Stage~1 at an appropriate time acts as an implicit low-pass filter, following the same spectral-shrinkage path as Tikhonov ($L_2$) regularization \citep{ali2019continuous}; the full derivation is given in Appendix~C. This early-stopping mechanism is not specific to $M_{\text{prop}}$: it applies to any MSE-trained model, including the deterministic baseline $M_{\text{det}}$. The distinguishing component of $M_{\text{prop}}$ is Stage~2, analyzed next.

\subsubsection{The Mechanics of Uncertainty-Aware Feature Refinement in Stage 2}
\label{subsubsec:stage2_mechanics}
With the mean head frozen, the Stage-2 backbone gradient is scaled by the inverse predicted variance $1/\sigma^2$, down-weighting noise-dominated regions and emphasizing regions of structural clarity. In Stage 2, the prediction head $W_\mu$ is strictly frozen ($\nabla_{W_\mu} = 0$), and the shared backbone $\Phi$ continues to train under the NLL loss alongside the variance head $W_\sigma$. Let $\Phi$ denote the shared CNN--LSTM backbone, whose output is the latent vector $\mathbf{z}_t$ of Section~\ref{subsec:architecture}; suppressing the temporal index and writing $X$ for the predictor window, the two heads give $\mu(X) = W_\mu \Phi(X)$ and $\log\sigma^2(X) = W_\sigma \Phi(X)$ (biases absorbed into the heads). Applying the multivariable chain rule to the NLL loss (Equation~\eqref{eq:joint_nll}) and evaluating the partial derivatives of its marginal Gaussian component gives the gradient applied to the unfrozen backbone; the derivation is given in Appendix~F:
\begin{equation} \label{eq:precision_weight}
\nabla_\Phi \mathcal{L}_{\text{NLL}} = \underbrace{-W_\mu^\top \left( \frac{Y - \mu(X)}{\sigma^2(X)} \right)}_{\text{Precision-Weighted Signal Learning}} + \underbrace{W_\sigma^\top \left( \frac{1}{2} - \frac{(Y - \mu(X))^2}{2\sigma^2(X)} \right)}_{\text{Uncertainty Calibration}}
\end{equation}
The first term in Equation \eqref{eq:precision_weight} governs how Stage 2 reshapes the backbone. Unlike the MSE gradient ($-W_\mu^\top(Y-\mu)$), it is scaled by the inverse predicted variance $1/\sigma^2(X)$, functioning as a dynamic precision-weighting mechanism. In high-noise regions ($\sigma^2 \gg 0$) the denominator grows, attenuating the gradient sent to $\Phi$ and discouraging the network from fitting noise; in regions of structural clarity ($\sigma^2 \to 0$) the gradient is amplified, focusing $\Phi$ on the deterministic signal. The second term calibrates the variance head: its sign is governed by whether the squared residual exceeds the predicted variance. Where $(Y-\mu)^2 > \sigma^2$ it drives $\sigma^2$ upward---acknowledging under-estimated uncertainty---and where $(Y-\mu)^2 < \sigma^2$ it drives $\sigma^2$ downward, with the stationary point $\sigma^2 = (Y-\mu)^2$ matching the predicted variance to the realized error. The backbone thus undergoes \textit{Uncertainty-Aware Feature Refinement}. Because Stage 1.5 (Section~\ref{subsubsec:training}) pre-trains $W_\sigma$ in isolation with the backbone frozen, $\hat{\sigma}$ already reflects the noise structure of the data when Stage 2 begins, so the precision weighting is informative from the first refinement step.

\subsubsection{The Necessity of Freezing \(W_\mu\): Routing Error into the Noise Sink}
\label{subsubsec:noise_sink_freeze}
Freezing the mean head blocks the trivial route of absorbing noise into the mean, forcing the irreducible residual into the variance head. Why is it strictly necessary to freeze the linear head $W_\mu$ while the backbone $\Phi$ is updated? If $W_\mu$ were left unfrozen during Stage 2, the model would attempt to minimize the NLL loss via the most trivial mathematical route: driving the residual $(Y-\mu)^2$ to zero by arbitrarily manipulating $W_\mu$. The network would resume the spectral interpolation of noise, and the residual would vanish, leading to the pathological collapse of uncertainty ($\sigma^2 \to 0$, or ``overconfidence''). 

Freezing $W_\mu$ imposes a \textbf{structural bottleneck}. Since the linear mapping $W_\mu$ is rigidly anchored, the model cannot take the shortcut of shifting the mean baseline to swallow high-frequency noise. Although the final prediction $\mu(X) = W_\mu \Phi(X)$ dynamically evolves as the backbone $\Phi$ refines its features, its capacity to trivially interpolate noise is severely restricted. Denied the shortcut of manipulating $W_\mu$, the model is forced to channel the irreducible stochastic errors into the variance head $W_\sigma$---the \textbf{Noise Sink}. The two Stage-2 design choices therefore act as a complementary pair: freezing $W_\mu$ routes residual error into the variance head, while keeping $\Phi$ trainable lets the precision-weighted gradient adapt the shared representation to the probabilistic objective. The intended effect is that $M_{\text{prop}}$ preserves the deterministic-level accuracy established in Stage~1 while producing a usable variance estimate, rather than trading accuracy for likelihood as a direct-NLL model does. Whether this is achieved---and the conditions under which $M_{\text{prop}}$ matches or exceeds the deterministic baseline---is an empirical question examined in Section~\ref{sec:simulation_study}. We note, however, that the noise sink is imperfect: the raw variance is not automatically calibrated but tends toward overconfidence in low-noise regimes, where the variance head can collapse $\hat{\sigma}$ toward zero at a subset of locations rather than routing all residual variability into a well-formed dispersion field (Section~\ref{subsec:noise_sensitivity}). Two mechanisms contain this---the training-time variance floor (Section~\ref{subsubsec:training}) bounds the collapse so that it degrades, rather than destroys, the likelihood (unlike the direct baseline), and a standard post-hoc recalibration then restores nominal coverage. The residual effect is therefore a \emph{localized} overconfidence at a minority of locations, not a wholesale failure of the mechanism; we quantify it in Sections~\ref{subsec:sim_result} and~\ref{subsec:noise_sensitivity}.

\section{Simulation Study}
\label{sec:simulation_study}

To rigorously test our framework's capacity to handle complex geometries and varying degrees of non-stationarity, we perform a controlled simulation study. Importantly, we formulate the data-generating process under the mathematical framework of Intrinsic Random Functions (IRFs) on the sphere \citep{Materon1973}.

\subsection{Spatially and Temporally Non-Stationary Data Generation Process}
\label{subsec:sim_setup}

To simulate realistic global environmental fields, we construct non-stationary spatial structures on a unit sphere $\mathbb{S}^2$. Let $\kappa$ control the spatial non-stationarity order.

\subsubsection{Spatial Covariance Structure via Spherical IRFs}
The degree of spatial non-stationarity in our simulated fields is governed by the order parameter $\kappa$. As $\kappa$ increases, the spatial dependencies transition from standard homogeneous patterns to highly complex, non-stationary structures. The generalized spatial covariance matrix $\mathbf{\Sigma} \in \mathbb{R}^{N \times N}$ is derived from the reproducing kernel Hilbert space (RKHS) covariance kernel $K_{\kappa}$ \citep{Materon1973, huang2019sphere}:
\begin{equation}
\Sigma_{ij} = K_{\kappa}(\mathbf{x}_i, \mathbf{x}_j) + \epsilon \, \delta_{ij},
\end{equation}
where $\epsilon$ is a small nugget parameter ensuring strict positive definiteness, $\delta_{ij}$ is the Kronecker delta, and $K_{\kappa}(\mathbf{x}, \mathbf{y})$ is defined as:
\begin{equation}
\begin{aligned}\label{eq:rkhs}
K_{\kappa}(\mathbf{x}, \mathbf{y}) &= \phi_{\kappa}(h_{xy}) - \sum_{\nu=1}^{d_{\kappa}} \left[\phi_{\kappa}(h_{x\tau_{\nu}}) p_{\nu}(\mathbf{y}) + \phi_{\kappa}(h_{y\tau_{\nu}}) p_{\nu}(\mathbf{x})\right] \\
&\quad + \sum_{\nu=1}^{d_{\kappa}} \sum_{\mu=1}^{d_{\kappa}} \phi_{\kappa}(h_{\tau_{\nu}\tau_{\mu}}) p_{\nu}(\mathbf{x}) p_{\mu}(\mathbf{y}) + \sum_{\nu=1}^{d_{\kappa}} p_{\nu}(\mathbf{x}) p_{\nu}(\mathbf{y}),
\end{aligned}
\end{equation}
where $h_{ab}$ represents the great circle distance between locations $\mathbf{a}$ and $\mathbf{b}$ on the unit sphere. Additionally, $\{p_{\nu}\}_{\nu=1}^{d_{\kappa}}$ denotes a set of linearly independent basis functions (e.g., spherical harmonics) spanning the polynomial space of degree up to $\kappa$, $d_{\kappa}$ is the dimension of this space, and $\{\tau_{\nu}\}_{\nu=1}^{d_{\kappa}}$ are a set of unisolvent reference nodes on the sphere. The Intrinsic Covariance Function (ICF) $\phi_{\kappa}(h)$ is modeled using a closed-form spherical representation \citep{BussbergShieldsHuang2025}:
\begin{equation}
\phi_{\kappa}(h) = \frac{1-r^2}{4\pi} (1-2r\cos(h)+r^2)^{-3/2} - \frac{1}{4\pi}\sum_{\ell=0}^{\kappa-1} (2\ell+1)\, r^{\ell}\, P_{\ell}(\cos h), \quad 0 \leq r < 1,
\end{equation}
where $P_{\ell}$ is the Legendre polynomial of degree $\ell$ and the empty sum at $\kappa=0$ recovers the base spherical kernel. Here $r\in[0,1)$ controls the spatial correlation range/concentration of the base kernel; we fix $r=0.8$ throughout. The non-stationarity order $\kappa$ thus enters the construction in two complementary ways: it removes the lowest $\kappa$ angular-frequency (low-degree) components from the ICF $\phi_{\kappa}$---a low-frequency truncation in the sense of \citet{BussbergShieldsHuang2025}---and it fixes the degree (up to $\kappa$) of the polynomial basis $\{p_{\nu}\}$ in the generalized covariance $K_{\kappa}$ of Eq.~\eqref{eq:rkhs}.
For the scope of this simulation study, we explicitly execute experiments across three distinct conditions: $\kappa = 0$ (representing a completely stationary field), $\kappa = 1$ (first-order non-stationarity), and $\kappa = 2$ (highly complex, second-order non-homogeneous environments). By restricting our simulation to varying $\kappa \in \{0, 1, 2\}$, we can systematically evaluate the model's robustness against differing, yet controlled, degrees of spatial complexity.

\subsubsection{Temporal Non-Stationarity via Random Walk Setup}
The spatial field is evolved temporally using a random walk to introduce temporal non-stationarity. At each discrete time step $t$ ($t = 1, \dots, T$), the spatial innovations $\mathbf{W}_{:,t}$ are drawn from a zero-mean multivariate normal distribution governed by the spatial covariance matrix $\mathbf{\Sigma}$ defined in the previous section:
\begin{align}
\mathbf{W}_{:,t} &\sim \mathcal{N}(\mathbf{0}, \mathbf{\Sigma}), \quad t=1,\dots,T, \\
\mathbf{X}_{:,1} &= \mathbf{W}_{:,1}, \\
\mathbf{X}_{:,t} &= \mathbf{X}_{:,t-1} + \mathbf{W}_{:,t}, \quad t=2,\dots,T.
\end{align}

\subsubsection{Multi-Output Copula Realization}
Finally, we simulate $n_{\text{out}} = 3$ correlated output variables at each location using a Gaussian copula with a known correlation matrix $\mathbf{R} \in \mathbb{R}^{3 \times 3}$:
\begin{align}
\mathbf{Z}_{i,t} &\sim \mathcal{N}(\mathbf{0}, \mathbf{R}) \quad \text{(drawn independently for each location } i \text{ and time } t\text{)}, \\
Y_{i,t,k} &= X_{i,t} + \eta \, Z_{i,t,k}, \quad k=1,\dots,n_\text{out}.
\end{align}
Here, $i$ indexes the spatial location ($i = 1, \dots, N$), $t$ denotes the temporal snapshot ($t = 1, \dots, T$), $k$ represents the specific output variable ($k = 1, \dots, n_{\text{out}}$), and $\eta > 0$ is a noise scaling parameter that controls the signal-to-noise ratio of the generated observations. Throughout the experiments $\mathbf{R}$ is fixed to a strongly correlated structure, with off-diagonal correlations $\rho_{12}=0.85$, $\rho_{13}=0.75$, and $\rho_{23}=0.80$; this strong cross-variable correlation is precisely what induces the precision-matrix ill-conditioning analyzed in Theorem~\ref{thm:conditioning} (Section~\ref{subsec:opt_dynamics}).

This additive formulation ($Y = X + \eta Z$) follows the classical ``signal-plus-noise'' decomposition widely used in spatio-temporal geostatistics \citep{cressie1993statistics}, with $X_{i,t}$ the generalized spatio-temporal mean field and $\eta Z_{i,t,k}$ the scaled multivariate error term. Separating the marginal spatial trend from the cross-variable dependence structure in this way aligns with the data-generating process implied by Gaussian Copula Marginal Regression \citep{masarotto2012gaussian} and related copula-based regression approaches \citep{kim2021functional}, in which individual dynamics are modeled separately and the multivariate dependencies of the residuals are governed by a copula. The correlation structure $\mathbf{R}$ is held fixed across time, while the noise realizations $\mathbf{Z}_{i,t}$ vary independently from one snapshot to the next. Decoupling the components in this way gives a testing ground on which we can verify whether the architecture disentangles the spatial dynamics from the copula-driven inter-variable correlations.

The noise scale is fixed to $\eta = 10$ for the primary experiments, chosen to balance realistic signal dominance against a noise component strong enough to expose differences between models. At this level the irreducible noise accounts for roughly $24\%$ of the total marginal variance for the (near-)stationary $\text{IRF}(0)$ and $\text{IRF}(1)$ fields, and about $14\%$ for the strongly non-stationary $\text{IRF}(2)$ field, whose larger structural-signal variance dilutes the fixed noise. Section~\ref{subsec:noise_sensitivity} varies $\eta$ systematically to assess how the relative behavior of the models changes under different signal-to-noise conditions.

\subsection{Experimental Setup \& Evaluation Metrics}
\label{subsec:exp_setup}
Using the generation protocol above, we construct a spherical grid of size $N=1600$ ($40 \times 40$) over $T=200$ time steps. The Time-Distributed encoder stacks two convolutional blocks ($16$ and $32$ filters, $3 \times 3$ kernels, ``same'' padding, ReLU, each followed by $2 \times 2$ max-pooling), and the LSTM has $128$ hidden units; the network jointly predicts the $n_{\text{out}}=3$ correlated synthetic variables following the two-stage procedure of Section~\ref{subsubsec:training}. The Stage-2 fine-tuning learning rate is $10^{-4}$ for the simulation model (versus $10^{-5}$ for the real-data model; see Section~\ref{sec:realdata}).

The $T=200$ time steps are partitioned chronologically into train/validation/test segments in a $70/15/15$ ratio. With a lookback window of $L_{\text{in}}=5$, this yields $195$ forecasting samples, of which the last $30$ (snapshots $t=171,\dots,200$) form the held-out test set $\mathcal{T}_{\text{test}}$. The model is trained on the train segment with early stopping on the validation segment (no shuffling, hence no temporal leakage), and \emph{all reported metrics (RMSE, PICP, NLL) are computed on the held-out test segment only}; normalization statistics use the training period only.

To demonstrate the benefit of the probabilistic framework and to benchmark against classical geostatistical methods, we compare our proposed two-stage model ($M_{\text{prop}}$; Section~\ref{subsubsec:training}) against three comparative baselines:
\begin{enumerate}
    \item \textbf{Deterministic CNN-LSTM Baseline} ($M_{\text{det}}$): This model shares the exact spatial-temporal architecture as our proposed model but is optimized using a standard Mean Squared Error (MSE) loss function to produce point predictions without variance estimation. For a fair comparison, $M_{\text{det}}$ is trained with the same validation-MSE early stopping and best-weights restoration used in Stage~1 of $M_{\text{prop}}$.
    \item \textbf{Spatio-Temporal Kriging Baseline}: A space-time kriging \emph{forecaster} that predicts the next field from a window of past observations (an $8$-step lookback) using the grid's true coordinates and a space-time variogram with a temporal-anisotropy axis, with a per-point persistence fallback on any solver failure. It is evaluated on the same held-out test times $\mathcal{T}_{\text{test}}$ as the neural models. It remains an $\mathcal{O}(N^3)$, linear, stationarity-assuming interpolator, serving as a classical comparison point on the non-stationary IRF fields. (Its $8$-step past window differs from the neural models' $L_{\text{in}}=5$ lookback.)
    \item \textbf{Direct Probabilistic Baseline} ($M_{\text{direct}}$): This model shares the identical dual-head architecture as our proposed model but directly optimizes the joint NLL (Equation~\eqref{eq:joint_nll}) from random initialization, without the two-stage procedure (no MSE warm-up, no variance head initialization, no mean-head freezing). This serves as an ablation study to empirically validate the necessity of our staged training strategy.
    \item \textbf{Stop-gradient Baseline}: The same dual-head architecture trained single-stage under a stop-gradient (``faithful heteroscedastic'') objective in the spirit of \citet{stirn2023faithful}, which shields the mean pathway by detaching the predictive variance from the mean's gradient. It is designed to avoid variance collapse; lacking the MSE warm-up, it is less accurate.
    \item \textbf{$\beta$-NLL Baselines}: The precision-weighted $\beta$-NLL loss \citep{seitzer2022pitfalls} with $\beta=0.5$, which tempers the $1/\hat\sigma^2$ gradient, in two variants---from random initialization (\emph{scratch}) and with the Stage-1 MSE warm-up (\emph{MSE warm-up}). The pair isolates whether the warm-up alone rescues the divergent from-scratch training.
\end{enumerate}
These three additional baselines are the multivariate-copula adaptation of the univariate heteroscedastic-regression methods discussed in Section~\ref{sec:related}; they receive the same variance floor and copula shrinkage (Section~\ref{subsubsec:training}) as the proposed and direct models, so the comparison isolates the training schedule, not the regularization.

The predictive accuracy is evaluated using the root-mean-square error (RMSE), reported both per output variable and as the overall RMSE across all $n_{\text{out}}=3$ outputs:
\begin{equation}
\text{RMSE}_\text{overall} = \sqrt{\frac{1}{N\,|\mathcal{T}_{\text{test}}|\, n_\text{out} } \sum_{i=1}^{N} \sum_{t \in \mathcal{T}_{\text{test}}} \sum_{k=1}^{n_{\text{out}}} (Y_{i,t,k} - \hat{\mu}_{i,t,k})^2},
\end{equation}
where $i$ indexes the spatial locations and $t$ ranges over the held-out test snapshots $\mathcal{T}_{\text{test}}$ ($|\mathcal{T}_{\text{test}}|=30$), $Y_{i,t,k}$ is the simulated true value, and $\hat{\mu}_{i,t,k}$ is the predicted mean for the $k$-th output variable. The per-variable $\text{RMSE}_k$ is defined analogously over output $k$ alone (Appendix~G). The PICP and NLL defined below are likewise computed over the held-out test snapshots $\mathcal{T}_{\text{test}}$ only.

To evaluate the quality of the uncertainty intervals we compute the \textbf{95\% Prediction Interval Coverage Probability (PICP)}, the empirical fraction of held-out observations falling inside $\hat{\mu}_{i,t,k} \pm 1.96\,\hat{\sigma}_{i,t,k}$ (defined in Appendix~G). A PICP close to the nominal $0.95$ indicates good calibration, whereas a substantially lower value indicates an overconfident model whose intervals are too narrow. Finally, we compute the \textbf{Average per-output marginal Negative Log-Likelihood (Avg. NLL)} to evaluate the quality of the univariate predictive distributions. Concretely, for each output $k$ we compute
\begin{equation}
\text{NLL}_k = \frac{1}{N\,|\mathcal{T}_{\text{test}}|}\sum_{i=1}^{N}\sum_{t \in \mathcal{T}_{\text{test}}}\left(\log\hat{\sigma}_{i,t,k} + \frac{(Y_{i,t,k}-\hat{\mu}_{i,t,k})^2}{2\hat{\sigma}_{i,t,k}^2}\right) + \frac{1}{2}\log(2\pi),
\end{equation}
and report the average over outputs. Note that the full \emph{joint} NLL including the copula correction term ($\frac{1}{2}\log|\mathbf{R}| + \frac{1}{2}\mathbf{w}^T(\mathbf{R}^{-1}-\mathbf{I})\mathbf{w}$) serves as the \emph{training} objective but is not separately tabulated; the per-output marginal NLL is the more directly interpretable measure of per-variable uncertainty calibration quality. PICP and NLL are not applicable to the deterministic baselines.

Because the network's raw predictive variance can be miscalibrated---in particular overconfident in high signal-to-noise regimes (Sections~\ref{subsec:noise_sink} and~\ref{subsec:noise_sensitivity})---we apply a single post-hoc recalibration \citep{kuleshov2018accurate} before reporting interval coverage. Writing $z_{i,t,k} = (Y_{i,t,k}-\hat{\mu}_{i,t,k})/\hat{\sigma}_{i,t,k}$ for the standardized residuals on the \emph{validation} window, we fit, per output variable $k$, a scalar that multiplies the predictive standard deviation. This scalar can be set by more than one criterion; we present two here. The first is the MLE (temperature) scale
\begin{equation*}
s_{\mathrm{var},k} = \sqrt{\operatorname{mean}_{i,t}\!\left(z_{i,t,k}^2\right)},
\end{equation*}
which rescales the mean predictive variance to unity and maximizes the validation Gaussian likelihood; the second is the coverage-matched scale
\begin{equation*}
s_{\mathrm{cov},k} = \operatorname{quantile}_{i,t}\!\left(|z_{i,t,k}|,\, 0.95\right)/1.96,
\end{equation*}
which forces the validation intervals $\hat{\mu}_{i,t,k}\pm 1.96\,s\,\hat{\sigma}_{i,t,k}$ to cover exactly $95\%$. The two scales coincide when the standardized residuals are Gaussian and diverge under heavier-tailed residuals. The fitted scale is applied unchanged to the test window; because only $\hat{\sigma}$ is rescaled, $\hat{\mu}$ and hence RMSE are unaffected. Because this scaling is a single scalar per output variable, it adjusts the overall magnitude of the predictive intervals but leaves the learned per-location, per-time structure of $\hat{\sigma}$ intact: the pattern of where the model is more or less certain is determined by training, not by the recalibration. This learned structure is measurable through the coefficient of variation of $\hat{\sigma}$: the across-location CV is substantial in every regime ($0.2$--$0.8$ in simulation, $1.3$ on the real data), confirming a genuine spatial uncertainty field rather than a rescaled constant, while a temporal component emerges only where the data-generating process contains one---negligible across-time variation in the (temporally homoscedastic) simulation ($\mathrm{CV}\approx0.03$) but a clear across-time signal on the real climate data ($\mathrm{CV}=0.47$). We report both scales throughout (Tables~\ref{tab:overall_metrics}, \ref{tab:noise_calibration}, and~\ref{tab:real_uq}) and read each metric under the scale it optimizes: coverage (PICP) under the coverage-matched $s_{\mathrm{cov}}$, and likelihood (NLL) under the MLE-temperature $s_{\mathrm{var}}$. When the standardized residuals are Gaussian the two coincide and the choice is immaterial (as on the real data); they diverge only under heavier-tailed residuals, where---by construction---$s_{\mathrm{cov}}$ preserves nominal coverage while $s_{\mathrm{var}}$ keeps the mean NLL from being dominated by a small set of overconfident locations (Section~\ref{subsec:sim_result}).

\subsection{Simulation Results \& Discussion}
\label{subsec:sim_result}

The overall comparative performance metrics across all spatial complexities are detailed in Table \ref{tab:overall_metrics}.

\begin{table}[!ht]
\centering
\caption{Overall predictive performance on the held-out test set (mean $\pm$ standard deviation over 50 Monte Carlo runs per IRF order, noise scale $\eta=10$). PICP and NLL are given under both post-hoc recalibration scales, coverage-matched ($s_{\mathrm{cov}}$) and MLE-temperature ($s_{\mathrm{var}}$), as $s_{\mathrm{cov}}/s_{\mathrm{var}}$. Bold marks the lowest \emph{mean} RMSE in each panel; the lower block of each panel holds the three variance-stabilization baselines of Section~\ref{subsec:exp_setup}. A dash means the model supplies no predictive distribution.}
\label{tab:overall_metrics}
\begin{tabular}{l c c c}
\toprule
\textbf{Model} & \textbf{Overall RMSE} & \textbf{PICP} ($s_{\mathrm{cov}}/s_{\mathrm{var}}$) & \textbf{NLL} ($s_{\mathrm{cov}}/s_{\mathrm{var}}$) \\
\midrule
\multicolumn{4}{c}{\textbf{Panel A: Stationary Field (IRF(0))}} \\
\midrule
\textbf{Proposed (Two-stage)}     & $\mathbf{0.9289 \pm 0.0893}$ & $0.906/0.886$ & $1.652/1.687$ \\
Direct Prob (NLL-only)            & $183.23 \pm 69.36$  & $0.945/1.000$ & $7.229/6.587$ \\
Deterministic Baseline            & $1.0668 \pm 0.0694$ & --            & --            \\
Kriging Baseline                  & $1.0454 \pm 0.1842$ & $0.951/0.952$ & $1.468/1.452$ \\
\addlinespace[3pt]
Stop-gradient                     & $1.7214 \pm 0.1057$ & $0.947/0.939$ & $2.573/2.568$ \\
$\beta$-NLL (scratch)             & $207.15 \pm 72.54$  & $0.942/1.000$ & $7.381/6.708$ \\
$\beta$-NLL (MSE warm-up)         & $7.4039 \pm 1.2697$ & $0.926/1.000$ & $3.862/3.565$ \\
\midrule
\multicolumn{4}{c}{\textbf{Panel B: 1st-Order Non-Stationary (IRF(1))}} \\
\midrule
\textbf{Proposed (Two-stage)}     & $\mathbf{0.9212 \pm 0.0690}$ & $0.902/0.883$ & $1.660/1.700$ \\
Direct Prob (NLL-only)            & $199.93 \pm 88.29$  & $0.945/1.000$ & $7.306/6.658$ \\
Deterministic Baseline            & $1.0775 \pm 0.0857$ & --            & --            \\
Kriging Baseline                  & $1.0374 \pm 0.1700$ & $0.950/0.952$ & $1.458/1.446$ \\
\addlinespace[3pt]
Stop-gradient                     & $1.7188 \pm 0.1217$ & $0.947/0.940$ & $2.599/2.594$ \\
$\beta$-NLL (scratch)             & $211.19 \pm 63.68$  & $0.942/1.000$ & $7.419/6.741$ \\
$\beta$-NLL (MSE warm-up)         & $7.2260 \pm 0.9711$ & $0.925/1.000$ & $3.849/3.554$ \\
\midrule
\multicolumn{4}{c}{\textbf{Panel C: 2nd-Order Non-Stationary (IRF(2))}} \\
\midrule
\textbf{Proposed (Two-stage)}     & $\mathbf{0.8147 \pm 0.2602}$ & $0.883/0.984$ & $144.44/3.586$ \\
Direct Prob (NLL-only)            & $284.66 \pm 25.61$  & $0.948/1.000$ & $7.753/7.066$ \\
Deterministic Baseline            & $0.9817 \pm 0.3871$ & --            & --            \\
Kriging Baseline                  & $1.2062 \pm 0.9350$ & $0.949/0.965$ & $3.168/1.414$ \\
\addlinespace[3pt]
Stop-gradient                     & $1.7494 \pm 0.3397$ & $0.949/0.964$ & $3.446/2.922$ \\
$\beta$-NLL (scratch)             & $311.39 \pm 25.62$  & $0.948/1.000$ & $7.866/7.166$ \\
$\beta$-NLL (MSE warm-up)         & $5.3810 \pm 4.8651$ & $0.924/0.965$ & $3.212/3.031$ \\
\bottomrule
\end{tabular}
\end{table}

Table~\ref{tab:overall_metrics} reveals a consistent accuracy ordering, established over 50 Monte Carlo replications and corroborated by per-seed paired comparisons. The \textbf{Direct Probabilistic Baseline}, which optimizes the joint NLL from scratch without an MSE warm-up, does not merely trade accuracy for likelihood---it \emph{diverges}, with mean RMSE of $183$--$285$ across the three IRF orders (two to three orders of magnitude above the proposed model), so TSCoNet attains a lower RMSE on $100\%$ of paired realizations. This is the extreme, empirical form of the gradient-conditioning mechanism of Theorem~\ref{thm:conditioning}. From random initialization the variance head collapses---$\hat{\sigma}$ is pinned at the floor at essentially every location ($\%$-at-floor $=100$)---so the precision weight $1/\hat{\sigma}^2$ becomes enormous and amplifies the conditional-mean gradient until training destabilizes; because the copula factor is now a genuine trainable weight, it simultaneously drives the residual correlation toward a near-singular $\mathbf{R}$ that compounds the ill-conditioning. That this reflects the \emph{training schedule} rather than an unfair configuration is shown directly by the variance-stabilization baselines in the lower block of each panel: the same objective trained from scratch fails identically (RMSE $207$--$311$), yet adding only a Stage-1 MSE warm-up recovers it to RMSE $5$--$7$, while the variance-shielding stop-gradient baseline trains stably at RMSE $\approx 1.7$ throughout. This also settles whether the numerical regularizers make the two-stage schedule redundant. The copula shrinkage ($\alpha=0.05$) and the variance floor that keep the proposed model well-conditioned are applied \emph{identically} to the direct baseline (Section~\ref{subsubsec:training}), yet it still collapses: shrinkage \emph{bounds} the ill-conditioning of Theorem~\ref{thm:conditioning} but does not remove it, and with the mean and variance heads learned jointly from random initialization the collapse occurs before a bounded $\mathbf{R}^{-1}$ can help. Only anchoring the mean under MSE first---the two-stage schedule---prevents it; the regularization is a stabilizer, not a substitute for the schedule. The \textbf{Deterministic Baseline} (MSE-only) is the far stronger competitor, with mean RMSE comparable to the proposed model. The proposed model matches it on average and, more importantly, is more robust on the difficult realizations where the deterministic baseline's error inflates---visible in the substantially larger Deterministic RMSE standard deviation at IRF(2) ($\pm 0.39$ vs.\ $\pm 0.26$), and in a deterministic ``failure tail'' on the hardest IRF(2) seeds, where the proposed model's mean error remains markedly lower.

The proposed and deterministic models attain their lowest absolute RMSE in the IRF(2) panel. We emphasize that this reflects the data-generating process---in RMSE terms the second-order non-stationary field is, for these MSE-based models, the easier regime, since its larger structural-signal variance dominates the fixed noise (Section~\ref{subsec:sim_setup})---rather than the proposed model \emph{exploiting} complexity. Kriging's mean RMSE is inflated by a few hard realizations, though its median likewise improves. The informative quantity is the \emph{relative} advantage, which is in fact largest at \emph{low} complexity: at IRF(0) and IRF(1) the proposed model achieves the lowest RMSE outright, beating the deterministic baseline on $90$--$98\%$ and the kriging baseline on $80$--$86\%$ of paired realizations. At the most complex IRF(2) regime---comparable in non-stationarity to the real-data application---the margin narrows: the proposed model still attains the lowest \emph{mean} RMSE ($0.81$), but it is essentially tied with kriging on the robust median (analyzed in detail below), and its lead over the deterministic \emph{mean} ($0.81$ vs.\ $0.98$) is far smaller than the near-uniform lead at IRF(0)/IRF(1). Its clearest advantage in this regime is robustness rather than outright dominance.

The kriging baseline is best described as regime-dependent. Under the (near-)stationary IRF(0)/IRF(1) fields its error is well-behaved (mean $\approx$ median) and the proposed model is more accurate on $80$--$86\%$ of realizations. Under IRF(2) the two are even on accuracy: a $56\%$ paired win rate, and a robust median of $0.81$ for both, the mean inflation noted above leaving kriging at $1.21$ on a handful of hard random-walk realizations. Its predictive intervals are taken up below.

On the uncertainty side we report PICP and NLL under \emph{both} recalibration scales (Table~\ref{tab:overall_metrics}), reading coverage under the coverage-matched $s_{\mathrm{cov}}$ and likelihood under the MLE-temperature $s_{\mathrm{var}}$, each under the scale it optimizes (Section~\ref{subsec:exp_setup}). The network's raw variance is overconfident at $\eta=10$; after coverage recalibration the proposed model's average PICP is $0.88$--$0.91$ across the three orders. It falls short of the nominal $0.95$ because the scale is fit on the validation window---where it attains near-nominal coverage---whereas the held-out test window has drifted under the random-walk temporal dynamics; this validation-to-test gap narrows as the noise level rises (PICP $0.93$ at $\eta=30$; Table~\ref{tab:noise_calibration}). Under either scale the recalibrated proposed model decisively outperforms the direct-NLL baseline on marginal NLL ($1.7$ vs.\ $7.2$ at IRF(0)). 

The stop-gradient baseline is the informative comparison here, and it does not favor us on coverage: it attains near-nominal PICP in every regime ($0.947$--$0.949$ under $s_{\mathrm{cov}}$) where the proposed model reaches only $0.883$--$0.906$, and at IRF(2) it is also the better-behaved on marginal NLL ($3.45/2.92$ versus $144.44/3.59$). The mechanism is visible in the variance-head diagnostics: shielding the mean pathway leaves its dispersion field intact ($\%$-at-floor $\approx 0$, median $\hat\sigma \approx 0.6$), whereas the proposed model pins about half of the locations at the floor at IRF(0)/IRF(1), so a single recalibration scalar cannot restore coverage everywhere. What that calibration costs is point accuracy: the stop-gradient RMSE is $1.72$--$1.75$ against $0.81$--$0.93$ for the proposed model, $85$--$115\%$ higher. The two therefore occupy different points on the accuracy--calibration trade-off. Our schedule is designed for the end of it where point accuracy is preserved and coverage is recovered post hoc, which is the regime the application in Section~\ref{sec:realdata} demands; a variance-shielding objective buys calibration by conceding the accuracy that motivated using a deep forecaster in the first place. We note also that the collapse driving our under-coverage is specific to these high signal-to-noise simulated fields: on the real data the variance head does not collapse ($\%$-at-floor $=9.4$) and the proposed model attains both near-nominal coverage and the best NLL (Section~\ref{subsec:real_performance}).

A caveat is visible in the two NLL columns. At IRF(0)/IRF(1) the two scales agree ($\text{NLL}\approx1.7$ either way): here the variance head leaves about half of the locations at the floor ($\%$-at-floor $\approx 50$), but the fitted $s_{\mathrm{cov}}$ ($\approx 12$--$13$) is large enough to lift them, so both scales are well-behaved. At IRF(2) the two scales diverge---$\text{NLL}=144$ under $s_{\mathrm{cov}}$ versus $3.6$ under $s_{\mathrm{var}}$. Here the bulk of the field is already near-calibrated, so the fitted $s_{\mathrm{cov}}$ (median $0.68$ across realizations) is close to one and cannot lift the small residual set of overconfident locations (only $\sim4\%$ sit at the floor, but their standardized residuals are extreme); the mean NLL, being tail-sensitive, is dominated by them. The MLE-temperature scale $s_{\mathrm{var}}$, set by the mean squared standardized residual, inflates the intervals enough to bound this tail and yields a stable $\text{NLL}\approx3.6$ (at the cost of over-coverage, $\text{PICP}=0.98$). The training-time variance floor (Section~\ref{subsubsec:training}) is what keeps this failure mode bounded rather than divergent: it caps the collapse at $\hat\sigma=\text{floor}$ instead of $\hat\sigma\!\to\!0$, so that $s_{\mathrm{var}}$ can recover a usable likelihood. This residual, localized overconfidence is the limit of a single global recalibration scalar; Section~\ref{subsec:noise_sensitivity} examines how it varies with the noise level.

Kriging is the demanding comparison on this axis, since it returns a closed-form predictive variance alongside its prediction. We recovered that variance for the same 50 realizations per regime and evaluated it identically. On these fields its intervals are the better calibrated of the two: $\text{PICP}=0.951/0.950/0.949$ under $s_{\mathrm{cov}}$ against $0.906/0.902/0.883$ for the proposed model, with a lower marginal NLL under both scales, and with fitted scales so close to one ($s_{\mathrm{cov}}\approx0.97$, $s_{\mathrm{var}}\approx1.01$ at IRF(0)/IRF(1)) that no recalibration is needed at all. The reason is specific and bounded: the simulated fields are generated as Gaussian processes, so the kriging model is close to correctly specified here and a correctly specified linear predictor obtains its predictive variance for free. The advantage is confined to that condition---on the real data, where the model is misspecified, kriging's raw intervals over-cover ($0.982$), require a substantial correction ($s_{\mathrm{cov}}=0.69$), and still reach a marginal NLL an order of magnitude worse than the proposed model's ($0.684$ against $0.032$; Table~\ref{tab:real_uq}).

Two properties limit what that calibration delivers, and both are structural. The kriging variance is a function of the data \emph{configuration}---the design geometry and the fitted variogram---and not of the observed \emph{values}, a long-recognized property of the estimator \citep{journel1986geostatistics, goovaerts1997geostatistics}, so its intervals are exactly constant in time in every regime and carry no information about when a forecast is less trustworthy. And because the baseline is three independent per-variable krigings, it prices no cross-variable dependence: modelling that dependence is worth $1.0$--$2.7$ nats per space--time point against a marginal deficit of $0.2$--$2.2$, so on the joint predictive law the ordering reverses in all three regimes. Appendix~H gives both comparisons in full.

\FloatBarrier
\subsubsection*{Visual Inspection of Temporal and Spatial Predictions}
The prediction quality is illustrated for the highest-complexity regime, IRF(2), in Figures~\ref{fig:timeseries_kappa2} and~\ref{fig:heatmap_kappa2_y1}. The time series plot (Figure~\ref{fig:timeseries_kappa2}) traces true versus predicted sequences at nine randomly chosen spatial coordinates, showing that the proposed model follows the true trajectories closely; the analogous plots for the other non-stationarity levels $\kappa$ exhibit the same behavior and are provided in Appendix~D. The spatial heatmap (Figure~\ref{fig:heatmap_kappa2_y1}) compares True, Predicted, and Residual fields for $Y_1$; the flat, randomized residual distribution indicates that the CNN-LSTM generalizes the localized spatial patterns without imposing geographical biases. Corresponding heatmaps for the remaining outputs $Y_2$ and $Y_3$, and for the other non-stationarity levels, are provided in Appendix~D.

\begin{figure}[!ht]
\centering
\includegraphics[width=0.48\linewidth]{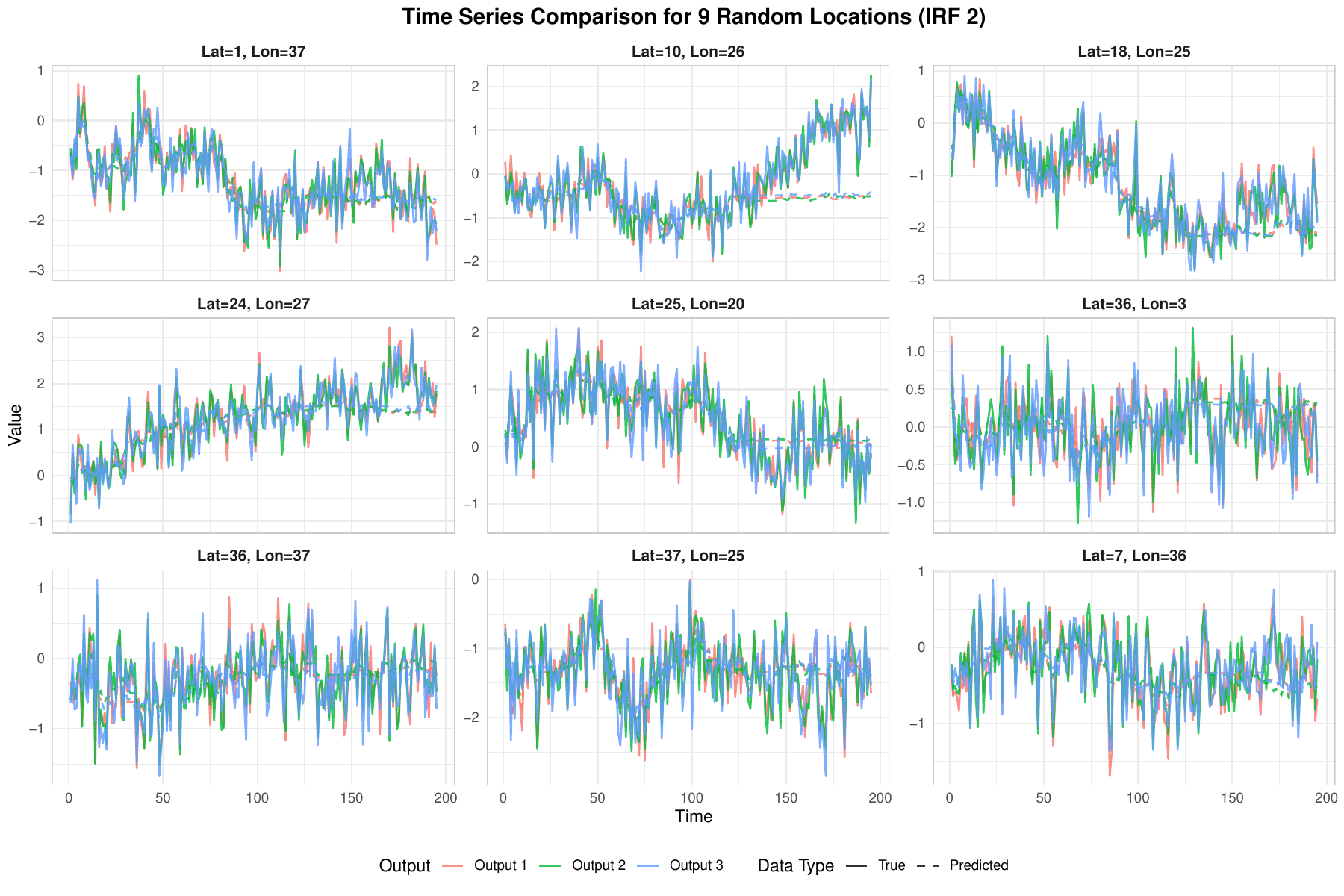}
\caption{Time series of true (solid line) and predicted (dashed line) values at nine randomly chosen grid points for IRF(2).}
\label{fig:timeseries_kappa2}
\end{figure}

\begin{figure}[!ht]
\centering
\includegraphics[width=0.64\linewidth]{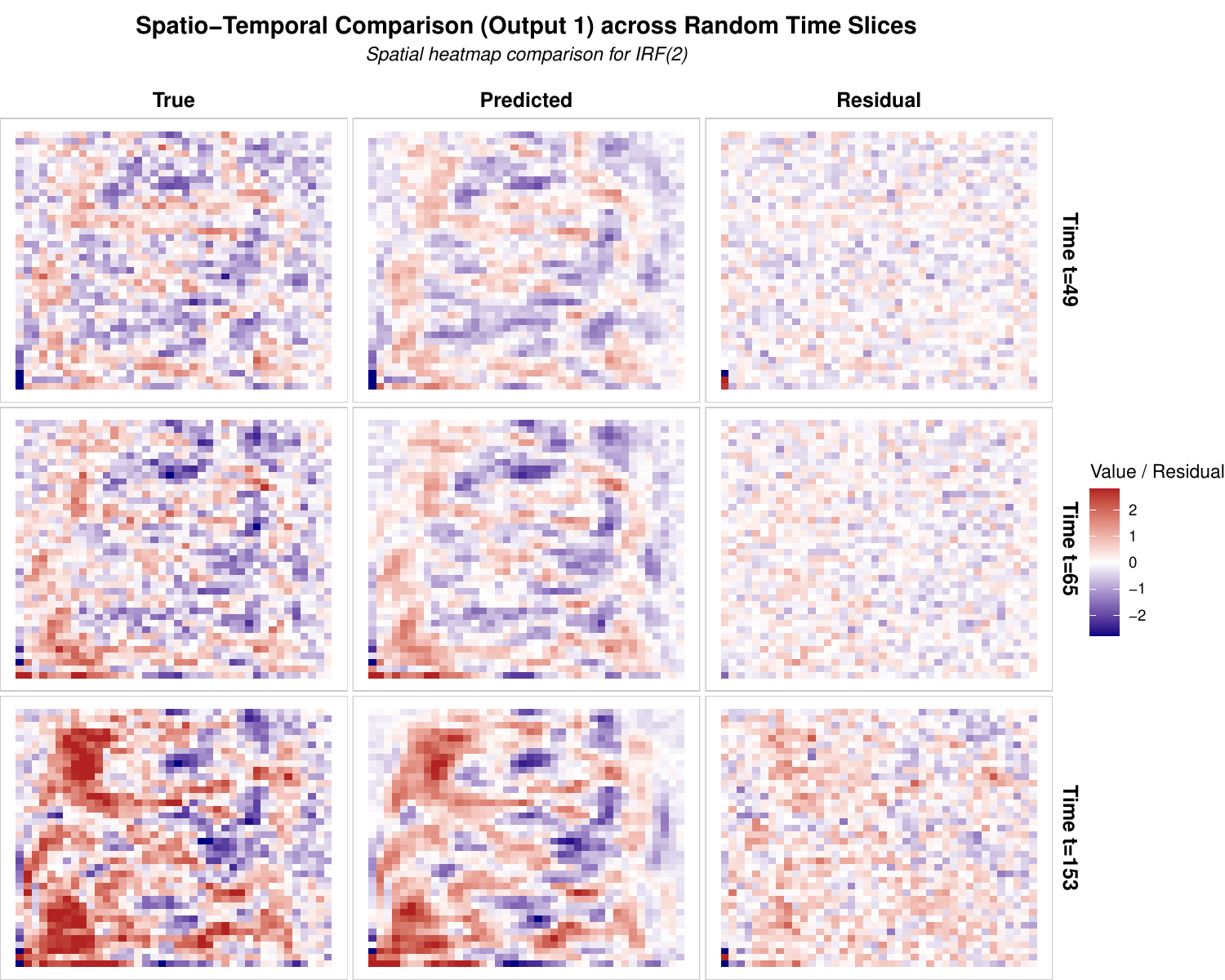}
\caption{Spatial heatmaps at randomly selected three temporal points with $Y_1$ (output 1). Left: True values; Center: Predicted Values; Right: Residuals for IRF(2)}
\label{fig:heatmap_kappa2_y1}
\end{figure}

\FloatBarrier
\subsection{Sensitivity Analysis: Effect of Noise Level on Relative Model Performance}
\label{subsec:noise_sensitivity}

This section examines how model behavior depends on the noise level, varying the noise scale $\eta \in \{1, 5, 10, 20, 30\}$ in the data-generating process under the most demanding $\text{IRF}(2)$ regime. We ask two questions: (i)~is the \emph{accuracy ordering} among the models stable across noise levels, and (ii)~how does the \emph{calibration} of the proposed model's raw predictive intervals depend on the noise level? The headline findings are that the accuracy ordering is stable across the range, while the raw calibration is strongly noise-dependent---broadly improving as the noise rises---which identifies the low-noise under-coverage as an artifact of the high signal-to-noise regime rather than a defect of the method.

\subsubsection*{Theoretical Analysis}

Three mechanisms already established above motivate why the proposed model should remain \emph{robust} across noise levels---retaining its accuracy ordering rather than degrading as the baselines do---without implying a strictly monotone growth of its margin. As $\eta$ rises, $W_\sigma$ learns larger $\hat{\sigma}^2$ in noise-dominated regions, so the precision weighting of Equation~\eqref{eq:precision_weight} suppresses exactly those gradients more strongly (Section~\ref{subsubsec:stage2_mechanics}); the direct model carries the $\Sigma_R^{-1}$ gradient-variance handicap of Theorem~\ref{thm:conditioning} throughout the range, a persistent structural disadvantage rather than one that grows with $\eta$; and the deterministic model, left to converge, interpolates a noise component that scales with $\eta$, which the early-stopping Tikhonov regularization of Stage~1 avoids (Section~\ref{subsubsec:spectral_interp}). The first and third mechanisms therefore protect $M_{\text{prop}}$ as $\eta$ grows, while the second holds the direct baseline back uniformly.

\subsubsection*{Empirical Results}

Table~\ref{tab:noise_sensitivity} reports, for each $\eta$, the overall RMSE of the four models (mean $\pm$ standard deviation over 50 Monte Carlo runs) under $\text{IRF}(2)$, and Table~\ref{tab:noise_calibration} the proposed model's raw and recalibrated PICP and its recalibrated NLL under both scales.

\begin{table}[!ht]
\centering
\caption{Noise sensitivity under IRF(2): overall RMSE (mean $\pm$ SD over 50 Monte Carlo runs per $\eta$). Kriging RMSE is summarized by its median, with the mean in parentheses, since a few hard random-walk realizations right-skew its mean.}
\label{tab:noise_sensitivity}
\begin{tabular}{c c c c c}
\toprule
\textbf{Noise} & \textbf{Proposed} & \textbf{Direct Prob} & \textbf{Determ.} & \textbf{Kriging} \\
\textbf{Scale $\eta$} & \textbf{(Two-stage)} & \textbf{(NLL-only)} & \textbf{(MSE)} & \textbf{med.\ (mean)} \\
\midrule
$1$  & $0.7976 \pm 0.2757$ & $304.08 \pm 44.05$  & $0.9081 \pm 0.3583$ & $0.719\ (1.22)$ \\
$5$  & $0.7857 \pm 0.2712$ & $296.43 \pm 41.03$  & $0.9309 \pm 0.4016$ & $0.740\ (1.19)$ \\
$10$ & $0.8147 \pm 0.2602$ & $284.66 \pm 25.61$  & $0.9817 \pm 0.3871$ & $0.807\ (1.21)$ \\
$20$ & $0.8785 \pm 0.2351$ & $261.16 \pm 28.86$  & $1.0132 \pm 0.3030$ & $0.911\ (1.22)$ \\
$30$ & $0.9087 \pm 0.1714$ & $276.11 \pm 102.51$ & $0.9996 \pm 0.2212$ & $0.952\ (1.21)$ \\
\bottomrule
\end{tabular}
\end{table}

\begin{table}[!ht]
\centering
\caption{Noise sensitivity under IRF(2): calibration of the proposed model across noise levels (50 Monte Carlo runs per $\eta$). Raw PICP is the coverage of the network's uncorrected predictive intervals; the recalibrated PICP applies the coverage-matched scale $s_{\mathrm{cov}}$ of Section~\ref{subsec:exp_setup}, and the recalibrated NLL is reported under both $s_{\mathrm{cov}}$ and the MLE-temperature scale $s_{\mathrm{var}}$. Nominal coverage is $0.95$.}
\label{tab:noise_calibration}
\begin{tabular}{c c c c c}
\toprule
\textbf{Noise} & \textbf{Raw} & \textbf{Recal.\ PICP} & \textbf{Recal.\ NLL} & \textbf{Recal.\ NLL} \\
\textbf{Scale $\eta$} & \textbf{PICP} & ($s_{\mathrm{cov}}$) & ($s_{\mathrm{cov}}$) & ($s_{\mathrm{var}}$) \\
\midrule
$1$  & $0.269$ & $0.820$ & $2.90$   & $1.62$ \\
$5$  & $0.790$ & $0.850$ & $62.05$  & $3.36$ \\
$10$ & $0.896$ & $0.883$ & $144.44$ & $3.59$ \\
$20$ & $0.908$ & $0.919$ & $150.76$ & $3.89$ \\
$30$ & $0.821$ & $0.933$ & $42.01$  & $3.54$ \\
\bottomrule
\end{tabular}
\end{table}

Two findings emerge. First, the accuracy ordering is stable across the entire noise range. The proposed model is the most accurate learned model at every level (lowest mean RMSE among the neural models), and the \emph{direct} probabilistic baseline is worst by a decisive and persistent margin: it \emph{diverges} at every noise level (mean RMSE $261$--$304$ across $\eta$) because its gradient-variance explosion (Theorem~\ref{thm:conditioning}) destabilizes training, and the proposed model attains a lower RMSE on $100\%$ of paired realizations everywhere. The \emph{deterministic} and \emph{kriging} baselines remain comparable to the proposed model throughout---the proposed model is more robust on the deterministic baseline's failure tail and comparable to kriging on the median across the noise range, with kriging's tabulated median falling below the proposed model's mean at the lower noise levels ($\eta=1,5$).

Second, the calibration of the proposed model's \emph{raw} intervals is strongly noise-dependent and broadly improves with $\eta$: the raw PICP rises from $0.27$ at $\eta=1$ to about $0.90$--$0.91$ by $\eta=10$--$20$ (Table~\ref{tab:noise_calibration}; it eases to $0.82$ at $\eta=30$). This identifies the low-noise under-coverage as an artifact of the high signal-to-noise regime---when nearly all variance is explainable signal, the variance head is driven toward overconfidence---rather than a defect of the method. A single coverage recalibration lifts PICP to $0.82$--$0.93$. The recalibrated NLL, however, must be read on the scale that optimizes it: under the MLE-temperature scale $s_{\mathrm{var}}$ the proposed model's NLL is stable across the range ($1.62$ at $\eta=1$ up to $3.89$ at $\eta=20$), whereas the coverage scale $s_{\mathrm{cov}}$---by construction near one in the higher-SNR regimes---leaves the mean NLL tail-inflated at intermediate $\eta$ (up to $151$), the same localized-overconfidence effect discussed in Section~\ref{subsec:sim_result}. On either scale the recalibrated proposed model is far better than the direct baseline; the value of the two-stage design therefore lies in point-forecasting accuracy and robustness, with calibrated coverage after recalibration, rather than in a uniformly small mean NLL.

Taken together, this sweep also locates the primary operating point: the moderate-noise regime around $\eta=10$---used as the main simulation setting (Section~\ref{subsec:sim_result}) and the regime most representative of the real-data application---is where raw coverage is already substantial and recalibration yields near-nominal intervals.

\FloatBarrier
\section{Real Data Analysis: Multi-City Climate Prediction}\label{sec:realdata}

Having characterized the framework on controlled spherical-IRF simulations (Section~\ref{sec:simulation_study}), we now turn to a real-world setting in which the underlying signal and noise structure are unknown. We apply TSCoNet to a multi-city climate dataset covering $50$ major cities worldwide (Figure~\ref{fig:50cities}). For each city, daily precipitation ($\mathtt{PRECTOTCORR}$), minimum temperature ($\mathtt{T2M\_MIN}$), and maximum temperature ($\mathtt{T2M\_MAX}$) were obtained from the NASA POWER database\footnote{\url{https://power.larc.nasa.gov/}}---a global meteorology and solar-energy dataset derived from NASA satellite observations and reanalysis---via the \texttt{nasapower} R package \citep{sparks2018nasapower}, for the period 2000--2020 and aggregated to a monthly resolution (precipitation summed and temperatures averaged within each month), yielding $252$ monthly time points. Each city was treated as a separate spatial location, producing a spatio-temporal array of dimensions $T \times N \times n_{\text{out}}$,\footnote{For the convolutional input, the $N$ cities are arranged as a degenerate $N \times 1$ spatial grid ($n_{\text{lat}}=N$, $n_{\text{lon}}=1$), consistent with the $n_{\text{lat}} \times n_{\text{lon}} \times n_{\text{out}}$ tensor convention of Section~\ref{subsec:architecture}.} with $T=252$ monthly time points, $N=50$ cities and $n_{\text{out}}=3$ variables, denoted $\mathtt{ppt}$, $\mathtt{tmin}$ and $\mathtt{tmax}$ below. Prior to modeling, each variable was standardized to zero mean and unit variance using statistics computed on the training period only (the earliest 70\% of the monthly time points) and applied to the validation and test periods, to avoid information leakage from the held-out data.

\begin{figure}[!ht]
\centering
\includegraphics[width=0.7\linewidth]{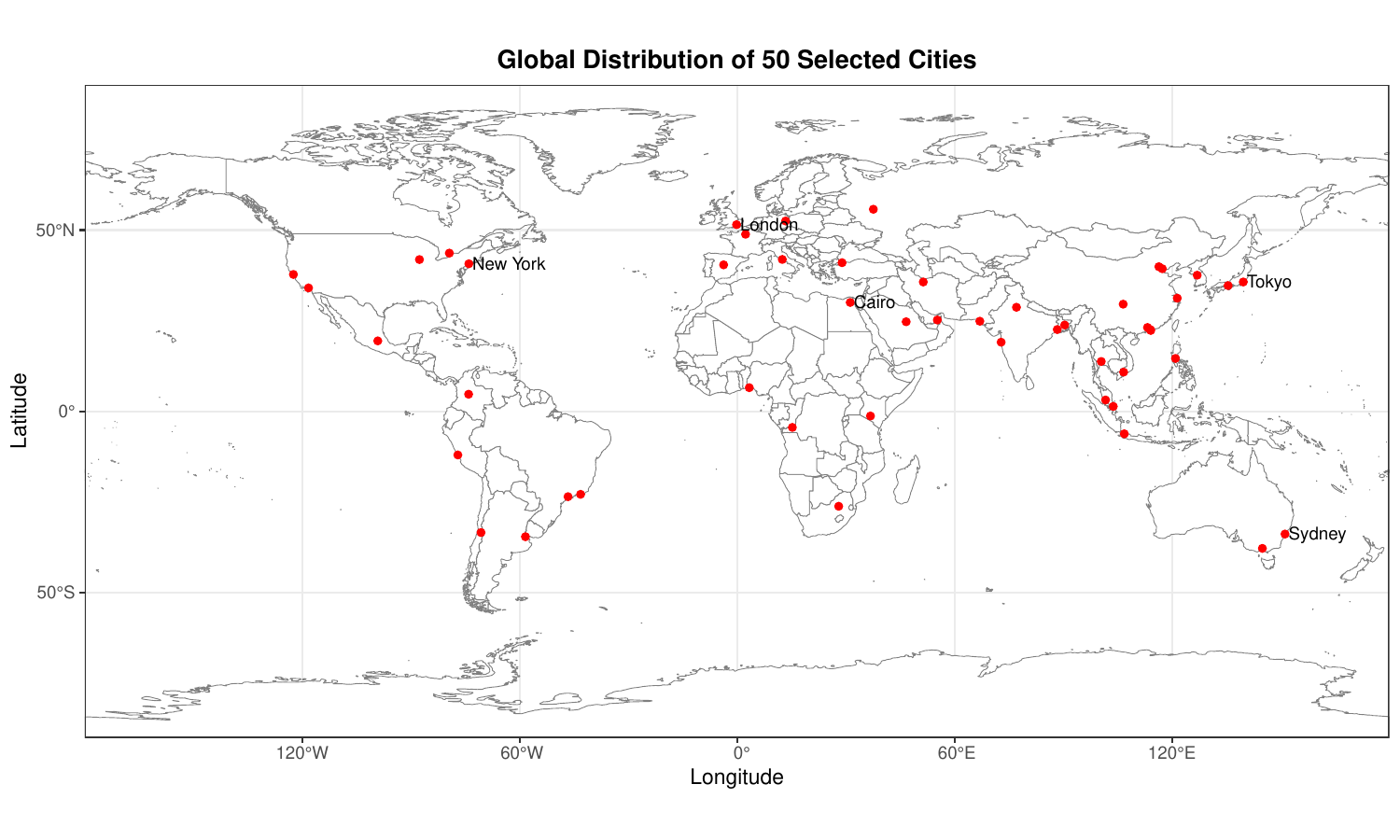}
\caption{Global distribution of the 50 selected cities used in the real-data analysis.}
\label{fig:50cities}
\end{figure}

\subsection{Cross-Variable Dependence in the Climate Data}
\label{subsec:real_dependence}

The three target variables are cross-dependent, and their dependence is \emph{heterogeneous}. Table~\ref{tab:real_corr} reports the empirical correlation among $\mathtt{ppt}$, $\mathtt{tmin}$, and $\mathtt{tmax}$, averaged within city over the observation period so that the values reflect temporal co-variation rather than between-city climate gradients. Minimum and maximum temperature are strongly correlated ($0.80$), whereas precipitation is only weakly related to temperature ($0.25$ with $\mathtt{tmin}$, and essentially uncorrelated, $-0.02$, with $\mathtt{tmax}$). This pattern carries a direct methodological implication: an independent-marginals model would ignore the strong $\mathtt{tmin}$--$\mathtt{tmax}$ coupling, whereas a homogeneous-dependence assumption would misrepresent the near-independence of precipitation. The Gaussian copula layer is precisely what allows the framework to represent this variable-specific dependence, and it is the real-data instance of the cross-variable correlation whose effect on the training gradient we analyze in Section~\ref{subsec:opt_dynamics}. The fitted model recovers this heterogeneous structure: the empirical correlation of its standardized residuals is strong for $\mathtt{tmin}$--$\mathtt{tmax}$ ($\approx 0.71$) and near zero for the precipitation--temperature pairs ($-0.01$ and $-0.07$), matching the pattern in Table~\ref{tab:real_corr}. We read the dependence from these residual correlations rather than from the copula's point estimate of $\mathbf{R}$, which is well identified for the strong pair but high-variance for the near-independent precipitation pairs, where the likelihood is flat (Section~\ref{sec:limitations}).

\begin{table}[!ht]
\centering
\caption{Empirical cross-variable correlation of the three climate variables ($\mathtt{ppt}$, $\mathtt{tmin}$, $\mathtt{tmax}$), averaged within city over 2000--2020. Values are averaged within city so that they reflect temporal co-variation rather than between-city climate gradients.}
\label{tab:real_corr}
\begin{tabular}{l c c c}
\toprule
 & $\mathtt{ppt}$ & $\mathtt{tmin}$ & $\mathtt{tmax}$ \\
\midrule
$\mathtt{ppt}$  & $1.00$  & $0.25$ & $-0.02$ \\
$\mathtt{tmin}$ & $0.25$  & $1.00$ & $0.80$  \\
$\mathtt{tmax}$ & $-0.02$ & $0.80$ & $1.00$  \\
\bottomrule
\end{tabular}
\end{table}

\subsection{Model Setup for Multi-City Prediction}

The $\text{CNN-LSTM}$ model takes a sequence of $L_{\text{in}} = 12$ previous monthly time steps as input and predicts the mean and variance for each climate variable at the next time step. For city $i$ and output variable $k$, the model produces a probabilistic forecast $(\hat{\mu}_{i,k}, \hat{\sigma}_{i,k})$, where $\hat{\mu}_{i,k}$ is the predicted mean and $\hat{\sigma}_{i,k}$ is the predicted standard deviation. 

The model follows the two-stage strategy of Section~\ref{subsubsec:training}, trained with Adam throughout: Stage~1 at learning rate $10^{-3}$ for up to 100 epochs, Stage~1.5 for up to 30 epochs, and Stage~2 at a reduced rate of $10^{-5}$ for up to 50 epochs to prevent overshooting the refined feature space. All stages monitor validation MSE for early stopping (patience~$=10$, restore best weights). Following the same protocol as the simulation study (Section~\ref{subsec:exp_setup}), the monthly series is split chronologically into train/validation/test ($70/15/15$ of the time points), normalization statistics are computed on the training period only, and all reported metrics are computed on the held-out test segment.

\subsection{Prediction Performance}
\label{subsec:real_performance}

Table~\ref{tab:real_metrics} reports the held-out test RMSE (mean $\pm$ standard deviation over 50 model-init seeds) for each climate variable, and Table~\ref{tab:real_uq} the recalibrated predictive uncertainty (95\% prediction-interval coverage, PICP, and average per-output NLL). TSCoNet and the deterministic baseline are effectively tied on accuracy (the deterministic baseline marginally lower, by less than the seed-to-seed spread), and both substantially outperform the direct-NLL and kriging baselines; after recalibration the proposed model's intervals track the nominal $0.95$ coverage.

\begin{table}[!ht]
\centering
\caption{Held-out test RMSE (mean $\pm$ standard deviation over 50 model-init seeds) on the real climate dataset. Bold marks the lowest mean RMSE per variable; the kriging baseline is deterministic in the seed, hence SD $=0$.}
\label{tab:real_metrics}
\begin{tabular}{l c c c}
\toprule
\textbf{Model} & $\mathtt{ppt}$ & $\mathtt{tmin}$ & $\mathtt{tmax}$ \\
\midrule
\textbf{Proposed (Two-stage)}     & $0.7349 \pm 0.0019$ & $0.1372 \pm 0.0036$ & $0.1688 \pm 0.0036$ \\
Direct (NLL-only)                 & $694.93 \pm 18.04$ & $698.18 \pm 18.15$ & $697.32 \pm 18.14$ \\
Deterministic Baseline            & $\mathbf{0.7341 \pm 0.0016}$ & $\mathbf{0.1330 \pm 0.0032}$ & $\mathbf{0.1656 \pm 0.0032}$ \\
Kriging Baseline                  & $0.9412 \pm 0.0000$ & $0.3289 \pm 0.0000$ & $0.3481 \pm 0.0000$ \\
\bottomrule
\end{tabular}
\end{table}

\begin{table}[!ht]
\centering
\caption{Recalibrated predictive uncertainty on the held-out test set (mean $\pm$ standard deviation over 50 model-initialization seeds), for every model in Table~\ref{tab:real_metrics}. PICP and NLL are given under both recalibration scales as $s_{\mathrm{cov}}/s_{\mathrm{var}}$, PICP targeting the nominal $0.95$. Bold marks the lowest mean RMSE and the best NLL under each scale; a dash means the model supplies no predictive distribution.}
\label{tab:real_uq}
\begin{tabular}{l c c c}
\toprule
\textbf{Model} & \textbf{Overall RMSE} & \textbf{PICP} ($s_{\mathrm{cov}}/s_{\mathrm{var}}$) & \textbf{NLL} ($s_{\mathrm{cov}}/s_{\mathrm{var}}$) \\
\midrule
\textbf{Proposed (Two-stage)}     & $0.4425 \pm 0.0013$ & $0.946/0.949$ & $\mathbf{0.032/-0.008}$ \\
Direct (NLL-only)                 & $696.81 \pm 18.11$  & $0.951/1.000$ & $8.653/7.978$ \\
Deterministic Baseline            & $\mathbf{0.4412 \pm 0.0012}$ & --            & --            \\
Kriging Baseline                  & $0.6097 \pm 0.0000$ & $0.944/0.938$ & $0.684/0.681$ \\
\addlinespace[3pt]
Stop-gradient                     & $1.1148 \pm 0.0676$ & $0.948/0.947$ & $1.467/1.465$ \\
$\beta$-NLL (scratch)             & $691.38 \pm 22.47$  & $0.950/1.000$ & $8.927/8.368$ \\
$\beta$-NLL (MSE warm-up)         & $0.7339 \pm 0.3986$ & $0.943/0.943$ & $1.103/0.962$ \\
\bottomrule
\end{tabular}
\end{table}

Consistent with the simulation study, the real-data results show the same core ordering. On point accuracy (Table~\ref{tab:real_metrics}) the deterministic baseline is slightly but consistently ahead: it attains the lower RMSE on $74\%$ of the 50 model-initialization seeds, and a paired $t$-test detects the difference ($p\approx2\times10^{-5}$). The gap is nevertheless too small to be meaningful. It is $0.3\%$ of overall RMSE ($0.4425$ against $0.4412$) and at most $3.2\%$ on any single variable ($\mathtt{tmin}$); it is smaller than the proposed model's own seed-to-seed standard deviation ($0.0013$), and the two error distributions overlap---the proposed model's best seed ($0.4394$) is more accurate than the deterministic baseline's worst ($0.4444$). The paired test establishes the sign of the difference, not that it is consequential. We accordingly describe the proposed model as reaching deterministic-level accuracy, at a cost that buys prediction intervals the deterministic model cannot supply at all. Both clearly outperform the direct-NLL baseline, which \emph{diverges} on the real data exactly as in simulation ($\approx 695$ RMSE per variable; the proposed model is lower on all $50$ seeds), and the scratch $\beta$-NLL baseline diverges likewise ($\approx 691$); adding only the Stage-1 MSE warm-up to $\beta$-NLL restores competitive accuracy ($0.734$ overall), and the variance-shielding stop-gradient baseline trains stably but is less accurate ($1.11$ overall). After recalibration (Table~\ref{tab:real_uq}) the proposed model's $95\%$ coverage is close to nominal ($\approx 0.946$ under $s_{\mathrm{cov}}$) and---unlike the simulated IRF(2) regime---its two recalibration scales agree, giving a well-behaved marginal NLL under both scales ($\approx 0.0$) that far surpasses the direct baseline ($\approx 8.7$).

The comparison with the stop-gradient baseline reverses here. In simulation that baseline held the coverage advantage (Section~\ref{subsec:sim_result}); on the real data every model that trains stably reaches near-nominal coverage---$0.946$ for the proposed model against $0.948$ for stop-gradient and $0.943$ for the warm-up $\beta$-NLL---so coverage no longer separates them, and the separation moves to likelihood, where the proposed model attains $\text{NLL}\approx0.03$ against $1.47$ and $1.10$. It does so while also being the more accurate forecaster ($0.443$ overall RMSE against $1.115$ and $0.734$), and by a margin that leaves no ambiguity: it is lower on all $50$ seeds against each of the three, and its worst seed ($0.4456$) is still better than the best seed of the closest of them ($0.4572$ for the warm-up $\beta$-NLL). The reason the simulated coverage gap does not recur is visible in the diagnostics: the proposed model's variance head leaves only $9.4\%$ of location--variable pairs at the floor here, against about half at IRF(0)/IRF(1), confirming that the collapse behind its simulated under-coverage is a feature of those high signal-to-noise synthetic fields rather than of the schedule itself.

The proposed model also outperforms the space-time kriging forecast baseline on every variable and every seed (e.g.\ $\mathtt{ppt}$ $0.7349$ vs.\ $0.9412$). Importantly, this kriging baseline is itself a genuine forecaster---it predicts each held-out month from a $12$-month past window using the cities' true coordinates and a space-time variogram---so the gap is not because kriging ignores time; rather, it reflects the gain from \emph{learned, nonlinear} seasonal-temporal and cross-variable structure over a linear, stationarity-assuming spatio-temporal interpolator. The same holds for its predictive intervals (Table~\ref{tab:real_uq}): the two reach comparable recalibrated coverage ($0.944$ for kriging against $0.946$), but kriging's marginal NLL is an order of magnitude worse ($0.684$ against $0.032$), and its raw intervals over-cover ($0.982$) and require a substantial correction ($s_{\mathrm{cov}}=0.69$)---in contrast to the simulated fields, where they needed essentially none (Section~\ref{subsec:sim_result}). This is the expected consequence of leaving the regime in which the kriging model is close to correctly specified. We also note that the robustness advantage observed in simulation (the proposed model's thinner failure tail on hard realizations) does \emph{not} apply here: the real dataset is fixed, so the only source of variation is model initialization, the seed-to-seed RMSE spread is small, and the deterministic baseline shows no failure tail---robustness is therefore a simulation-only finding.

\FloatBarrier
\subsection{Where Predictive Uncertainty Concentrates}
\label{subsec:real_uncertainty}

The cross-variable dependence structure of Section~\ref{subsec:real_dependence} also shapes \emph{where} predictive uncertainty concentrates. Precipitation is intrinsically far less predictable than temperature---its held-out RMSE ($0.73$) is $4.4$--$5.4$ times that of $\mathtt{tmin}$ and $\mathtt{tmax}$ ($0.14$--$0.17$; Table~\ref{tab:real_metrics})---and because the recalibrated intervals attain near-nominal $95\%$ coverage across all three variables (Table~\ref{tab:real_uq}), the framework necessarily assigns correspondingly wider predictive intervals to precipitation. The variance ``noise sink'' thus absorbs the large aleatoric variability of precipitation rather than allowing it to corrupt the mean forecast. This is also where the contrast with the kriging baseline's intervals is sharpest. The proposed model's predictive standard deviation varies more strongly here than in any simulated regime---coefficient of variation $1.321$ across locations and $0.469$ across time, against $0.030$ and exactly $0$ for kriging (Appendix~H). The practical contribution on the real data is accordingly not a point-accuracy gain over the deterministic baseline---the two are tied---but a \emph{calibrated, variable-specific, and spatially resolved} characterization of predictive uncertainty across the 50 cities, information the deterministic model cannot provide.

\subsection{Visualization of Predictions}

City-wise mean predictions for precipitation, the least predictable of the three variables, are shown in Figure~\ref{fig:ppt_true_pred}. The figures in this subsection are illustrative and span the full series; all quantitative metrics reported above are computed on the held-out test period only.

\begin{figure}[!ht]
\centering
\includegraphics[width=0.8\linewidth]{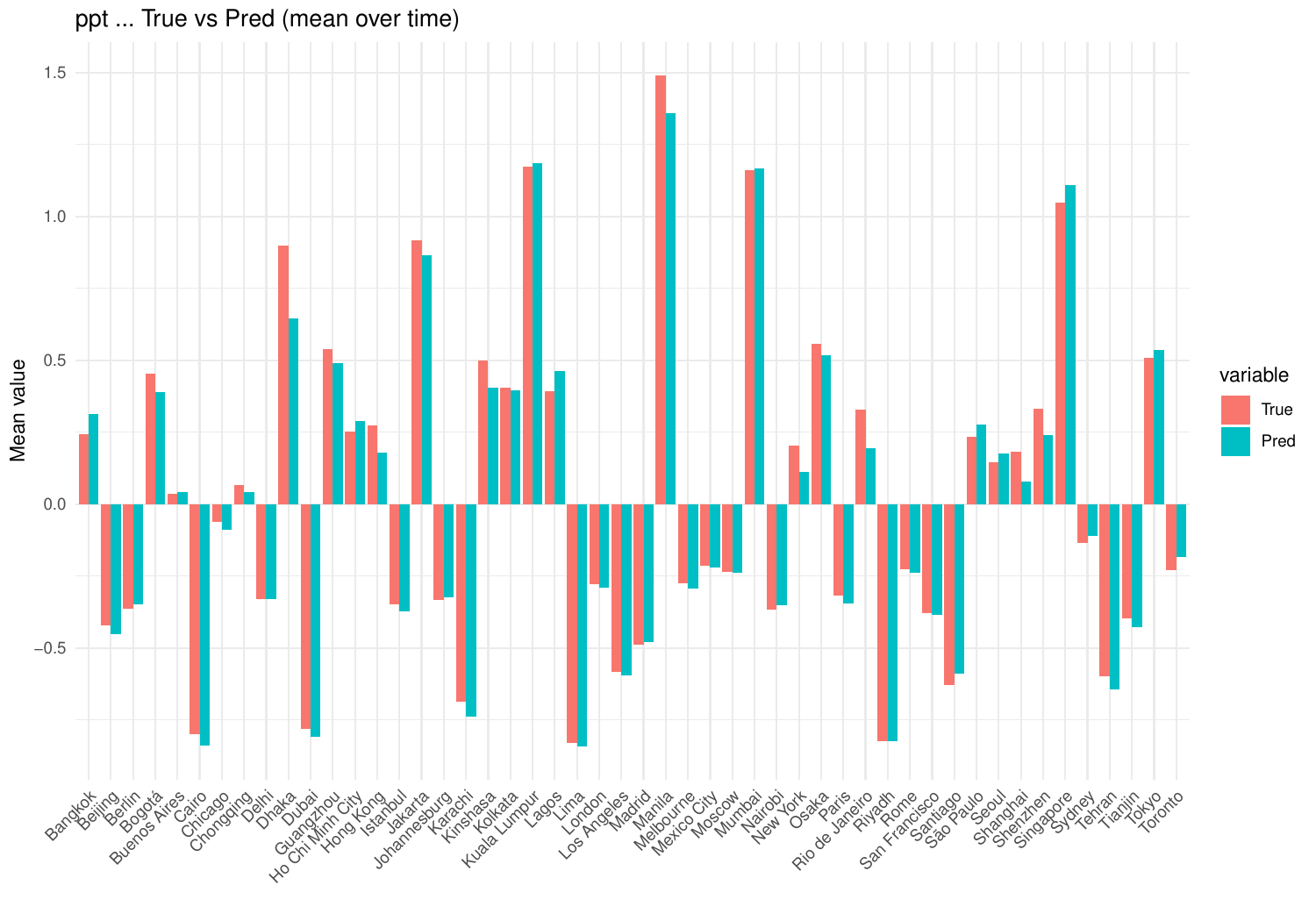}
\caption{City-wise comparison of true and predicted means (over time) for Precipitation ($\mathtt{ppt}$).}
\label{fig:ppt_true_pred}
\end{figure}

Per-city time series across all three variables, the city-wise residual means, and the corresponding true-versus-predicted plots for $\mathtt{tmin}$ and $\mathtt{tmax}$ are provided in Appendix~E. Together they indicate that the model maintains consistent accuracy across locations and captures the seasonal dynamics of all three variables, with residuals centered near zero and the largest deviations---as the RMSE ordering of Section~\ref{subsec:real_uncertainty} already implies---in precipitation. These visual comparisons corroborate the quantitative results of Sections~\ref{subsec:real_performance}--\ref{subsec:real_uncertainty} at the level of individual cities.

\section{Limitations and Future Work}\label{sec:limitations}

While the proposed framework performs well, several extensions remain. The 50 real-world stations form an irregular, arbitrarily ordered array rather than a regular grid, so the spatial convolution's advantage---validated in the gridded simulation study---is less fully exercised on this irregular network; here the temporal and cross-variable components play a larger role. This does not affect the framework's central contribution---calibrated probabilistic forecasting---and a distance- or graph-based encoder (e.g.\ a graph-convolutional network) is a natural extension for irregular station networks.

A further direction is non-Gaussian marginals: precipitation is skewed and zero-inflated, so zero-inflated Gamma or Tweedie distributions integrated into the dual-head output would yield more physically consistent intervals than the Gaussian marginals used here. The Gaussian marginals are load-bearing for two of our results: the equivalence of Appendix~B and the gradient-conditioning argument of Section~\ref{subsec:opt_dynamics}, whose proof uses a score linear in the standardized residual. Both would have to be re-derived, and the atom at zero would additionally require the copula likelihood to admit a discrete component. A final direction is spherical CNN architectures, which would align the convolution operator with the underlying spherical geometry for global-scale forecasting, a natural direction given the spherical formulation of this work.

Finally, the copula's point estimate of $\mathbf{R}$ is imprecise at both ends of the dependence range. Where the true cross-variable dependence is near zero the joint likelihood is nearly flat in the corresponding off-diagonal, so the estimate is high-variance across model-initialization seeds even though the empirical correlation of the standardized residuals is stable; we therefore read the learned dependence from the residual correlation, which is the more stable diagnostic. Where the dependence is extreme the estimate instead saturates against the boundary of the correlation space ($|\hat{\mathbf{R}}_{ij}|>0.999$ in $31$ of $50$ runs at IRF(2)), and the fixed shrinkage $\alpha=0.05$ is what keeps the operative matrix away from singularity---so $\alpha$ is a precondition for the joint predictive law in that regime, not only a stabilizer of the training gradient, and it is set a priori rather than adapted. A parameterization that keeps the estimate strictly inside the positive-definite cone with a bounded condition number would remove both failure modes at once.

\section{Conclusion}

In this study, we developed TSCoNet, a two-stage, copula-based probabilistic CNN-LSTM framework for multivariate spatio-temporal forecasting. A key limitation of standard deterministic models is that they provide no measure of predictive uncertainty, whereas directly optimizing the negative log-likelihood supplies uncertainty but degrades point accuracy. TSCoNet resolves this tension with a two-stage scheme: a first stage anchors the mean under an MSE objective to deterministic-level accuracy, after which the mean head is frozen and, during a second NLL stage, the backbone and a dedicated variance head are refined. Freezing the mean head prevents it from absorbing high-frequency noise and routes residual error into the variance head, which acts as a dedicated noise sink. It uses precision-weighted gradients to drive \textit{Uncertainty-Aware Feature Refinement} within the shared backbone, preserving this accuracy while producing variance estimates that, after recalibration, yield calibrated predictive intervals.

Simulation results under Intrinsic Random Functions (IRF) and real-world multi-city climate experiments support the value of this two-stage mechanism. TSCoNet supplies dynamic, cross-variable uncertainty bounds that---after a standard post-hoc recalibration---are well-calibrated, while matching deterministic-level point accuracy. In simulation the direct-NLL baseline diverges under its gradient-variance explosion---an ablation confirms this reflects the training schedule, since the same objective recovers once a mean warm-up is added---whereas TSCoNet matches the deterministic and kriging baselines (beating them at low non-stationarity); on real data it matches the deterministic baseline's accuracy (the deterministic baseline marginally lower), outperforms the direct-NLL and kriging baselines, and supplies calibrated uncertainty that the deterministic baseline lacks. Ultimately, this uncertainty-aware framework provides a theoretically grounded and reliable tool for multivariate spatio-temporal forecasting, with applications ranging from climate and environmental modeling to broader scientific and industrial domains.

\section*{Declaration of Generative AI Use}

Claude (Anthropic) was used to assist with code development and with language
editing of the manuscript. No generative AI tool was used to generate, analyze,
or interpret the data, and no AI tool is listed as an author. The authors
designed the methodology, conducted all analyses, verified all numerical
results, and take full responsibility for the content of this article.

\section*{Data Availability Statement}

The real-world climate data analyzed in this article are publicly available from
the NASA POWER database (\url{https://power.larc.nasa.gov/}) and were accessed
through the \texttt{nasapower} \textsf{R} package \citep{sparks2018nasapower}.
The simulated data are generated by the analysis code, which is provided as
supplementary material together with the stored Monte Carlo result files needed
to reproduce every reported table and figure.

\acks{\textbf{Funding.} This work was supported in part by an Aspire Alpha Support
award from Ball State University. The funder had no role in the study design, the
analysis, or the decision to submit.

\textbf{Competing interests.} The authors report there are no competing
interests to declare.}

\phantomsection\label{supplementary-material}
\bigskip

\begin{center}

{\large\bf SUPPLEMENTARY MATERIAL}

\end{center}

\begin{description}

\item[Appendices A--H:]
Included at the end of this document: proofs; the copula/multivariate-Gaussian
equivalence; a neural-tangent-kernel analysis of noise interpolation and early
stopping; additional simulation and real-data figures; the Stage-2 gradient
derivation; the per-variable RMSE and coverage definitions; and the comparison
against the kriging baseline's predictive intervals.

\item[Analysis code and stored results:]
A single archive containing (i)~the \textsf{R} scripts implementing the
data-generating process, the proposed two-stage model and all baselines, the
spatio-temporal kriging forecaster, and the Monte Carlo drivers used to produce
the reported tables, and (ii)~the stored $50$-run simulation and $50$-seed
real-data result files, together with two read-only summary scripts that
regenerate every reported table from them without re-running any model fit.
(zip archive)

\end{description}

\bigskip

\bibliography{references}

\clearpage
\setcounter{table}{0}
\setcounter{figure}{0}
\renewcommand{\thetable}{S\arabic{table}}
\renewcommand{\thefigure}{S\arabic{figure}}
\appendix

\section{Proofs of the Main Results}

\begin{proof}[Proof of Theorem~1]
Recall the data-generating process $Y = f^*(X) + \epsilon$ with $\epsilon \sim \mathcal{N}(\bm{0}, \Sigma_R)$, so that the conditional mean of the response is the true signal, $\mathbb{E}_{Y \mid X}[Y \mid X] = f^*(X)$, since the noise has zero mean and is independent of $X$.
For probabilistic models minimizing $\mathcal{L}_{\text{NLL}}$, the gradient with respect to $\mu_\theta$ is (up to a positive constant arising from sample-size normalization):
\begin{equation}
\nabla_\mu \mathcal{L}_{\text{NLL}}(\theta) \propto -\Sigma_R^{-1} (Y - \mu_\theta(X))
\end{equation}
Under the infinite-data assumption, the empirical gradient converges to its population counterpart by the law of large numbers, so it suffices to analyze the population gradient. Taking the expectation over the joint distribution of $(X,Y)$ and applying the Law of Iterated Expectations $\mathbb{E}_{X,Y}[\cdot] = \mathbb{E}_X[\mathbb{E}_{Y \mid X}[\cdot \mid X]]$:
\begin{equation}
\mathbb{E}_{X,Y}[\nabla_\mu \mathcal{L}_{\text{NLL}}(\theta)] = \mathbb{E}_X \left[ -\Sigma_R^{-1} (\mathbb{E}_{Y \mid X}[Y \mid X] - \mu_\theta(X)) \right] = \mathbb{E}_X \left[ -\Sigma_R^{-1} (f^*(X) - \mu_\theta(X)) \right]
\end{equation}
Setting the expected gradient to zero and noting that $\Sigma_R$ is a positive-definite noise covariance, hence invertible, so that $\Sigma_R^{-1}$ is a fixed matrix that factors out of $\mathbb{E}_X[\cdot]$ and can be cancelled:
\begin{equation}
\mathbb{E}_X\!\left[ f^*(X) - \mu_\theta^*(X) \right] = \bm{0}.
\end{equation}
Because $\mu_\theta$ is optimized over a sufficiently expressive function class---so that its value can be chosen independently at each $X$---minimizing the population risk reduces to a pointwise minimization at every $X$. The stationarity condition therefore holds not merely on average but for each $X$ separately, giving
\begin{equation}
f^*(X) - \mu_\theta^*(X) = \bm{0} \implies \mu_\theta^*(X) = f^*(X) \quad \text{for all } X.
\end{equation}
This confirms that the theoretical global minimum $\mu_\theta^*$ is the true conditional mean $f^*(X)$, completely independent of $\mathbf{R}$. Consequently, the theoretical RMSE remains invariant to changes in $\mathbf{R}$.
\end{proof}

\begin{proof}[Proof of Theorem~2]
During training, models are updated using SGD. The stability is governed by the covariance of the stochastic gradient $g = \nabla_\mu \mathcal{L}_{\text{NLL}}$. Evaluating $g$ at the true mean $\mu = f^*(X)$:
\begin{equation}
\Cov(g \mid X) = \Cov(-\Sigma_R^{-1}(Y - \mu) \mid X) = \Sigma_R^{-1} \Cov(Y \mid X) \Sigma_R^{-1} = \Sigma_R^{-1} \Sigma_R \Sigma_R^{-1} = \Sigma_R^{-1}
\end{equation}
The gradient variance in $M_{\text{direct}}$ is precisely the precision matrix $\Sigma_R^{-1}$. As the correlation $\mathbf{R}$ strengthens, $\Sigma_R$ becomes ill-conditioned, and its smallest eigenvalue approaches 0, causing the largest eigenvalue of $\Sigma_R^{-1}$ to explode towards infinity ($\lambda_{\max}(\Sigma_R^{-1}) \to \infty$). Although the true asymptotic optimum is invariant to $\mathbf{R}$ (Theorem~1), in the finite-sample SGD regime the optimizer cannot traverse this ill-conditioned loss landscape reliably within a bounded number of steps. This artificially inflates the empirical RMSE.

By contrast, $M_{\text{prop}}$ dedicates Stage 1 exclusively to minimizing the Mean Squared Error ($\mathcal{L}_{\text{MSE}}$), completely decoupling $\mu$ from $\Sigma_R$. The gradient is $g_{\text{prop}} = -(Y-\mu)$.
\begin{equation}
\Cov(g_{\text{prop}} \mid X) = \Cov(-(Y-\mu) \mid X) = \Sigma_R
\end{equation}
\end{proof}

\begin{proof}[Proof of Proposition~3]
We relax the simplifying assumption $\mathbf{D}=\mathbf{I}$ and consider the general case $\mathbf{D} \neq \mathbf{I}$, so that $\Sigma_R = \mathbf{D}\mathbf{R}\mathbf{D}$ with arbitrary marginal standard deviations. Since $\mathbf{R}$ has unit diagonal, the diagonal entries of $\Sigma_R$ are $(\Sigma_R)_{ii} = \sigma_i^2$, so its trace is the sum of the marginal variances:
\begin{equation}
\Tr(\Sigma_R) = \sum_{i=1}^K (\Sigma_R)_{ii} = \sum_{i=1}^K \sigma_i^2,
\end{equation}
which is independent of $\mathbf{R}$ by the marginal property noted above. Because $\Sigma_R$ is symmetric, the Spectral Theorem permits the eigendecomposition $\Sigma_R = \mathbf{Q} \mathbf{\Lambda} \mathbf{Q}^\top$. Applying the cyclic property of the trace ($\Tr(\mathbf{A}\mathbf{B}\mathbf{C}) = \Tr(\mathbf{C}\mathbf{A}\mathbf{B})$):
\begin{equation}
\Tr(\Sigma_R) = \Tr(\mathbf{Q} \mathbf{\Lambda} \mathbf{Q}^\top) = \Tr(\mathbf{Q}^\top \mathbf{Q} \mathbf{\Lambda}) = \Tr(\mathbf{\Lambda}) = \sum_{i=1}^K \lambda_i.
\end{equation}
This identity $\sum \sigma_i^2 = \sum \lambda_i$ shows that although strong correlation $\mathbf{R}$ may redistribute the individual eigenvalues $\lambda_i$, the total stochastic energy (the trace) is invariant to $\mathbf{R}$ and bounded by the sum of marginal variances. Since $\Sigma_R$ is positive semi-definite, all eigenvalues are non-negative, so its largest eigenvalue is bounded by the trace, $\lambda_{\max}(\Sigma_R) \leq \Tr(\Sigma_R) = \sum_{i=1}^K \sigma_i^2$. Hence the spectral norm of the Stage-1 gradient covariance is controlled by the ($\mathbf{R}$-independent) marginal variances and cannot blow up as $\mathbf{R}$ becomes ill-conditioned. This is in stark contrast to the direct-NLL gradient covariance $\Sigma_R^{-1}$, whose largest eigenvalue $\lambda_{\max}(\Sigma_R^{-1}) = 1/\lambda_{\min}(\Sigma_R)$ diverges as $\mathbf{R}$ drives $\lambda_{\min}(\Sigma_R) \to 0$.
\end{proof}

\section{Equivalence of the Gaussian Copula PDF and the Multivariate Negative Log-Likelihood}\label{app:equivalence}
In this appendix, we establish the equivalence between our copula-factorized training loss and the multivariate Gaussian negative log-likelihood. While the underlying relationship is classical \citep{sklar1959fonctions, nelsen2007introduction}, we derive it explicitly for our decomposed, spatially-averaged formulation, making the connection between the implemented copula loss of the main text and standard maximum-likelihood estimation precise. Concretely, we show that the three formulations used in the proposed model---the joint probability density function based on the Gaussian copula, the empirical spatial-averaged loss used for the deep-learning implementation, and the theoretical multivariate NLL---are mathematically equivalent from an optimization perspective. Throughout, we assume Gaussian marginals, $f_k = \mathcal{N}(\hat{\mu}_{i,j,k}, \hat{\sigma}_{i,j,k}^2)$, together with a Gaussian copula; the equivalence established below is specific to this Gaussian specification.

At a spatial grid point $(i, j)$, assuming the model predicts a marginal Gaussian distribution $\mathcal{N}(\hat{\mu}_{i,j,k}, \hat{\sigma}_{i,j,k}^2)$, the joint PDF using a Gaussian Copula is defined as:
\begin{equation}
    f(\mathbf{Y}_{i,j}) = \left[ \prod_{k=1}^{n_{\text{out}}} f_k(Y_{i,j,k} \mid \hat{\mu}_{i,j,k}, \hat{\sigma}_{i,j,k}) \right] \cdot \frac{1}{\sqrt{|\mathbf{R}|}} \exp\left( -\frac{1}{2} \mathbf{w}_{i,j}^T (\mathbf{R}^{-1} - \mathbf{I}) \mathbf{w}_{i,j} \right) \label{eq:app_pdf}
\end{equation}
where $\mathbf{w}_{i,j}$ is the vector of standardized residuals (Z-scores) such that $w_{i,j,k} = (Y_{i,j,k} - \hat{\mu}_{i,j,k}) / \hat{\sigma}_{i,j,k}$, and $\mathbf{R}$ is the correlation matrix. The empirical loss function computed over the entire spatial grid ($N_{\text{grid}} = n_{\text{lat}} \times n_{\text{lon}}$) is:
\begin{multline}
    \mathcal{L}_{\text{joint}} = \frac{1}{N_{\text{grid}}} \sum_{i,j} \Bigg[ \sum_{k=1}^{n_{\text{out}}} \left( \log \hat{\sigma}_{i,j,k} + \frac{(Y_{i,j,k} - \hat{\mu}_{i,j,k})^2}{2\hat{\sigma}_{i,j,k}^2} \right) \\
    + \frac{1}{2}\log|\mathbf{R}| + \frac{1}{2}\mathbf{w}_{i,j}^T(\mathbf{R}^{-1} - \mathbf{I})\mathbf{w}_{i,j} \Bigg] \label{eq:app_emp}
\end{multline}
Conversely, the standard multivariate Gaussian NLL, often used in statistics, is compactly expressed using the covariance matrix $\Sigma_R$:
\begin{equation}
    \mathcal{L}_{\text{NLL}}(\theta) = \frac{1}{2} \log |\Sigma_R| + \frac{1}{2}(Y - \mu_\theta)^\top \Sigma_R^{-1} (Y - \mu_\theta) + \frac{K}{2} \log(2\pi) \label{eq:app_nll}
\end{equation}

Deep learning optimization converts Maximum Likelihood Estimation (MLE) into a minimization of the negative logarithm. Taking $-\log$ of the single-point PDF in Equation \eqref{eq:app_pdf} yields:
\begin{equation}
    -\log f(\mathbf{Y}_{i,j}) = -\sum_{k=1}^{n_{\text{out}}} \log f_k(Y_{i,j,k}) - \log \left( \text{Copula}(\mathbf{w}_{i,j}) \right)
\end{equation}
Substituting the Gaussian marginal density
\[ f_k(Y_{i,j,k} \mid \hat{\mu}_{i,j,k}, \hat{\sigma}_{i,j,k}) = \frac{1}{\sqrt{2\pi}\,\hat{\sigma}_{i,j,k}} \exp\!\left( -\frac{(Y_{i,j,k} - \hat{\mu}_{i,j,k})^2}{2\hat{\sigma}_{i,j,k}^2} \right), \]
the first term (marginal NLL) evaluates to:
\begin{equation}
    -\sum_{k=1}^{n_{\text{out}}} \log f_k(Y_{i,j,k}) = \sum_{k=1}^{n_{\text{out}}} \left( \log \hat{\sigma}_{i,j,k} + \frac{(Y_{i,j,k} - \hat{\mu}_{i,j,k})^2}{2\hat{\sigma}_{i,j,k}^2} \right) + \frac{n_{\text{out}}}{2}\log(2\pi) \label{eq:marginal_NLL_part}
\end{equation}
The constant term $\frac{n_{\text{out}}}{2}\log(2\pi)$ vanishes upon differentiation. Applying $-\log$ to the copula term gives:
\begin{equation}
    -\log \left( \frac{1}{\sqrt{|\mathbf{R}|}} \exp\left( -\frac{1}{2} \mathbf{w}_{i,j}^T (\mathbf{R}^{-1} - \mathbf{I}) \mathbf{w}_{i,j} \right) \right) = \frac{1}{2}\log|\mathbf{R}| + \frac{1}{2}\mathbf{w}_{i,j}^T(\mathbf{R}^{-1} - \mathbf{I})\mathbf{w}_{i,j} \label{eq:copula_part}
\end{equation}
Summing these terms from Equations \eqref{eq:marginal_NLL_part} and \eqref{eq:copula_part} while averaging over the spatial grid $N_{\text{grid}}$ yields the empirical loss function $\mathcal{L}_{\text{joint}}$ presented in Equation \eqref{eq:app_emp}.

To prove equivalence with the theoretical multivariate NLL (Equation \eqref{eq:app_nll}), we decompose the covariance matrix at a single grid point as $\Sigma_R = \mathbf{D} \mathbf{R} \mathbf{D}$, where $\mathbf{D} = \text{diag}(\sigma_1, \dots, \sigma_K)$. The determinant property yields:
\begin{equation}
    |\Sigma_R| = |\mathbf{D} \mathbf{R} \mathbf{D}| = |\mathbf{D}|^2 |\mathbf{R}| = \left( \prod_{k=1}^K \sigma_k^2 \right) |\mathbf{R}|
\end{equation}
Taking the logarithm and multiplying by $\frac{1}{2}$:
\begin{equation}
    \frac{1}{2} \log |\Sigma_R| = \sum_{k=1}^K \log \sigma_k + \frac{1}{2} \log |\mathbf{R}|
\end{equation}
This perfectly matches the first two corresponding terms in Equation \eqref{eq:app_emp}. Next, rewriting the quadratic form from Equation \eqref{eq:app_nll} using $\Sigma_R^{-1} = \mathbf{D}^{-1} \mathbf{R}^{-1} \mathbf{D}^{-1}$ and $\mathbf{w} = \mathbf{D}^{-1}(Y - \mu)$:
\begin{equation}
    \frac{1}{2}(Y - \mu)^\top \Sigma_R^{-1} (Y - \mu) = \frac{1}{2} \mathbf{w}^\top \mathbf{R}^{-1} \mathbf{w} \label{eq:app_quad1}
\end{equation}
By factoring the residual term from Equation \eqref{eq:app_emp} as $\frac{1}{2} \mathbf{w}^\top \mathbf{I} \mathbf{w}$ and summing it with the copula cross-term, we obtain perfect cancellation:
\begin{equation}
    \frac{1}{2} \mathbf{w}^\top \mathbf{I} \mathbf{w} + \frac{1}{2}\mathbf{w}^\top(\mathbf{R}^{-1} - \mathbf{I})\mathbf{w} = \frac{1}{2} \mathbf{w}^\top \mathbf{R}^{-1} \mathbf{w} \label{eq:app_quad2}
\end{equation}
Equations \eqref{eq:app_quad1} and \eqref{eq:app_quad2} confirm that, excluding constants, the empirical loss $\mathcal{L}_{\text{joint}}$ is algebraically identical to the theoretical multivariate NLL. This establishes that the decomposed empirical implementation strictly shares the same optimization landscape.

\section{Spectral Interpolation of Noise and Early Stopping: A Neural Tangent Kernel Analysis}

\subsection{Spectral Interpolation of Noise under Prolonged MSE Training}
\textbf{Problem (Spectral Interpolation).} Left to converge, MSE training interpolates not only the underlying signal but also the high-frequency noise, so that without early stopping the deterministic model memorizes noise. To trace the learning dynamics underlying this, we use Neural Tangent Kernel (NTK) theory \citep{jacot2018neural} for a network optimized under the MSE objective, which describes both $M_{\text{det}}$ and Stage~1 of $M_{\text{prop}}$. Let the kernel matrix be defined by the gradient features: $\Theta_t = (\nabla_W \mu_t)(\nabla_W \mu_t)^\top$. Under the infinite-width limit, weights barely move from initialization ($W_t \approx W_0$), and the kernel is constant ($\Theta_t \approx \Theta_0 \equiv \Theta$). 
We additionally assume zero initialization ($\mu_0 = 0$), which together with the constant-kernel approximation yields a tractable closed-form solution. The optimization dynamics of Gradient Descent under loss $L = \frac{1}{2}(\mu_t - Y)^2$ can be formulated using the chain rule:
\begin{equation}
\frac{d \mu_t}{dt} = (\nabla_W \mu_t)^\top \frac{dW}{dt} = (\nabla_W \mu_t)^\top \left[ -\alpha (\nabla_W \mu_t) (\mu_t - Y) \right] = -\alpha \Theta (\mu_t - Y)
\end{equation}
Solving this differential equation with initial condition $\mu_0 = 0$ yields the closed-form trajectory:
\begin{equation}
\mu_t(X_{\text{train}}) = Y_{\text{train}} - \exp(-\alpha \Theta t)Y_{\text{train}}
\end{equation}
As $t \to \infty$, $\mu_t \to Y_{\text{train}} = f^*(X_{\text{train}}) + \epsilon_{\text{train}}$. The model interpolates both the smooth signal $f^*$ and the unpredictable high-frequency noise $\epsilon$, suffering from the ``spectral interpolation of noise'' \citep{belkin2019reconciling}. This limit describes training run to convergence without early stopping; in practice every MSE-trained model in this work---including $M_{\text{det}}$---is halted well before this regime by validation-MSE early stopping, as detailed next.

\subsection{Early Stopping as Tikhonov Regularization}
\textbf{Mechanism (Early Stopping).} Halting Stage~1 at an optimal time $\tau$ acts as an implicit low-pass filter that preserves the signal while attenuating the noise. Interrupting MSE training in this way yields the prediction $\mu_\tau(X) = (\mathbf{I} - \exp(-\alpha \Theta \tau))Y \approx f^*(X)$, which follows the same spectral-shrinkage path as Tikhonov (L2) regularization under the calibration $\lambda \propto 1/\tau$, attaining comparable estimation risk \citep{ali2019continuous}. Severely attenuating the stochastic noise $\epsilon$ in this manner is what saves the model from memorization. We emphasize that this early-stopping mechanism is not specific to $M_{\text{prop}}$: it applies to any MSE-trained model, including the deterministic baseline $M_{\text{det}}$, and Stage~1 of $M_{\text{prop}}$ is identical in this respect. The distinguishing component of $M_{\text{prop}}$ is Stage~2 (analyzed in the main text), where precision-weighted gradients further refine the shared backbone.

\section{Additional Simulation Figures}
These figures complement the IRF(2) representatives shown in the main text.

\begin{figure}[H]
\centering
\includegraphics[width=0.8\linewidth]{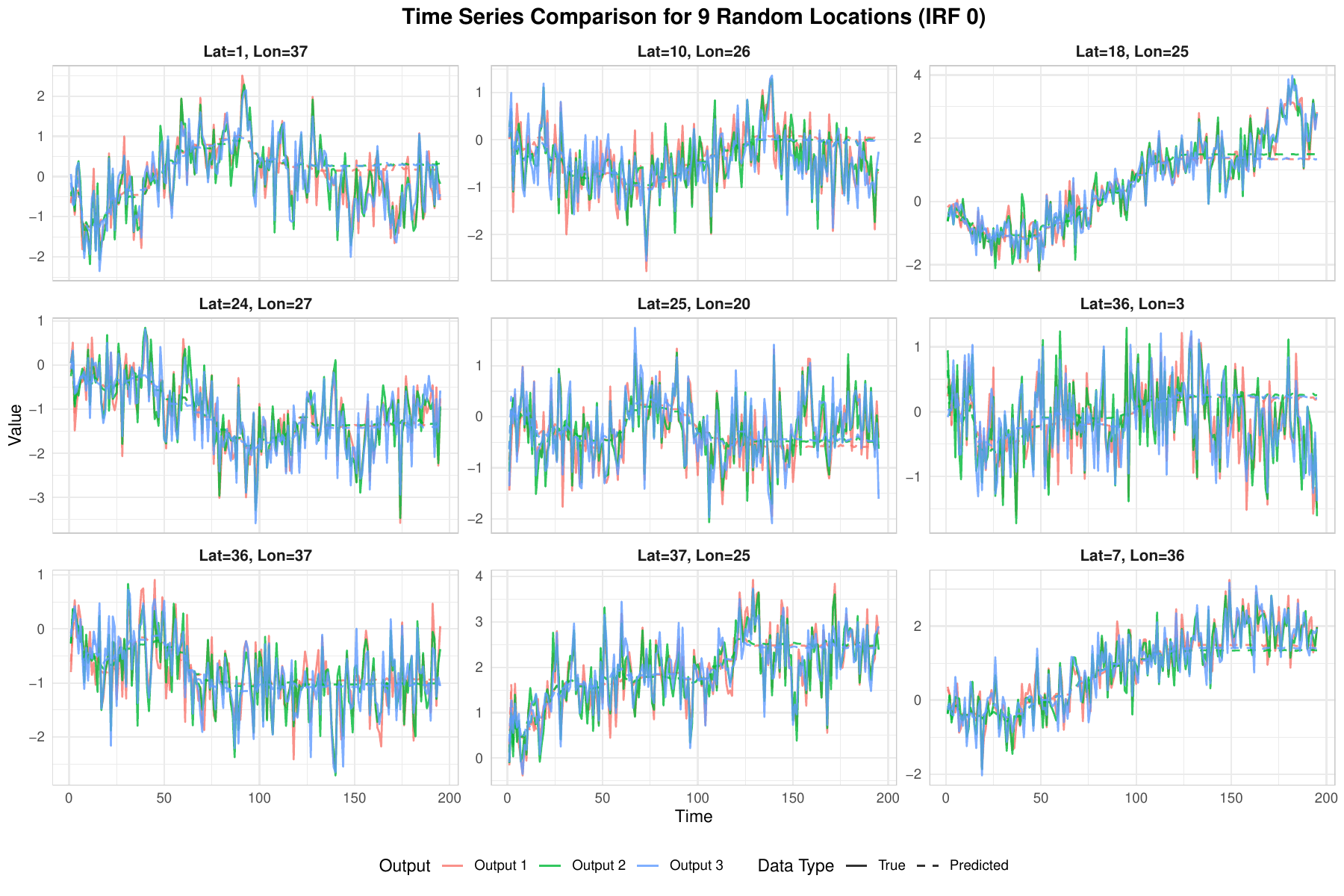}
\caption{Time series of true (solid line) and predicted (dashed line) values at nine randomly chosen grid points for IRF(0).}
\label{fig:timeseries_kappa0}
\end{figure}

\begin{figure}[H]
\centering
\includegraphics[width=0.8\linewidth]{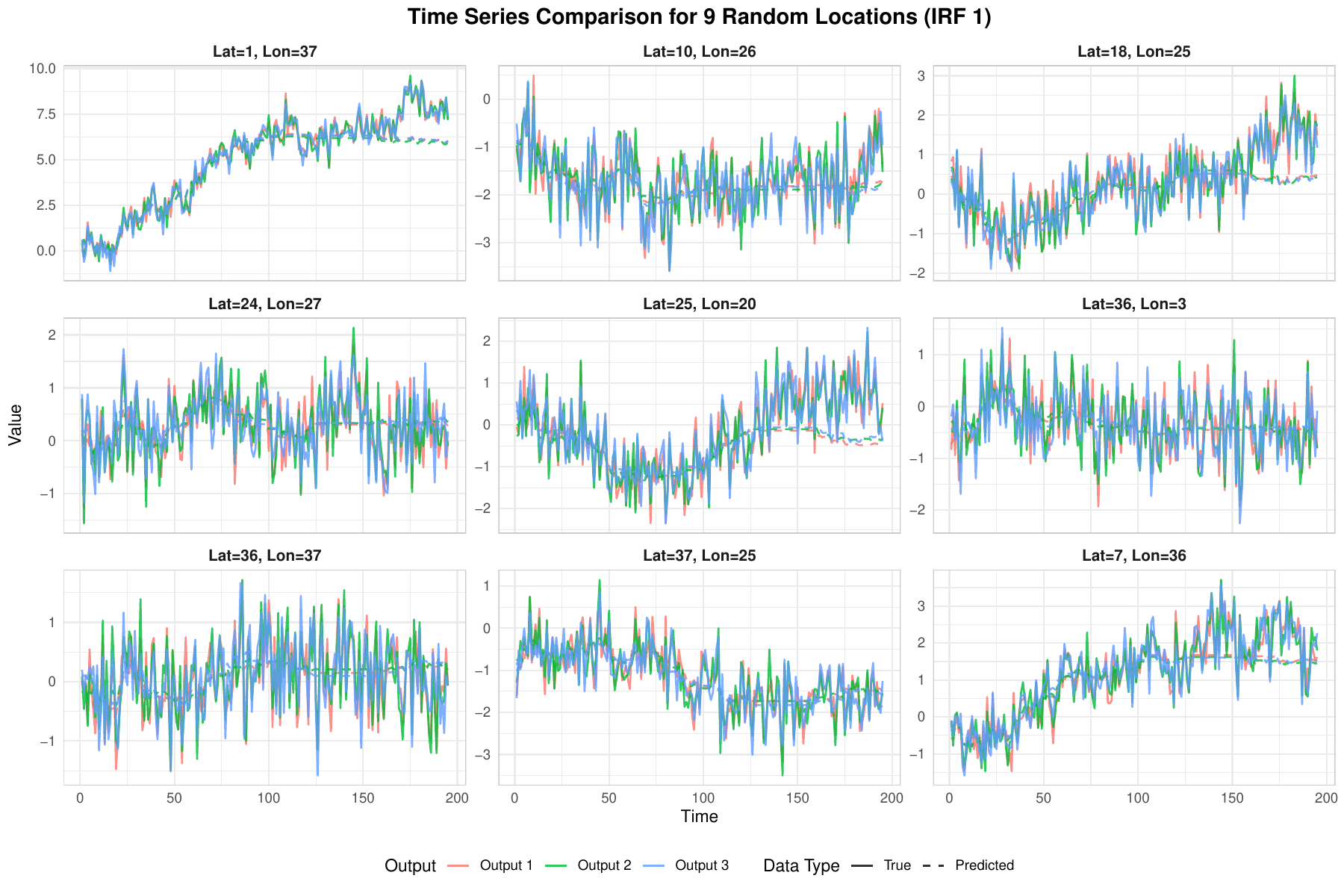}
\caption{Time series of true (solid line) and predicted (dashed line) values at nine randomly chosen grid points for IRF(1).}
\label{fig:timeseries_kappa1}
\end{figure}

\begin{figure}[H]
\centering
\includegraphics[width=0.8\linewidth]{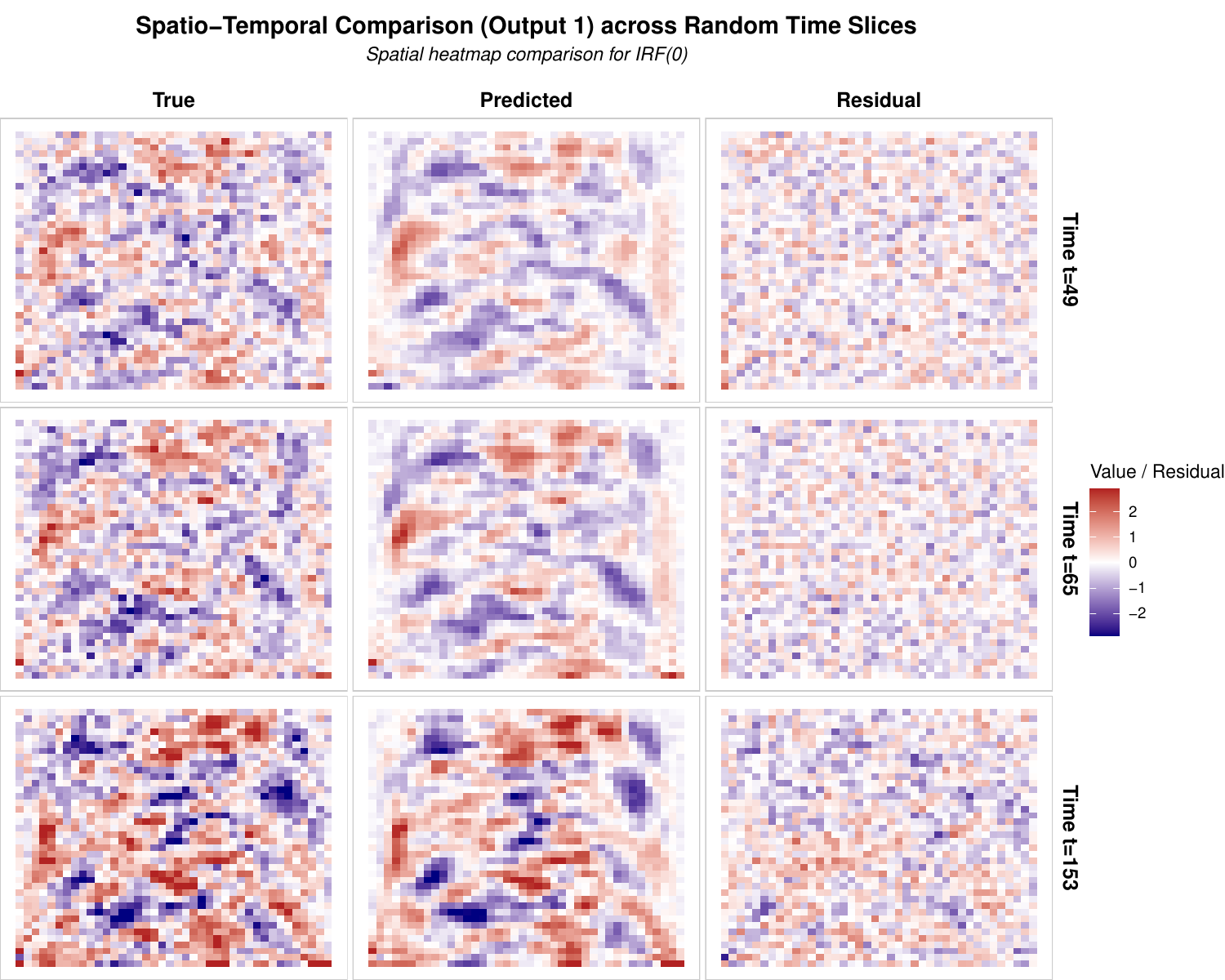}
\caption{Spatial heatmaps at randomly selected three temporal points with $Y_1$ (output 1). Left: True values; Center: Predicted Values; Right: Residuals for IRF(0)}
\label{fig:heatmap_kappa0_y1}
\end{figure}

\begin{figure}[H]
\centering
\includegraphics[width=0.8\linewidth]{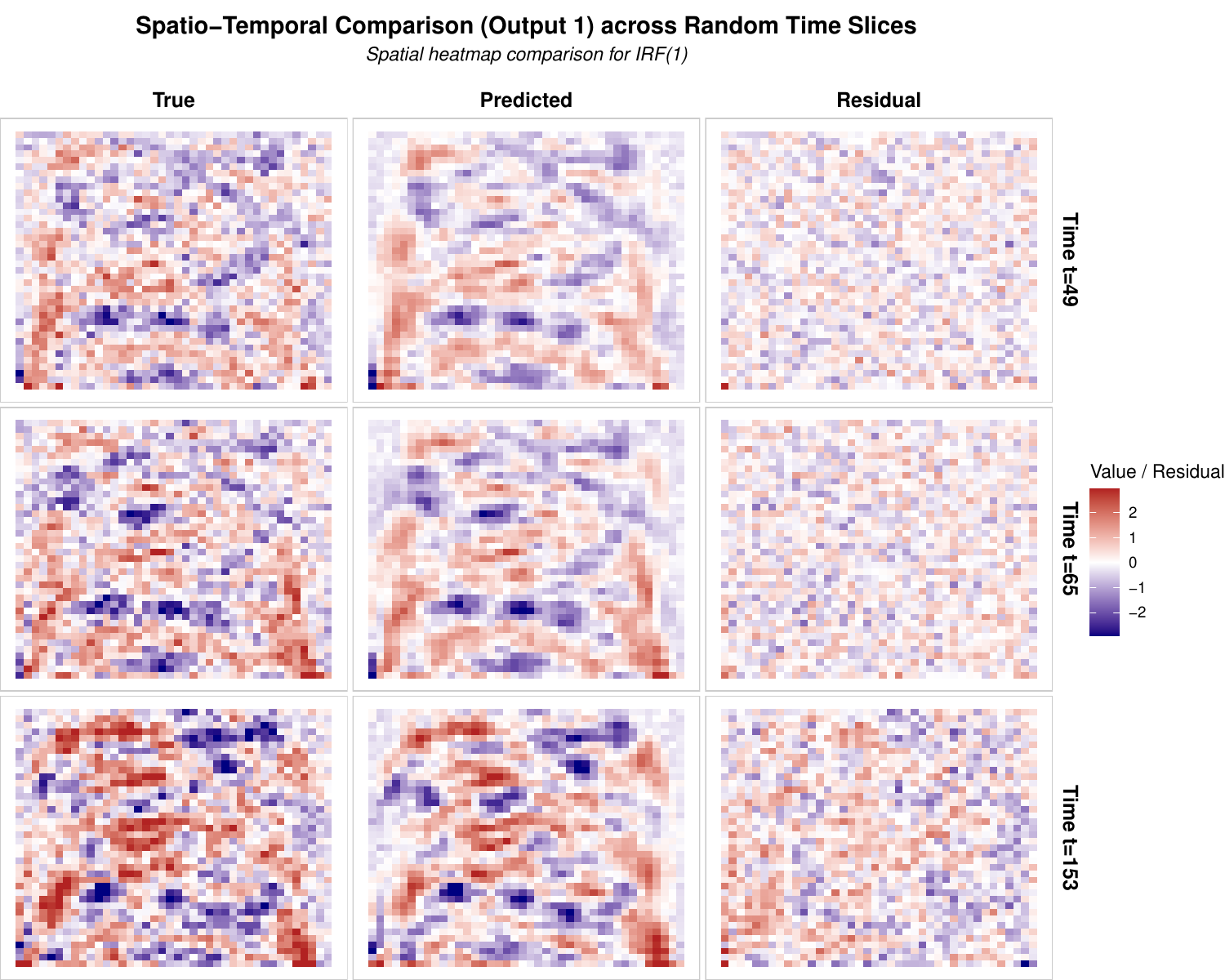}
\caption{Spatial heatmaps at randomly selected three temporal points with $Y_1$ (output 1). Left: True values; Center: Predicted Values; Right: Residuals for IRF(1)}
\label{fig:heatmap_kappa1_y1}
\end{figure}

\begin{figure}[H]
\centering
\includegraphics[width=0.8\linewidth]{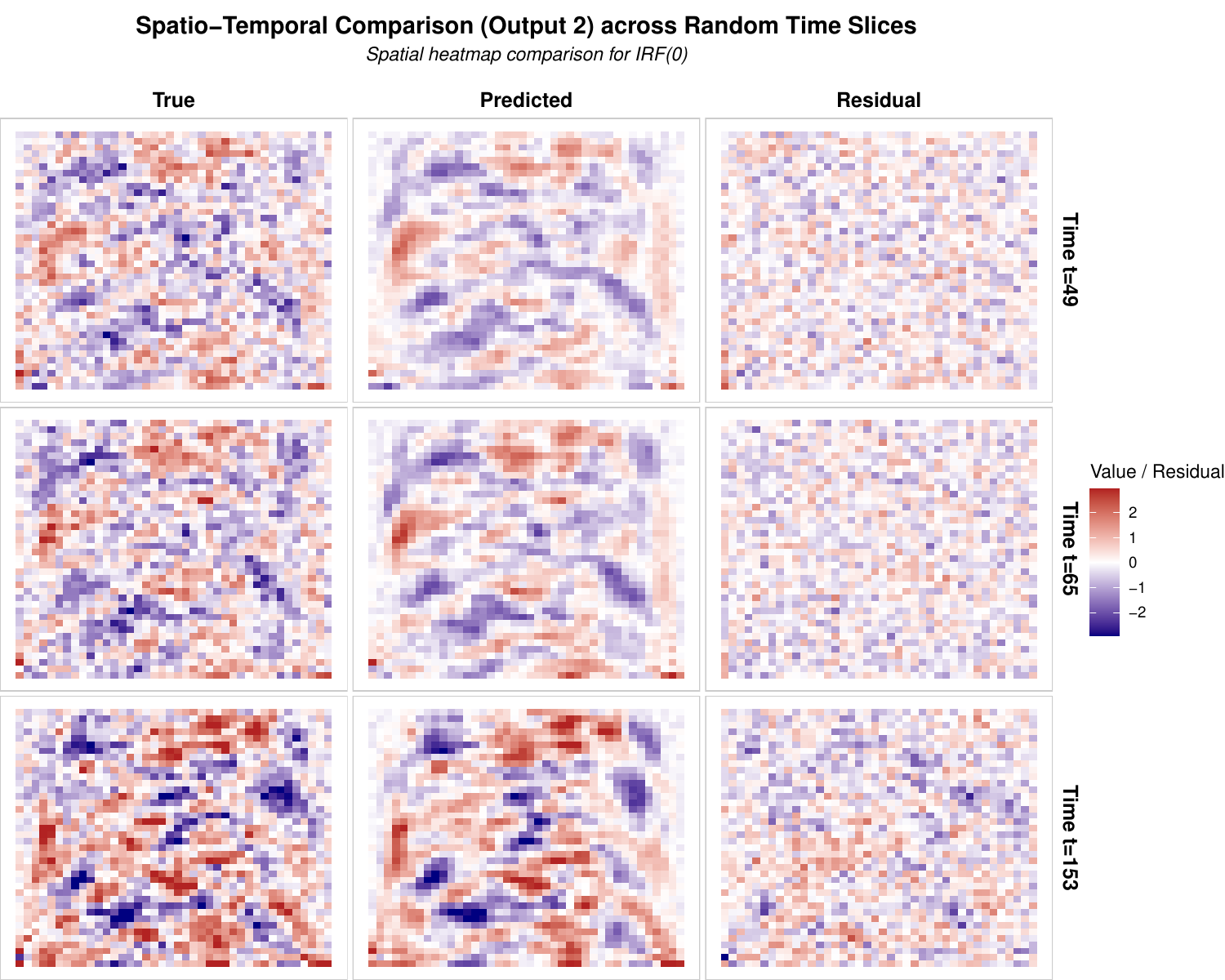}
\caption{Spatial heatmaps at randomly selected three temporal points with $Y_2$ (output 2). Left: True values; Center: Predicted Values; Right: Residuals for IRF(0)}
\label{fig:heatmap_kappa0_y2}
\end{figure}

\begin{figure}[H]
\centering
\includegraphics[width=0.8\linewidth]{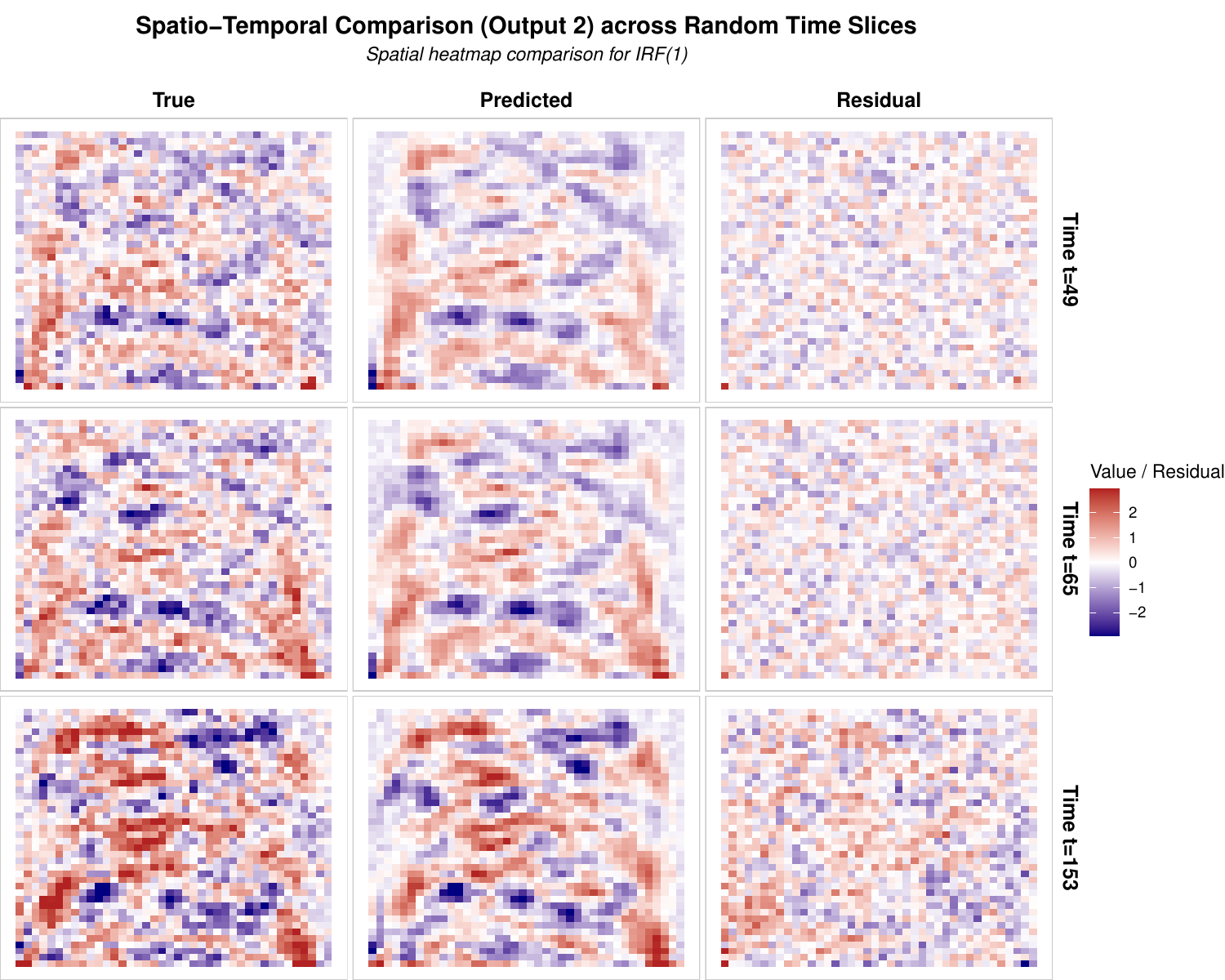}
\caption{Spatial heatmaps at randomly selected three temporal points with $Y_2$ (output 2). Left: True values; Center: Predicted Values; Right: Residuals for IRF(1)}
\label{fig:heatmap_kappa1_y2}
\end{figure}

\begin{figure}[H]
\centering
\includegraphics[width=0.8\linewidth]{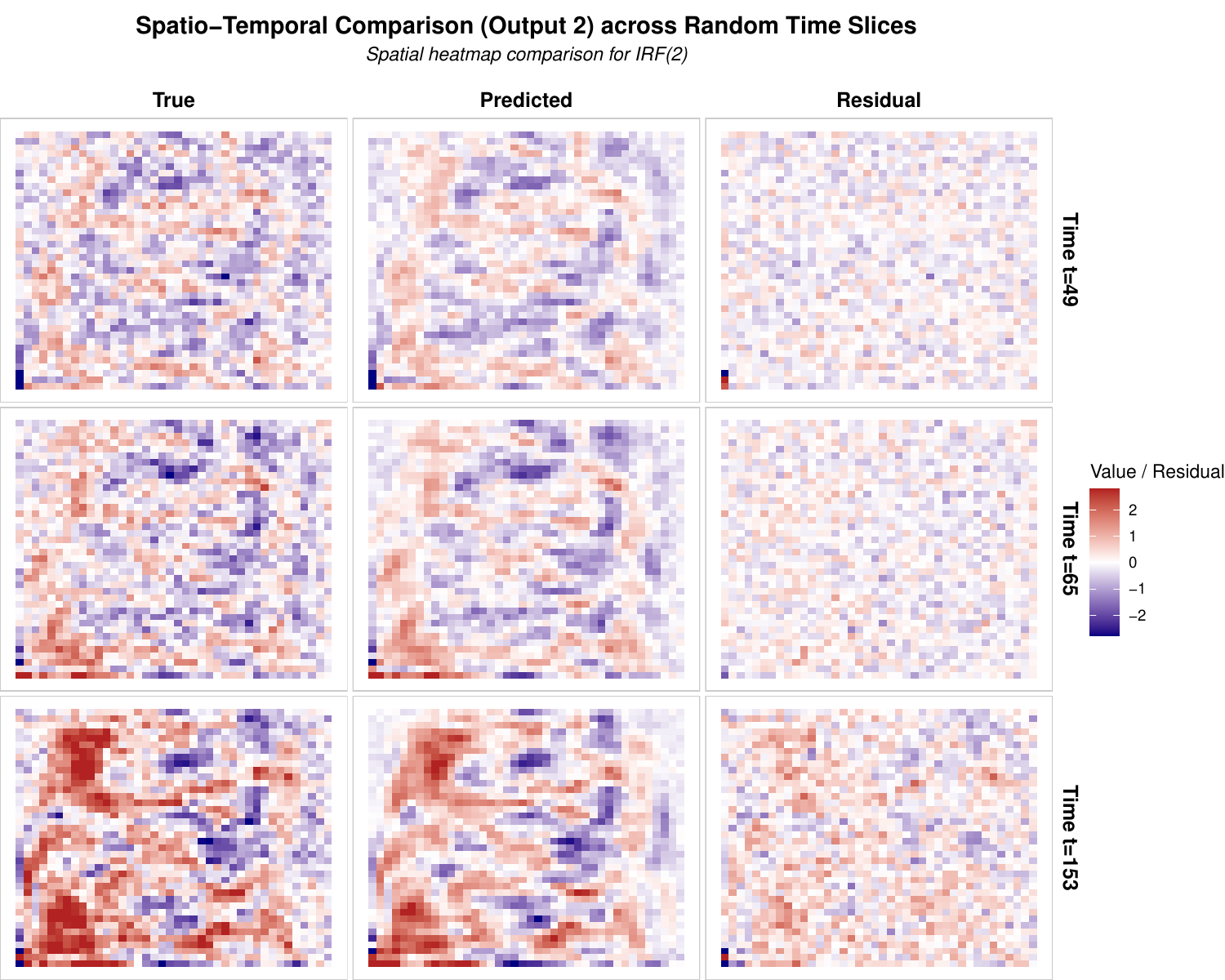}
\caption{Spatial heatmaps at randomly selected three temporal points with $Y_2$ (output 2). Left: True values; Center: Predicted Values; Right: Residuals for IRF(2)}
\label{fig:heatmap_kappa2_y2}
\end{figure}

\begin{figure}[H]
\centering
\includegraphics[width=0.8\linewidth]{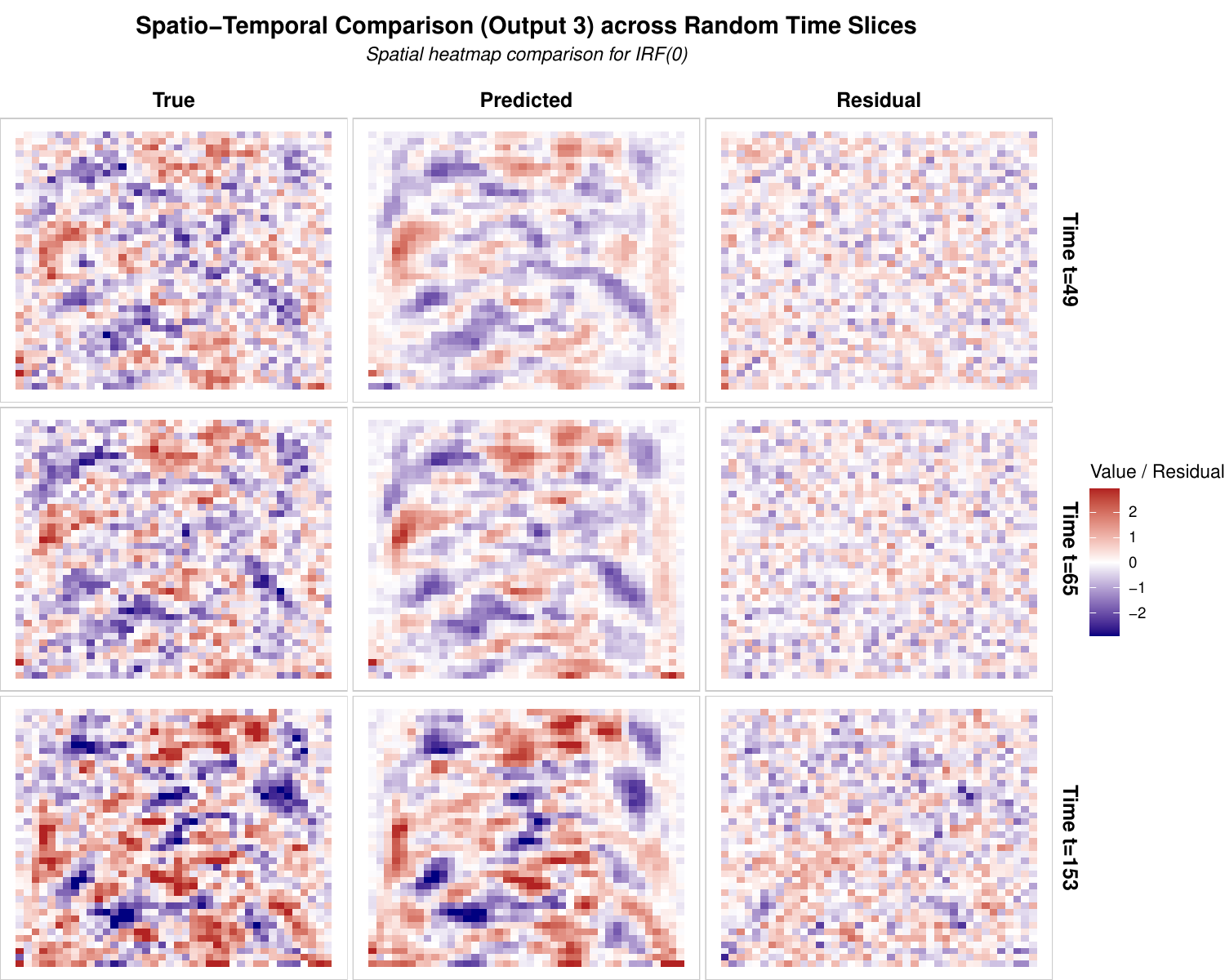}
\caption{Spatial heatmaps at randomly selected three temporal points with $Y_3$ (output 3). Left: True values; Center: Predicted Values; Right: Residuals for IRF(0)}
\label{fig:heatmap_kappa0_y3}
\end{figure}

\begin{figure}[H]
\centering
\includegraphics[width=0.8\linewidth]{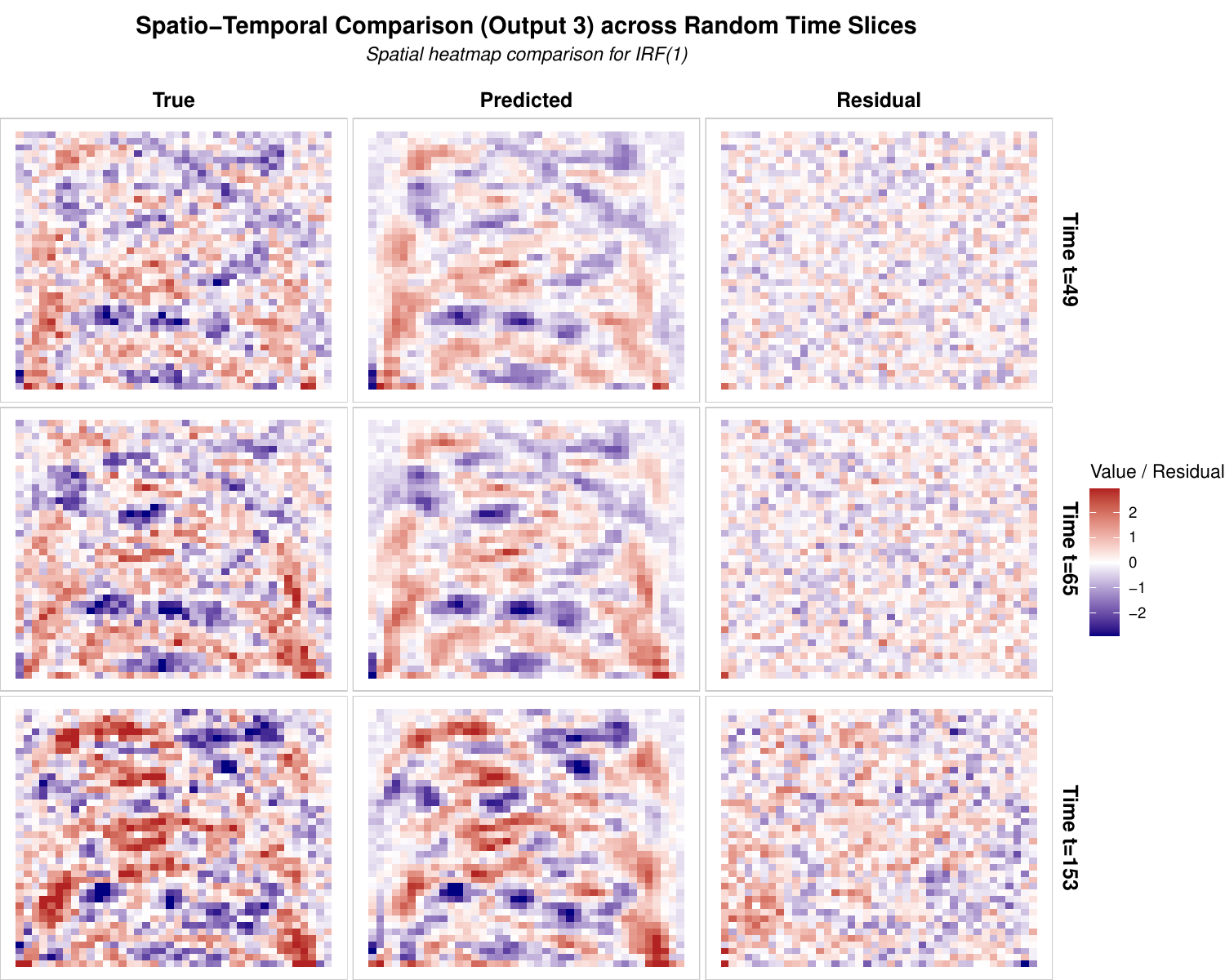}
\caption{Spatial heatmaps at randomly selected three temporal points with $Y_3$ (output 3). Left: True values; Center: Predicted Values; Right: Residuals for IRF(1)}
\label{fig:heatmap_kappa1_y3}
\end{figure}

\begin{figure}[H]
\centering
\includegraphics[width=0.8\linewidth]{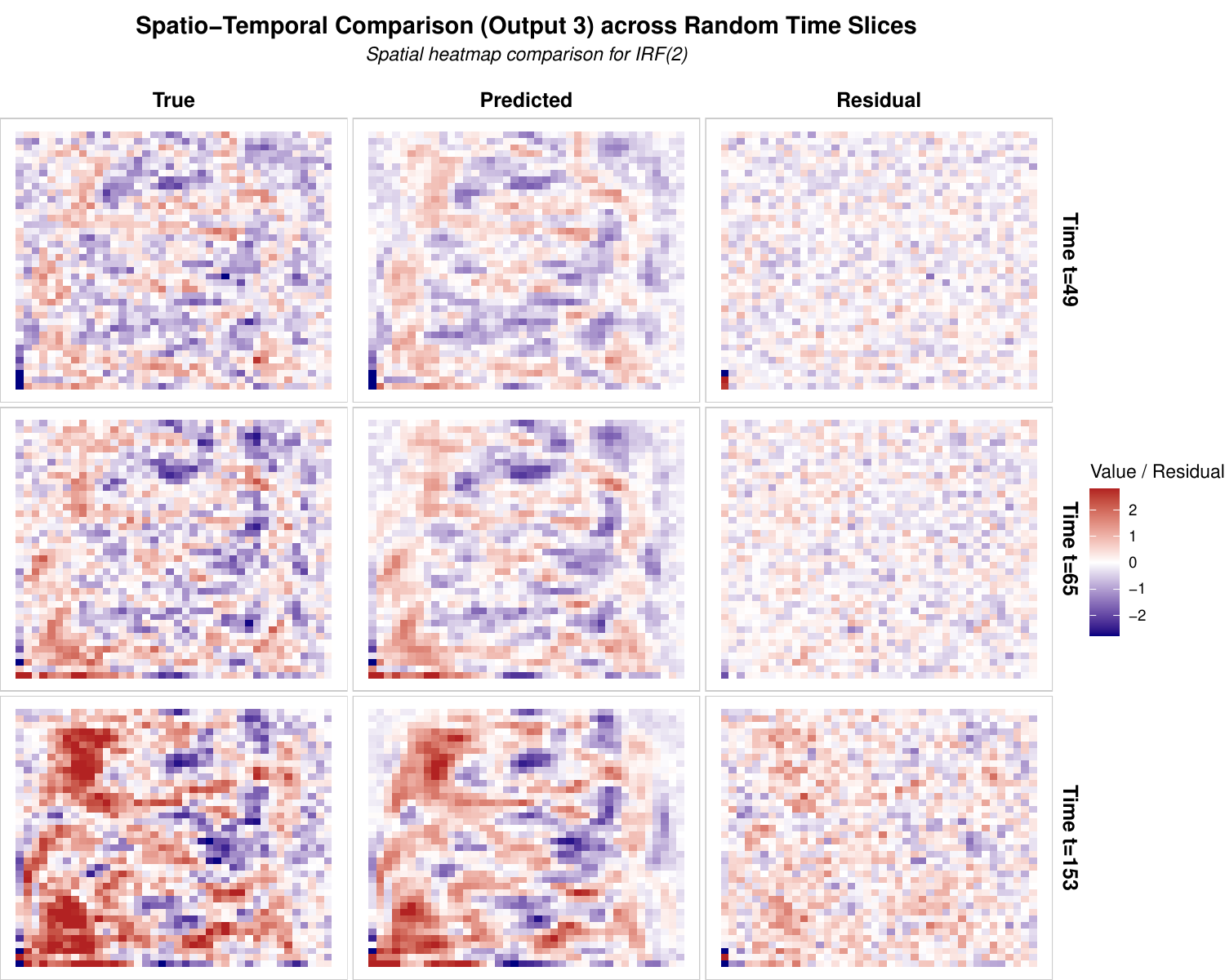}
\caption{Spatial heatmaps at randomly selected three temporal points with $Y_3$ (output 3). Left: True values; Center: Predicted Values; Right: Residuals for IRF(2)}
\label{fig:heatmap_kappa2_y3}
\end{figure}

\section{Additional Real-Data Prediction Figures}
Per-city time series, city-wise residual means, and per-variable
true-versus-predicted means for minimum and maximum temperature, complementing
the precipitation figure in the main text.

\begin{figure}[H]
\centering
\includegraphics[width=0.8\linewidth]{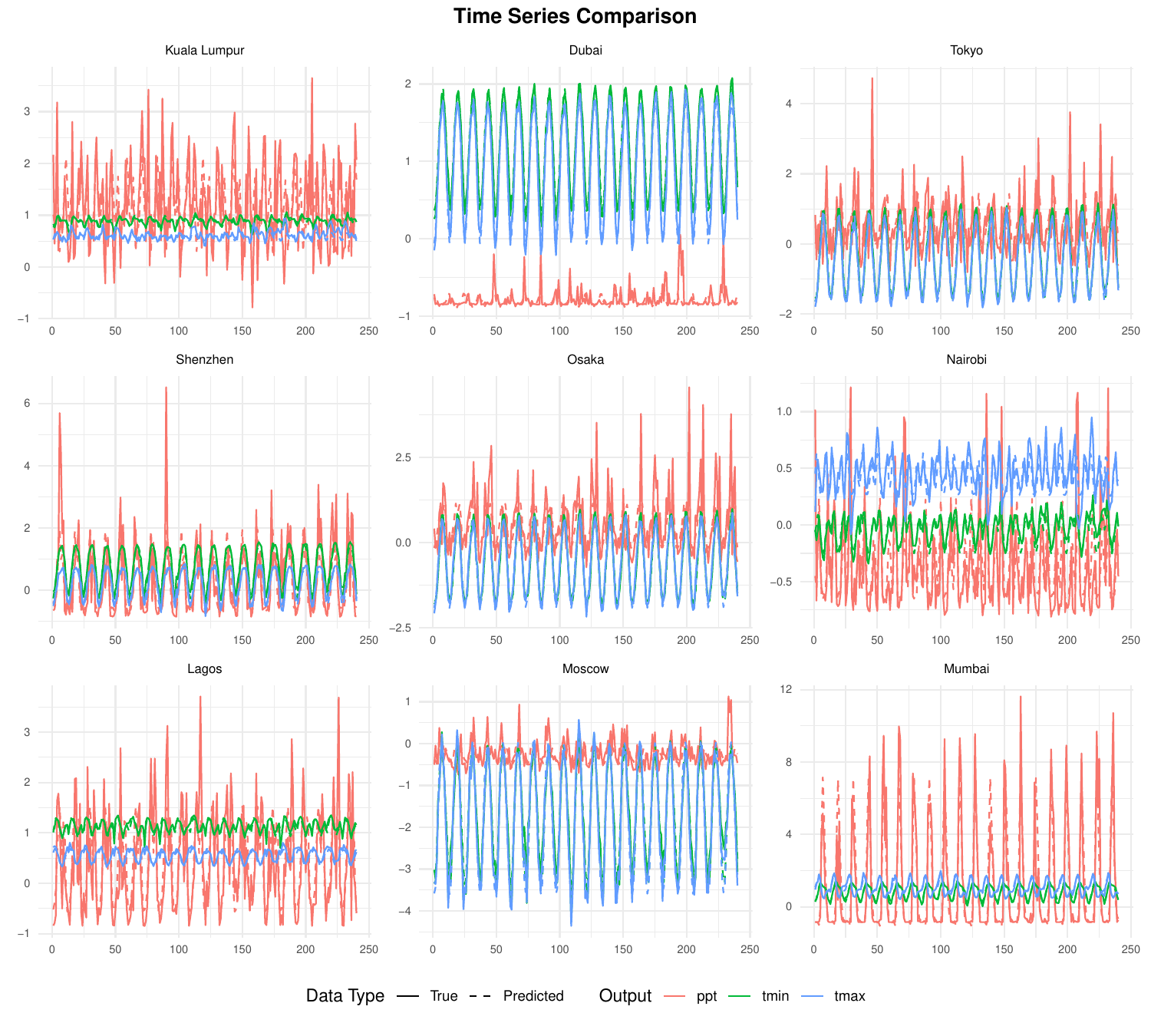}
\caption{Time series comparison for 9 randomly selected cities. Output variables are differentiated by color ($\mathtt{ppt}, \mathtt{tmin}, \mathtt{tmax}$), and data types are differentiated by linetypes (True vs. Predicted).}
\label{fig:time_series_9cities}
\end{figure}

\begin{figure}[H]
\centering
\includegraphics[width=0.8\linewidth]{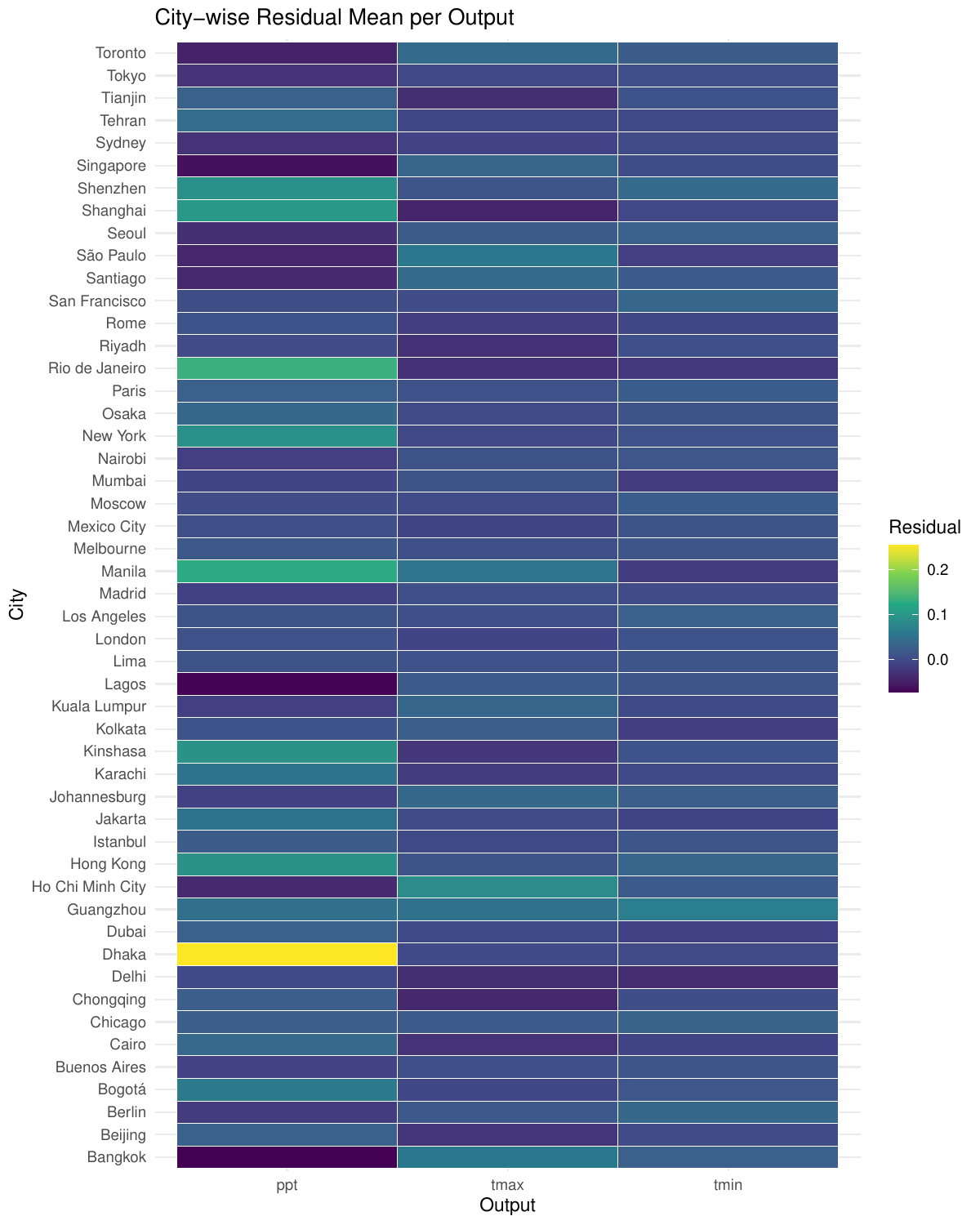}
\caption{City-wise residual mean per output (True $-$ Predicted).}
\label{fig:city_wise_residuals}
\end{figure}

\begin{figure}[H]
\centering
\includegraphics[width=0.8\linewidth]{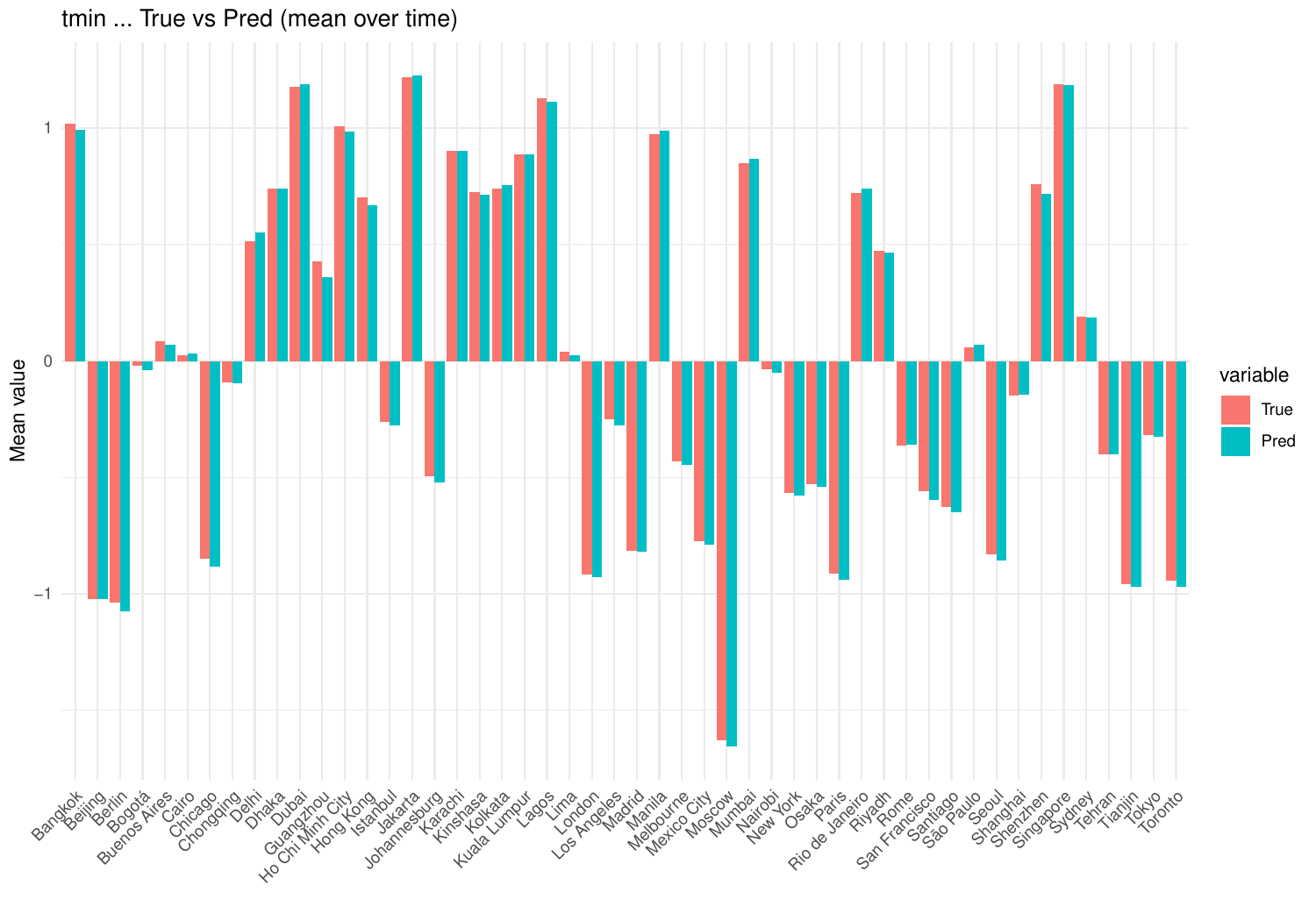}
\caption{City-wise comparison of true and predicted means (over time) for Minimum Temperature ($\mathtt{tmin}$).}
\label{fig:tmin_true_pred}
\end{figure}

\begin{figure}[H]
\centering
\includegraphics[width=0.8\linewidth]{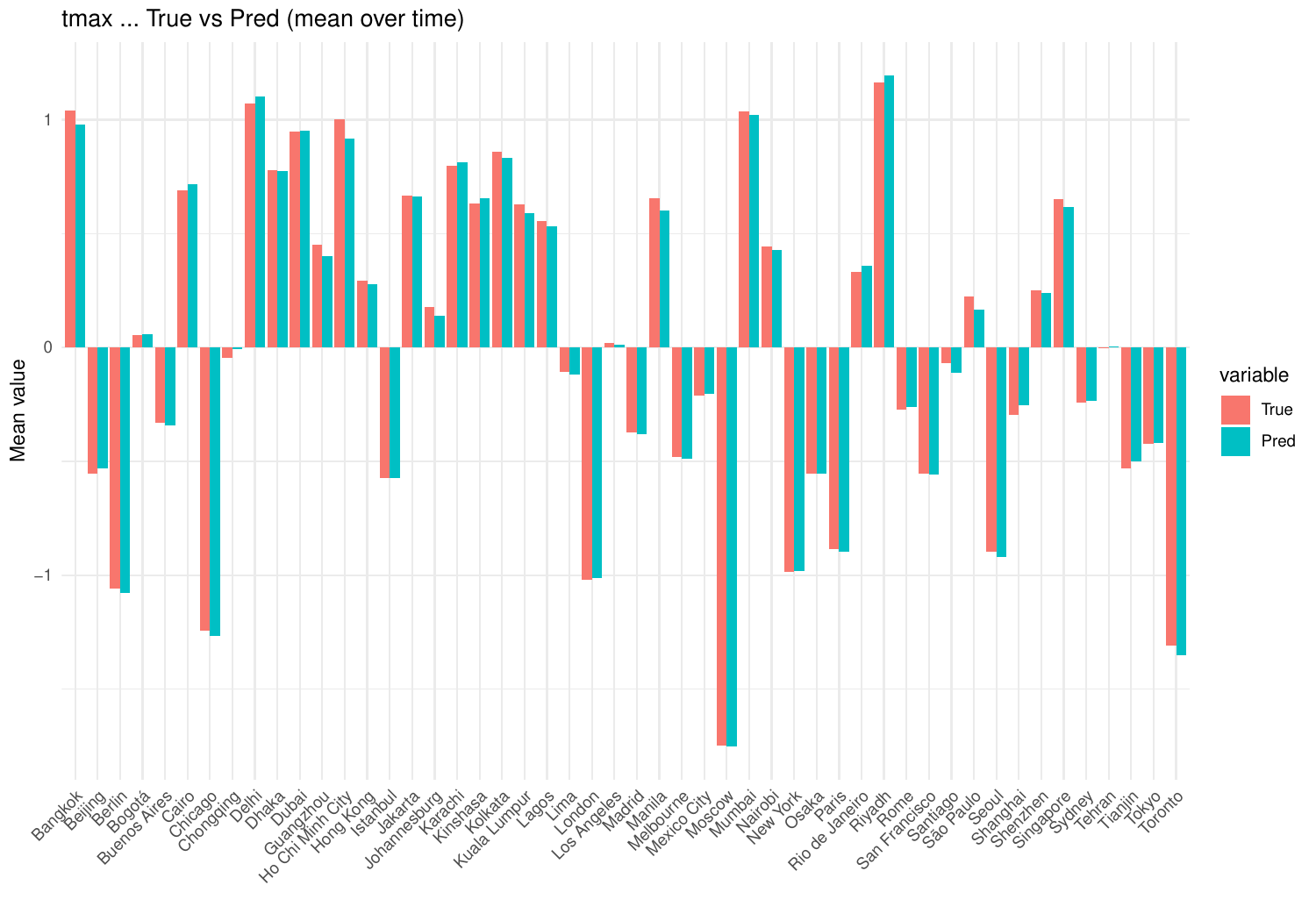}
\caption{City-wise comparison of true and predicted means (over time) for Maximum Temperature ($\mathtt{tmax}$).}
\label{fig:tmax_true_pred}
\end{figure}

\section{Derivation of the Stage-2 Precision-Weighted Backbone Gradient}
\label{app:stage2_gradient}

This appendix derives the precision-weighted backbone gradient stated in the
main text, which drives the Uncertainty-Aware Feature Refinement of Stage~2.

In Stage~2 the prediction head $W_\mu$ is strictly frozen ($\nabla_{W_\mu} = 0$),
while the shared backbone $\Phi$ continues to train under the NLL loss alongside
the variance head $W_\sigma$. Since the conditioning analysis depends only on the
joint law of the predictor--response pair and not on the temporal dependence
structure, we suppress the temporal index and write $X$ for the predictor window,
so that the two heads give $\mu(X) = W_\mu \Phi(X)$ and
$\log\sigma^2(X) = W_\sigma \Phi(X)$ (biases absorbed into the heads). The
gradient of the NLL loss with respect to the unfrozen backbone $\Phi$ is
evaluated via the multivariable chain rule:
\begin{equation}
\nabla_\Phi \mathcal{L}_{\text{NLL}} = \left(\nabla_\Phi \mu \right) \frac{\partial \mathcal{L}_{\text{NLL}}}{\partial \mu} + \left(\nabla_\Phi \log\sigma^2 \right) \frac{\partial \mathcal{L}_{\text{NLL}}}{\partial \log\sigma^2}
\end{equation}
To expose the precision-weighting mechanism, we evaluate the partial derivatives
of the marginal Gaussian component $\frac{1}{2}\log\sigma^2 + \frac{(Y-\mu)^2}{2\sigma^2}$
(the copula penalty also contributes to the backbone gradient through
$\mathbf{w}=(Y-\hat\mu)/\hat\sigma$, but does not alter the precision-weighting
structure that is our focus here):
\begin{equation}
\frac{\partial \mathcal{L}_{\text{NLL}}}{\partial \mu} = -\frac{Y-\mu}{\sigma^2}, \quad \frac{\partial \mathcal{L}_{\text{NLL}}}{\partial \log\sigma^2} = \frac{1}{2} - \frac{(Y-\mu)^2}{2\sigma^2}
\end{equation}
Since $\mu$ is a linear combination of features $\Phi$, the Jacobian is simply
the transpose of the weight matrix ($\nabla_\Phi \mu = W_\mu^\top$ and
$\nabla_\Phi \log\sigma^2 = W_\sigma^\top$). Substituting these yields the
precision-weighted gradient component applied to the backbone:
\begin{equation} \label{eq:app_precision_weight}
\nabla_\Phi \mathcal{L}_{\text{NLL}} = \underbrace{-W_\mu^\top \left( \frac{Y - \mu(X)}{\sigma^2(X)} \right)}_{\text{Precision-Weighted Signal Learning}} + \underbrace{W_\sigma^\top \left( \frac{1}{2} - \frac{(Y - \mu(X))^2}{2\sigma^2(X)} \right)}_{\text{Uncertainty Calibration}}
\end{equation}
which is the expression interpreted in the main text.

\section{Definitions of the Per-Variable RMSE and the Coverage Metric}
\label{app:metric_defs}

This appendix states in full the two evaluation metrics whose definitions are
summarized in Section~4.2 of the main text. Throughout, $i$ indexes the $N$
spatial locations, $t$ ranges over the held-out test snapshots
$\mathcal{T}_{\text{test}}$, and $k$ indexes the output variable; $Y_{i,t,k}$ is
the observed value and $(\hat{\mu}_{i,t,k}, \hat{\sigma}_{i,t,k})$ the predicted
mean and standard deviation.

The per-output-variable root-mean-square error is
\begin{equation}
\text{RMSE}_k = \sqrt{\frac{1}{N\,|\mathcal{T}_{\text{test}}|} \sum_{i=1}^{N} \sum_{t \in \mathcal{T}_{\text{test}}} (Y_{i,t,k} - \hat{\mu}_{i,t,k})^2}, \quad k=1,\dots,n_\text{out},
\end{equation}
which aggregates to the overall RMSE reported in the main text by additionally
averaging the squared errors over the $n_{\text{out}}$ outputs before taking the
square root.

The 95\% Prediction Interval Coverage Probability is
\begin{equation}
\text{PICP}_k = \frac{1}{N\,|\mathcal{T}_{\text{test}}|} \sum_{i=1}^{N} \sum_{t \in \mathcal{T}_{\text{test}}} \mathbf{1} \left( Y_{i,t,k} \in \left[ \hat{\mu}_{i,t,k} - 1.96 \hat{\sigma}_{i,t,k}, \, \hat{\mu}_{i,t,k} + 1.96 \hat{\sigma}_{i,t,k} \right] \right),
\end{equation}
the empirical probability that the observed value falls within the nominal
$95\%$ predictive interval. Both metrics are computed on the held-out test
snapshots only.

\section{Predictive Intervals of the Kriging Baseline}\label{app:kriging_pi}

Section~4.3 of the main text compares the proposed model's predictive intervals with the
kriging baseline's on two axes that its marginal coverage does not capture. This appendix
gives both comparisons in full. Kriging's own PICP and NLL are in Table~1 of the main text.

\subsection{Adaptivity}

The kriging variance is a function of the design geometry and the fitted variogram and not
of the observed values, so it is constant in time by construction; the residual spatial
variation in Table~\ref{tab:pi_adaptivity} below reflects only the irregularity of the
sampled prediction locations.

\begin{table}[!ht]
\centering
\caption{Adaptivity of the predictive standard deviation, measured as its coefficient of variation across locations ($\mathrm{cv}_{\text{sp}}$) and across time ($\mathrm{cv}_{\text{tm}}$). Simulation entries are means over 50 Monte Carlo runs at $\eta=10$; the real-data entries are means over the 50 model-initialization seeds.}
\label{tab:pi_adaptivity}
\begin{tabular}{l c c c c}
\toprule
& \multicolumn{2}{c}{\textbf{Proposed (Two-stage)}} & \multicolumn{2}{c}{\textbf{Kriging Baseline}} \\
\cmidrule(lr){2-3}\cmidrule(lr){4-5}
\textbf{Setting} & $\mathrm{cv}_{\text{sp}}$ & $\mathrm{cv}_{\text{tm}}$ & $\mathrm{cv}_{\text{sp}}$ & $\mathrm{cv}_{\text{tm}}$ \\
\midrule
IRF(0), $\eta=10$ & $0.816$ & $0.025$ & $0.020$ & $0.0000$ \\
IRF(1), $\eta=10$ & $0.795$ & $0.036$ & $0.020$ & $0.0000$ \\
IRF(2), $\eta=10$ & $0.212$ & $0.016$ & $0.020$ & $0.0000$ \\
\addlinespace[3pt]
Real climate data & $1.321$ & $0.469$ & $0.030$ & $0.0000$ \\
\bottomrule
\end{tabular}
\end{table}

\subsection{The joint predictive law}

The baseline is three independent per-variable krigings, so it supplies marginal intervals
only. For a Gaussian predictive law the cost of ignoring a correlation matrix is
$-\tfrac{1}{2}\log|\mathbf{R}|$ nats per space--time point. We evaluate this on the shrunk
correlation $\mathbf{R}_s = (1-\alpha)\hat{\mathbf{R}} + \alpha\mathbf{I}$ that the
objective of Section~3.3.1 of the main text actually uses, rather than on the raw
$\hat{\mathbf{R}}$, and charge the model for any mismatch between $\mathbf{R}_s$ and the
residual correlation $\mathbf{W}$ it attains, so that the quantity reported is the realized
gain
\[
  \tfrac{1}{2}\big[\operatorname{tr}(\mathbf{W}) - \log|\mathbf{R}_s| - \operatorname{tr}(\mathbf{R}_s^{-1}\mathbf{W})\big]
\]
rather than an oracle bound. The gain is computed per run; substituting the mean
$\hat{\mathbf{R}}$ across runs would conceal the boundary saturation discussed in
Section~6 of the main text.

\begin{table}[!ht]
\centering
\caption{Joint-likelihood gain from modelling cross-variable dependence, against the marginal-NLL deficit to the kriging baseline under the MLE-temperature scale (mean $\pm$ standard deviation over 50 Monte Carlo runs). A negative net figure means the proposed model is ahead on the joint predictive law even where it trails kriging on the marginal one.}
\label{tab:joint_nll}
\begin{tabular}{l c c c c}
\toprule
\textbf{Setting} & \textbf{Realized gain} & \textbf{Median} & \textbf{Marginal deficit} & \textbf{Net} \\
\midrule
IRF(0), $\eta=10$ & $1.055 \pm 0.062$ & $1.041$ & $0.235$ & $-0.820$ \\
IRF(1), $\eta=10$ & $1.047 \pm 0.050$ & $1.035$ & $0.254$ & $-0.793$ \\
IRF(2), $\eta=10$ & $2.679 \pm 2.126$ & $3.194$ & $2.172$ & $-0.507$ \\
\bottomrule
\end{tabular}
\end{table}

The gain is positive in $50$, $50$ and $49$ of the $50$ runs respectively. At IRF(2) it is
both larger and far more variable, which is where the fixed shrinkage $\alpha$ is doing the
most work; Section~6 of the main text takes this up.

\end{document}